\documentclass{article}

\usepackage{arxiv}
\usepackage{amsthm}
\usepackage[utf8]{inputenc} % allow utf-8 input
\usepackage[T1]{fontenc}    % use 8-bit T1 fonts
\usepackage{hyperref}       % hyperlinks
\usepackage{url}            % simple URL typesetting
\usepackage{booktabs}       % professional-quality tables
\usepackage{amsfonts}       % blackboard math symbols
\usepackage{nicefrac}       % compact symbols for 1/2, etc.
\usepackage{microtype}      % microtypography
\usepackage{lipsum}		% Can be removed after putting your text content
\usepackage{graphicx}
\usepackage{natbib}
\usepackage{doi}

\usepackage{indentfirst,csquotes}

\usepackage{amssymb,amsthm,amsmath}
\usepackage{xcolor,paralist,hyperref,titlesec,fancyhdr,etoolbox}
\newtheorem{theorem}{Theorem}[]
\newtheorem{definition}{Definition}

\newtheorem{lemma}{Lemma}
\newtheorem{proposition}{Proposition}

\titlespacing*{\section}{0pt}{0ex}{0ex}

\hypersetup{ colorlinks=true, linkcolor=black, filecolor=black, urlcolor=black }
\def\proof{\noindent {\it Proof. $\, $}}
\def\endproof{\hfill $\Box$ \vskip 5 pt }

\usepackage{lipsum}
\RequirePackage{tgtermes}
\RequirePackage{newtxtext}
\RequirePackage{newtxmath}
\RequirePackage{bm}
\RequirePackage{endnotes}

\definecolor{clemson-orange}{RGB}{234,106,32}
\definecolor{chicago-maroon}{RGB}{128,0,0}
\definecolor{northwestern-purple}{RGB}{82,0,99}
\definecolor{cornell-red}{RGB}{179,27,27}
\definecolor{sauder-green}{RGB}{171,180,0}
\definecolor{harveymudd-gold}{RGB}{178,139,51}
\usepackage{color}
\usepackage[dvipsnames]{xcolor} 
\usepackage{hyperref}
\usepackage{algorithm}
\usepackage{algpseudocode}
\usepackage{tikz}
\usepackage{mathtools}
\usepackage{natbib}
 \bibpunct[, ]{[}{]}{,}{n}{}{,}%
\allowdisplaybreaks[4]

\usepackage{tabularx}
\usepackage{booktabs}
\usepackage{multirow}

\newtheorem{fact}{Fact}
\usepackage[most]{tcolorbox}
\newtcolorbox{highlightbox}{
  colback={gray!10!white}, % 背景色 (浅橙色)
  colframe=black,      % 边框色 (黑色)
  boxrule=1pt,
  arc=2mm,
  left=4pt, right=4pt, top=4pt, bottom=4pt,
  breakable,
  enhanced jigsaw,      % 改善跨页效果
  break at=-\baselineskip  % 控制分页位置
}
\newtcolorbox{goalbox}[1][]{%
  colback=white,
  colframe=black,
  fonttitle=\bfseries\color{black},
  title={Goal:{\color{gray!40!orange}\, Scale Independent Optimizer}},
  enhanced,
  attach boxed title to top left={yshift=-2mm, xshift=2mm},
  boxed title style={colframe=black, colback=gray!20!white},
  coltext=gray!40!black,
  left=1mm, right=1mm,     % 内边距
  width=\linewidth,    % 横向更长（默认比正文宽 1cm）
  enlarge left by=-5mm,            % 向左移动
  #1
}

\usepackage{tcolorbox}
\tcbuselibrary{theorems}

\usepackage[textsize=tiny]{todonotes}
\usepackage{wrapfig}
\definecolor{navyblue}{RGB}{0, 38, 102} % 深蓝，接近 #002666

\newcount\Comments
\newcommand{\yplu}[1]{{\color{orange}{ [\textbf{Yiping:}} #1]}}
\newcommand{\kibitz}[2]{\ifnum\Comments=1{\textcolor{#1}{\textsf{\footnotesize #2}}}\fi}
\newcommand{\Jiajin}[1]{\kibitz{magenta}{[Jiajin: #1]}}
\DeclareMathOperator*{\argmin}{arg\,min}
\newcommand{\1}{{\mathbbm{1}}}

\def\bD{{\mathbf{D}}}

\def\x{{\mathbf{X}}}
\def\y{{\mathbf{Y}}}

\def\b{\bm{b}}

\def\g{{\mathbf g}}

\def\k{{\mathbf k}}

\def\q{\bm{q}}

\def\v{\bm{v}}

\def\x{\bm{x}}
\def\y{\bm{y}}
\def\z{\bm{z}}

\def\1{{\mathbf 1}}

\usepackage{xcolor} 
\definecolor{amethyst}{rgb}{0.6, 0.4, 0.8}
\newcount\Comments
\usepackage{amsmath}
\allowdisplaybreaks
\usepackage{tikz}
\usepackage{graphicx}
\usepackage{placeins} % For \FloatBarrier
\usepackage{adjustbox}
\newtcbox{\alertinline}[1][red]
  {on line, arc = 0pt, outer arc = 0pt,
    colback = #1!20!white, colframe = #1!50!black,
    boxsep = 0pt, left = 1pt, right = 1pt, top = 2pt, bottom = 2pt,
    boxrule = 0pt, bottomrule = 0pt, toprule = 0pt}

\newcommand{\pmean}{{(p,\textrm{mean})}}
\newcommand{\qmean}{{(q,\textrm{mean})}}

\usepackage{makecell}
\usepackage{enumitem}
\usepackage{subcaption}  % For subfigures
\usepackage{float}       % For controlling figure placement (optional)

\tcbset{
  mybox/.style={
    sharp corners,
    colback=white,
    boxrule=0pt,
    width=\textwidth,
    boxsep=5pt,
    left=5pt,
    right=0pt,
    top=5pt,
    bottom=5pt,
    leftrule=5pt,
    toprule=0pt,
    bottomrule=0pt,
    rightrule=0pt,
    colframe=black,
    leftlower=0mm,
    leftupper=0mm,
    overlay={
      \begin{tcbclipinterior}
        \fill[gray!50] (frame.south west) rectangle (interior.north west);
      \end{tcbclipinterior}
    }
  }
}
\allowdisplaybreaks[4]
\newtheorem{assumption}{Assumption}
\newtheorem{remark}{Remark}
\usepackage[textsize=tiny]{todonotes} 

\title{{ On the Width Scaling of Neural  Optimizers Under Matrix Operator Norms I: \\{\large Row/Column Normalization and Hyperparameter Transfer}\thanks{This technical report describes ongoing research. It will be updated as the work progresses.}}}

\author{ \hspace{1mm}Ruihan Xu \\
	University of Chicago\\
	\texttt{ruihanx@uchicago.edu} \\
	\url{https://ruihanxx.github.io/}
	\And
\hspace{1mm}Jiajin Li \\
	Sauder School of Business\\University of British Columbia\\
\texttt{jiajin.li@sauder.ubc.ca} \\
\url{https://gerrili1996.github.io/}\\
	 \And
	Yiping Lu \\
    Industrial Engineering \& Management Science\\
	Northwestern University \\
	\texttt{yiping.lu@northwestern.edu} \\
    \url{{https://2prime.github.io/}}
}

\renewcommand{\shorttitle}{Width Scaling of Neural Optimizer}

\hypersetup{
pdftitle={Row Normalization},
pdfauthor={Ruihan Xu, Jiajin Li, Yiping Lu},
}

\begin{document}
\maketitle

\begin{abstract}
A central question in modern deep learning is how to design optimizers whose behavior remains stable as the network width $w$ increases.
We address this question by interpreting several widely used neural-network optimizers, including \textrm{AdamW} and \textrm{Muon}, as instances of steepest descent under matrix operator norms. 
This perspective links optimizer geometry with the Lipschitz structure of the network forward map, and enables width-independent control of both Lipschitz and smoothness constants. 
However, steepest-descent rules induced by standard $p \to q$ operator norms lack layerwise composability and therefore cannot provide width-independent bounds in deep architectures. 
We overcome this limitation by introducing a family of mean-normalized operator norms, denoted $\pmean \to \qmean$, that admit layerwise composability, yield width-independent smoothness bounds, and give rise to practical optimizers such as \emph{rescaled} \textrm{AdamW}, row normalization, and column normalization. 
The resulting learning rate width-aware scaling rules recover $\mu$P scaling~\cite{yang2021tensor} as a special case and provide a principled mechanism for cross-width learning-rate transfer across a broad class of optimizers. 
We further show that \textrm{Muon} can suffer an $\mathcal{O}(\sqrt{w})$ worst-case growth in the smoothness constant, whereas a new family of row-normalized optimizers we propose achieves width-independent smoothness guarantees. Based on the observations, we propose MOGA (Matrix Operator Geometry Aware), a width-aware optimizer based only on row/column-wise normalization that enables stable learning-rate transfer across model widths. Large-scale pre-training on GPT-2 and LLaMA shows that MOGA, especially with row normalization, is competitive with Muon while being notably faster in large-token and low-loss regimes.
\end{abstract}

% keywords can be removed
\keywords{Neural-network optimization \and Learning-rate transfer \and Gradient normalization \and $L$-smoothness}

\section{Introduction}

% Recent neural scaling laws \citep{cortes1993learning,hestness2017deep,kaplan2020scaling,bahri2024explaining} offer a clear narrative: Model performance improves predictably with scale, suggesting that larger architectures should yield better results. Yet putting these laws into practice exposes a gap: They say little about how the optimal learning rate varies with width.  In practice, for mainstream optimizers such as \textrm{AdamW} \citep{yang2021tensor,yang2022tensor,yang2023spectral} and \textrm{Muon} \citep{ghosh2025understanding}, the optimal learning rate is strongly width dependent: The optimal rate tuned for a network with 512 hidden units can diverge or slow markedly when width increases to 2048. This sensitivity indicates that standard optimizers do not naturally respect architectural scaling. To realize the gains promised by scaling laws, we therefore aim for optimizers whose tuning varies only weakly with width. Naturally, we ask:
% \vspace{2mm}
% \begin{center}
%     \bf How can we decouple the optimal learning rate from network width, and do algorithms that succeed at small scale remain effective as models grow? %How can we mitigate the width scaling of the optimal learning rate? 
% \end{center}
% \vspace{2mm}

Recent neural scaling laws~\citep{cortes1993learning,hestness2017deep,kaplan2020scaling,bahri2024explaining}
offer a clear narrative: Model performance improves predictably with scale, suggesting that larger architectures should yield better results.
Yet putting these laws into practice exposes a fundamental gap: They say little about how optimization hyperparameters  should vary with width.
In practice, for mainstream optimizers such as \textrm{AdamW} \citep{yang2021tensor,yang2022tensor,yang2023spectral}
and \textrm{Muon} \citep{ghosh2025understanding}, the optimal learning rate is strongly width dependent:
A value tuned for a network with 512 hidden units can diverge or slow markedly when the width increases to 2048.
This phenomenon reflects a lack of reliable \emph{hyperparameter transfer} across model scales, where tuning calibrated at the small model fails to generalize to the large one. This sensitivity indicates that standard optimizers do not naturally respect architectural scaling. To realize the gains promised by scaling laws, we therefore seek optimization methods whose tuning is width independent.
Naturally, we ask:
\begin{quote}
\centering
\textbf{Can we design optimization methods whose optimal learning rate transfers reliably\\as network width increases?}
\end{quote}

We answer this question in the affirmative by viewing existing neural network optimizers,
including \textrm{SignSGD}~\citep{bernstein2018signsgd},
\textrm{AdamW}~\citep{kingma2014adam,loshchilov2017decoupled},
\textrm{GradPower}~\citep{wang2025gradpower}, and
\textrm{Muon}~\citep{carlson2015preconditioned,muonblog},
within a unified framework as instances of steepest descent under matrix operator norms~\citep{flynn2017duality,maddison2021dual}.
Specifically, we consider the optimization problem $\min_{\bm{\Theta}} f(\bm{\Theta})$,
where $\bm{\Theta}$ denotes the network parameters.
At iteration $t$, classical steepest descent in Euclidean geometry (Frobenius norm in the matrix space) selects the unit-norm direction that yields the largest instantaneous decrease of the first-order model:
\[
\bm{D}^t
=
\argmin_{\left\|\bm{D}\right\|_F=1}
\langle \nabla f(\bm{\Theta}^t), \bm{D}\rangle
=
-\frac{\nabla f(\bm{\Theta}^t)}{\left\|\nabla f(\bm{\Theta}^t)\right\|_F}.
\]
More generally, steepest descent can be defined with respect to an arbitrary norm $\|\cdot\|$ with dual norm $\|\cdot\|_*$.
Different choices of geometry induce different update rules that exploit problem-specific structure.
% classical steepest descent  $\mathbf{W}_{k+1} = \mathbf{W}_k - \eta_k \nabla f(\mathbf{W}_k),$ where $\eta_k > 0$ denotes the step size, updates parameters along the negative gradient of the objective, which corresponds to the direction of maximal instantaneous decrease measured in the Euclidean norm. The gradient $\nabla f(x_k)$ is the steepest descent direction under the $\ell_2$ norm because, among all unit-length directions in this norm, it maximizes the rate of decrease of $f$:
% \(\nabla f(x_k) = \arg\max_{\|d\|_2 = 1} \nabla f(x_k)^\top d,\) i.e., it is the unit-norm direction along which the directional gradient of $f$ is maximized. 
% More generally, steepest descent can be defined with respect to an arbitrary norm $\|\cdot\|$ with the dual norm $\|\cdot\|_*$. 
% At the iterate ${\bm{\Theta}}^t$, a steepest–descent direction is any
% \[
% \bm{D}^t \in \mathop{\arg\min}_{\left\|\bm{D}\right\|\le 1}\,\langle \nabla f({\bm{\Theta}}^t), \bm{D}\rangle
%      \;=\; -\,\partial\left\|\nabla f({\bm{\Theta}}^t)\right\|_*, 
% \]
% where $\partial \|\cdot\|_*$ denotes the subdifferential of the dual norm.   
% This formulation recovers classical Euclidean steepest descent when $\|\cdot\| = \|\cdot\|_F$, while alternative norms induce different update rules that exploit problem-specific geometry.
For example, when $\|\cdot\|=\|\cdot\|_p$ with $p\ge1$, the induced $\ell_{p^\ast}$ normalization, where $\tfrac{1}{p}+\tfrac{1}{p^{\ast}}=1$, yields the \textrm{GradPower} family of methods \citep{wang2025gradpower}.
Under $\ell_\infty$ geometry, the steepest-descent direction reduces to the coordinate-wise sign of the gradient, yielding the well-known SignSGD algorithm~\citep{bernstein2018signsgd}. Moreover, \textrm{AdamW} \citep{kingma2014adam,loshchilov2017decoupled} 
can in fact be viewed as a smoothed or adaptive variant of \textrm{SignSGD}, see Remark~\ref{remark:adamassigngd} for details.  
Extending steepest-descent geometry from vector spaces to matrix spaces naturally leads to scaling rules governed by matrix operator norms. 
The resulting descent directions correspond to geometry-aware or “whitened’’ updates, which can be implemented via row/column normalization or related preconditioning schemes~\citep{carlson2015preconditioned,tuddenham2022orthogonalising,bernstein2024old,muonblog,pethick2025training,moonlight}.
Viewed from this perspective, the choice of matrix operator norm acts as a versatile design knob that provides a unified geometric framework encompassing a wide range of first-order methods.

Not every operator-norm geometry, however, leads to a practical method. The chosen norm must admit an efficiently computable steepest-descent direction, ideally in closed form or via a fast approximation.
Recent work provides simple descent oracles for several families,
including Newton--Schulz iteration for $2\to2$ norms,
column normalization for $1\to q$, and row normalization for
$p\to\infty$.
In this paper, we restrict attention to these computationally tractable
cases (i.e., $p\to q$ with $p\le q$) and investigate whether their
induced geometries preserve width-insensitive optimization properties of the network loss.

% A natural next question is how to select a \emph{matrix operator norm} that faithfully characterizes the objective landscape. To this end, we follow the modular framework of \citep{bernstein2024modular,large2024scalable}, which prescribes selecting the matrix norm that renders the forward map well-conditioned with respect to weight perturbations, thereby inducing an $M$-Lipschitz constant under the chosen geometry.
% Building on this intuition, \citep{bernstein2024modular} identifies the operator norm as a particularly natural choice; we adopt it as the starting point for our analysis. 

% A natural next question is how to select a \emph{matrix operator norm} that faithfully characterizes the objective landscape.
As an initial step, following the viewpoint of~\citep{bernstein2024modular,large2024scalable}, we focus on geometries under which the Lipschitz constant of the loss with respect to weight perturbations does not deteriorate as width increases.
To formalize this discussion, we introduce the matrix operator norm induced by vector norms. 
\begin{definition}[Matrix Operator Norm]
\label{def:op_norm}
Let $\bm{D} \in \mathbb{R}^{m\times n}$,  $\|\cdot\|_{\mathrm{in}}$ and 
$\|\cdot\|_{\mathrm{out}}$ be norms on $\mathbb{R}^n$ and $\mathbb{R}^m$, respectively. 
The operator norm of $\bm{D}$ induced by these norms is defined as
\[
\left\|\bm{D}\right\|_{\mathrm{in}\to\mathrm{out}}
:= \sup_{\x\in\mathbb{R}^n,\, \x\ne 0}\frac{\left\|\bm{D} \x\right\|_{\mathrm{out}}}{\left\|\x\right\|_{\mathrm{in}}}
= \sup_{\left\|\x\right\|_{\mathrm{in}}=1}\left\|\bm{D}\x\right\|_{\mathrm{out}}.
\]
\end{definition}

%  While operator norms provide a unifying geometric framework, not every such norm leads to a practical optimizer. From a computational standpoint, the chosen geometry must admit an efficient steepest-descent oracle—that is, the descent direction should be obtainable either in closed form or via a fast iterative approximation. Otherwise, the elegance of the operator-norm formulation offers little algorithmic value. To this end, \citep{bernstein2024old} propose simple oracles for several families of operator norms: Newton–Schulz iteration for $2\to 2$ the norm, column normalization for  $1\to q$, and row normalization for 
% $p\to \infty$. In what follows, we restrict attention to these computationally tractable cases (i.e., $p\to q$ where $p\le  q$),  and examine whether their induced geometries truly preserve the $M$-Lipschitz property of the network mapping.

% From a computational perspective, selecting the norm also requires that the steepest descent direction be efficiently computable, either via a closed-form expression or an approximate algorithm. \citep{bernstein2024old} proposed the Newton–Schulz iteration for the $2\to 2$ operator norm, column normalization for $1\to q$, and row normalization for $p\to \infty$. In what follows, we restrict our discussion to these operator norms.

Our analysis reveals that, perhaps counterintuitively, classical operator norms of the form $p \to q$  with $p\leq q$ do not in general provide width-independent Lipschitz control when stacked across multiple layers of a neural network.
To see this, consider the directional derivative of the $(i+2)$-th layer output with respect to the $i$-th layer weight matrix $\bm{W}_i$.
A standard chain-rule calculation shows that its norm decomposes into three factors:
the operator norm of the perturbation $\bm{\Delta}\bm{W}_i$,
the operator norm of the next-layer weight matrix $\bm{W}_{i+1}$,
and a cross-layer mismatch coefficient $\|\bm{I}\|_{i,\mathrm{out}\to i+1,\mathrm{in}}$,
where $\|\bm{I}\|_{i,\mathrm{out}\to i+1,\mathrm{in}}$ denotes the operator norm of the identity map between the output space of layer $i$ and the input space of layer $i+1$, capturing the geometric mismatch between consecutive layers.
When this mismatch coefficient exceeds one, even small perturbations can be amplified purely due to geometric inconsistency across consecutive layers.
This observation motivates a compatibility condition between adjacent norms, i.e., $\|\cdot\|_{\mathrm{in}} \le \|\cdot\|_{\mathrm{out}}$. 
Under this condition, the Lipchitz constant bound remains width independent. However, classical operator norms $p\to q$ with $1\leq p<q$ violate this property, i.e., 
$
\|\x\|_p \le n^{1/p-1/q}\|\x\|_q,
$ which implies $\|\bm{I}\|_{ q\to p}=n^{1/p-1/q}>1$. These observations motivate introducing a new, width-aware geometry that enforces cross-layer compatibility.
To this end, we consider the mean-normalized norms $\|\cdot\|_{(p,\mathrm{mean})}$ defined by 
\begin{equation*}
% \label{eq:mean-norm}
\left\|\x\right\|_{(p,\textrm{mean})} := \left(\frac{1}{n}\sum_{i=1}^n|\x_i|^p\right)^{1/p} = n^{-1/p}\left\|\x\right\|_p. 
\end{equation*}
The factor $n^{-1/p}$ precisely cancels the dimensional scaling of the
$\ell_p$ embeddings, thereby enforcing a compatibility condition between adjacent norms: $
\|\bm{I}\|_{(q,\mathrm{mean})\to(p,\mathrm{mean})}\le 1,
$ for all $1\le p\le q\le\infty$.
As a consequence, under the $(p,\mathrm{mean}) \to (q,\mathrm{mean})$ geometry,
the network loss admits a width-independent Lipschitz bound, see Theorem~\ref{thm:width-lip}.

Beyond width-independent Lipschitz control, a second requirement we consider is whether the operator geometry also admits a width-independent smoothness constant, which governs the curvature of the loss landscape and the stability of gradient-based updates.
Theorem~\ref{theorem:Lsmooth} shows that the smoothness constant is width independent whenever $q \ge 2p$.
More generally, any residual width dependence scales as
$w^{\frac{1}{q}-\frac{1}{2p}}$, precisely characterizing how curvature grows with the layer width.
In particular, $(1,\mathrm{mean})\to (q,\mathrm{mean})$ with $q\ge 2$ and
$(p,\mathrm{mean}) \to \infty$ both yield smoothness bounds that are independent of the network width.
By contrast, the $(2,\mathrm{mean})\to (2,\mathrm{mean})$ geometry
(corresponding to \textrm{Muon}) exhibits a smoothness constant that grows on the order of $\sqrt{w}$.

Passing from the classical $p\to q$ geometry to the
$(p,\mathrm{mean}) \to (q,\mathrm{mean})$ geometry induces a principled width-aware rescaling of the learning rate.
Indeed, the mean-normalized norm $\|\cdot\|_{(p,\mathrm{mean})}$ differs from the standard $\ell_p$ norm on $\mathbb{R}^n$ only by a multiplicative factor of $n^{1/p}$.
Consequently, the two geometries induce identical update directions up to a dimension-dependent scaling, which can be absorbed into the step size.
This observation gives rise to a new family of practical optimizers with
explicit width-aware learning-rate rules, referred to as
\emph{MOGA} (Matrix Operator Geometry Aware),
including rescaled variants of \textrm{AdamW} as well as row and column
normalization methods.
Moreover, in the special cases of \textrm{Adam} and \textrm{SignSGD}, the resulting scaling coincides exactly with the $\mu$P scaling rule of~\cite{yang2022tensor}.
Importantly, however, the underlying principles behind hyperparameter transfer are conceptually different.
$\mu$P is motivated by  preserving feature-learning behavior as width grows (e.g., keeping feature-level
update magnitudes non-degenerate), whose formal theoretical analysis  relies on the spectral condition of the network parameterization~\cite{yang2023spectral}.
In contrast, our scaling arises from an optimization-geometric perspective,
through width-independent Lipschitz and smoothness control under mean-normalized operator norms.
This viewpoint applies to a broader class of optimizers—including regimes that do not satisfy the spectral assumptions required by $\mu$P—while still enabling reliable hyperparameter transfer across widths.
Our analysis therefore provides a principled and more general foundation for hyperparameter transfer across model scales.

To summarize, under the mean-normalized operator-norm geometries
$(1,\mathrm{mean}) \to (q,\mathrm{mean})$ with $q\ge2$
and $(p,\mathrm{mean}) \to \infty$,
the network loss admits width-independent Lipschitz and smoothness bounds with respect to weight perturbations.
The induced steepest-descent updates lead to practical optimization methods,
in particular a family of rescaled row-normalization algorithms that exhibit reliable learning-rate transfer across widths in our experiments. 
In contrast, for the recently proposed and widely studied \textrm{Muon} optimizer,
which corresponds to the $(2,\mathrm{mean}) \to (2,\mathrm{mean})$ geometry, our theory reveals a more nuanced picture:
Although \textrm{Muon} attains width-independent Lipschitz control,
its smoothness constant can grow on the order of $\sqrt{w}$ in the worst case.
This reveals a potential limitation of \textrm{Muon} in maintaining stable optimization
behavior as width increases, and highlights an advantage of the new
rescaled row-normalization family proposed in this paper.

Finally, we validate our theoretical findings through large-scale language model pre-training experiments on both GPT-2 and LLaMA architectures. Consistent with our analysis, the proposed  \textrm{MOGA} scaling enables reliable learning-rate transfer across model widths: Models with substantially different parameter counts achieve their best performance at nearly the same peak learning rate, eliminating the need for extensive retuning when scaling up model size. We further evaluate \textrm{MOGA} instantiated with row normalization under both standard and large token budgets, corresponding to approximately
$\sim 1\times$ and $\sim 8\times$ the Chinchilla-optimal training tokens~\cite{hoffmann2022training}, respectively, on GPT-2 Small and LLaMA-130M. Across architectures, \textrm{MOGA} instantiated with row normalization achieves competitive or superior
optimization performance compared with strong baselines including
\textrm{AdamW} and \textrm{Muon}.
In particular, under the large-token regime, \textrm{MOGA} instantiated with row normalization demonstrates a clear
advantage in the later stages of training and in the low-loss regime,
where optimization stability becomes increasingly critical.
These results highlight the practical value of width-aware optimizer design:
Beyond enabling hyperparameter transfer, the proposed \textit{width-aware
rescaled row-normalization optimizer} (i.e., MOGA instantiated with row normalization) can improve training efficiency
in regimes that are most relevant for large-scale model deployment.

{\paragraph{Notation.} Let $\x \in \mathbb{R}^n$ be a column vector, and denote its $i$-th entry by $x_i$, $i=1,\dots,n$. 
Let $\bm{D} \in \mathbb{R}^{m \times n}$ be a matrix. 
We write $\bm{D}_{r,:}$ for the $r$-th row of $\bm{D}$ and $\bm{D}_{:,c}$ for its $c$-th column, 
for $r=1,\dots,m$ and $c=1,\dots,n$. 
For two vectors $x,y \in \mathbb{R}^n$, their elementwise product is defined as 
$
(x \odot y)_i := x_i y_i, \quad i=1,\dots,n .
$ 
Let $f:\Omega \to \mathbb{R}$ be differentiable, where $\Omega\subset\mathbb{R}^{m\times n}$. 
For $\bm Z \in \Omega$ and a direction $\Delta \bm Z$, the directional derivative of $f$ at $\bm Z$ along $\Delta \bm Z$ is defined as
\[
\nabla f(\bm Z)[\Delta \bm Z]
:= \lim_{t \to 0} \frac{f(\bm Z + t\Delta \bm Z) - f(\bm Z)}{t},
\]
whenever the limit exists. Let $f:\Omega \to \mathbb{R}$ be twice differentiable. 
For $\bm Z \in \Omega$ and directions $\Delta \bm Z_1,\Delta \bm Z_2$, 
the directional Hessian of $f$ at $\bm Z$ is defined as
\[
\nabla^2 f(\bm Z)[\Delta \bm Z_1,\Delta \bm Z_2]
:= \lim_{t \to 0}
\frac{
\nabla f(\bm Z + t \Delta \bm Z_2)[\Delta \bm Z_1]
- \nabla f(\bm Z)[\Delta \bm Z_1]
}{t},
\]
whenever the limit exists. 
We write $a \lesssim b$ if there exists a universal constant $C>0$, 
independent of the width, such that
$a \le C\, b$.

% Let $X$ and $Y$ be Banach spaces, and let $f : X \to Y$ be a mapping.  
% For $x \in X$ and a direction $h \in X$, the \textit{directional gradient} of $f$ at $x$ in the direction $h$ is defined as
% $\nabla_xf(x)[h]
%     := \lim_{t \to 0} \frac{f(x + t h) - f(x)}{t}$, whenever this limit exists.  
% We say that $f$ is (G\^{a}teaux) differentiable at $x$ if the above limit exists for all directions $h \in X$.  Similarly, the directional Hessian is defined as \( \nabla^{2} f(x)[h,k]
%     := \lim_{t \to 0}
%        \frac{1}{t}
%        \Bigl(
%           \nabla_x f(x + t k)[h]
%           - \nabla_x f(x)[h]
%        \Bigr).\)
%        } We use the notation $a \lesssim b$ to mean that there exists a universal constant $C>0$ (independent of the problem parameters) such that $a \le C\, b$.
% } 

\section{{\color{navyblue}Matrix Thinking}:  Unifying Optimizers via Matrix Operator Norm}
\label{sec:main}
We consider the optimization problem of minimizing a neural-network-based loss
\[
\min_{\bm{W}_{1:\ell},\, \b_{1:\ell}} f(\bm{W}_{1:\ell},\b_{1:\ell})\coloneqq \mathcal{L}\left(\y_\ell(\x)\right),
\]
where $\y_\ell(\x)\in\mathbb{R}$ denotes the output of a $\ell$-layer feedforward neural network with parameters $(\bm{W}_{1:\ell}, \b_{1:\ell})$, evaluated at input $\x\in\mathbb{R}^d$. Later on, we write  ${\bm{\Theta}} := \{\bm{W}_{1:\ell},\, \b_{1:\ell} \}$ for simplicity. 
\begin{definition}[Feedforward Neural Network]\label{def:nn}
Let $\y_0(\x)\coloneqq \x$. For $i=1,\dots,K$, define recursively
\[
\y_i(\x)\coloneqq  \sigma\left(\bm{W}_i \y_{i-1}(\x) + \b_i\right),
\]
where $\bm{W}_1\in\mathbb{R}^{w\times d}$, $\bm{W}_i\in\mathbb{R}^{w\times w}$ for
$i=2,\dots,K-1$, and $\bm{W}_K\in\mathbb{R}^{1\times w}$.
The bias vectors satisfy $\b_i\in\mathbb{R}^w$ for $i=1,\dots,K-1$ and
$b_\ell\in\mathbb{R}$.
The hidden states satisfy $\y_i(\x)\in\mathbb{R}^w$ for $i<K$, and the network
output is $\y_\ell(\x)\in\mathbb{R}$.  Here, $\sigma$ denotes a activation function, which represents both a mapping $\sigma:\mathbb{R}\to\mathbb{R}$ and its natural extension $\sigma:\mathbb{R}^n\to\mathbb{R}^n$, where it is applied pointwise to each component, i.e.,  $\sigma(\x)_i=\sigma(\x_i)$ for each coordinate $i$. 
\end{definition}

Throughout this paper, we assume that the loss function $\mathcal{L}$ and all activation functions $\sigma_i$ are twice continuously differentiable with uniformly bounded first- and second-order derivatives.

 \begin{assumption}{Bounded derivatives of activation functions}
\label{assump:boundedderivativeactivation}
There exist constants $L_\sigma, M_\sigma > 0$ such that
     \[ |\sigma'(z)|\le  L_\sigma,\quad \text{and}  \quad |\sigma''(z)|\le  M_\sigma, \qquad \forall z \in \mathbb{R}.\]
 \end{assumption}
 \begin{assumption}{Bounded derivatives of the loss function}
\label{assump:boundedderivativeloss}
 There exist constants $L_J, M_J > 0$ such that
    \[
    \left|\mathcal{L}'(z)\right|\le  L_J ,\quad \text{and}  \quad  \left|\mathcal{L}''(z)\right|\le  M_J, \qquad \forall z \in \mathbb{R}. 
    \]
 \end{assumption}

With the network architecture and smoothness assumptions in place, we now turn to the central question of this section: how to design and compare first-order optimizers through the geometry they induce. We adopt a matrix operator-norm perspective, viewing many modern optimization methods as instances of steepest-descent updates under appropriately chosen operator norms. This viewpoint encompasses a variety of recent optimizers, including \textrm{SignSGD} \citep{bernstein2018signsgd}, \textrm{Lion} \citep{chen2023symbolic}, and \textrm{Muon} \citep{muonblog}, which have demonstrated strong empirical performance in large-scale learning. 
From this perspective, the choice of geometry plays a dual role. On the one hand, it governs the computational cost of each update through the tractability of the associated steepest-descent direction; on the other hand, it determines the convergence behavior of the resulting optimization method. A well-chosen operator norm must therefore strike a balance between per-iteration computational efficiency and favorable convergence properties. 

\paragraph{Computability and per-iteration cost.} 
The steepest-descent direction induced by a general $\|\cdot\|_{p\to q}$ operator norm does not necessarily admit a closed-form expression. Beyond \textrm{Muon} \citep{muonblog}, which employs a Newton--Schulz iteration to approximate steepest descent under the $2\to2$ operator norm, prior work \citep{bernstein2024old} shows that steepest descent under the $1\to q$ and $p\to\infty$ norms admits efficient column-wise and row-wise closed-form computations, respectively. In this paper, we therefore restrict attention to operator norms for which the induced steepest-descent directions are computationally tractable; see Section \ref{sectionmethod} for details.

\paragraph{Convergence speed.} Among computable operator norms, a natural question is which $(p,q)$ geometries render the optimization problem well posed and lead to favorable convergence guarantees. We address this question by analyzing the $M$-Lipschitz continuity (Section \ref{section:playtogher}) and $L$-smoothness (Section \ref{subsection:selectpq}) of neural networks under various $p\to q$ operator norms. 

Our analysis reveals a fundamental limitation of classical $parrow q$ operator norms: Both Lipschitz and smoothness constants deteriorate with increasing network width, indicating a structural mismatch between these geometries and the compositional nature of deep neural networks. This degradation arises because standard operator norms fail to propagate stability estimates across layers, thereby distorting the effective geometry of the network's forward map. 

To overcome this issue, we introduce \emph{mean-normalized operator norms}, which are specifically designed to preserve composability under width scaling. These norms yield width-independent first- and second-order estimates and align the optimization geometry with the network's forward-map geometry. Building on this geometric insight, we develop the \textrm{MOGA} (\textbf{M}atrix \textbf{O}perator \textbf{G}eometry \textbf{A}ware) optimizer and show that $\mu$P-Adam arises as a special case within our framework.

\subsection{From \textrm{AdamW}, \textrm{Muon} to Row and Column Normalization} 
%: Implementable $p\to q$ Optimizers}
\label{sectionmethod}
In this subsection, we show that many state-of-the-art optimizers can be interpreted as instances of steepest descent under suitable matrix $p\to q$ operator norms, with several such instances admitting implementable per-iteration updates \citep{bernstein2024old}. To this end, recall that the steepest-descent direction under a matrix $p\to q$ operator norm is defined as the solution to
\begin{equation}
\label{eq:steep_descent}
\bm{D}^\star := \mathop{\arg\min}_{\left\|\bm{D}\right\|_{p\rightarrow q} \le 1}
\,\langle \nabla f(\bm{\Theta}), \bm{D}\rangle .
\end{equation}

We begin with three representative optimizers: \textrm{SignSGD} \citep{bernstein2018signsgd}, \textrm{Lion} \citep{chen2023symbolic}, and \textrm{AdamW} \citep{kingma2014adam,loshchilov2017decoupled}. 
Although these methods were originally developed for optimization over vector-valued parameters, they admit natural interpretations when applied to matrix-valued parameters. At the vector level, both \textrm{SignSGD} and \textrm{AdamW} can be viewed as steepest-descent methods under the $\ell_\infty$ geometry \citep{balles2018dissecting}. When such vector updates are applied entrywise to matrix parameters, the induced geometry corresponds implicitly to the matrix operator norm $\ell_1 \to \ell_\infty$ \citep{bernstein2024old}. This connection is formalized by the following fact.

\begin{fact}[\citep{bernstein2024old}, Proposition~3]\label{prop:signdgasoperator}
Let $\bm{D} \in \mathbb{R}^{m \times n}$. Then the element-wise $\ell_\infty$ norm of $\bm{D}$ coincides with its $\ell_1 \to \ell_\infty$ operator norm, that is, 
\[
\left\|\bm{D}\right\|_{1 \to \infty}
:= \sup_{\x \neq 0} \frac{\left\|\bm{D}\x\right\|_\infty}{\left\|\x\right\|_1}
= \max_{i,j} |\bm{D}_{ij}|
=: \left\|\bm{D}\right\|_{\max}.
\]
Then, the steepest-descent subproblem \eqref{eq:steep_descent} admits the closed-form solution $\bm{D}^\star = -\textrm{sign}(\nabla f(\bm{W}))$, where $\textrm{sign}(\cdot)$ is applied entrywise. Consequently, the induced update direction recovers the \textrm{SignSGD} algorithm.
% In particular, performing steepest descent under the $\|\cdot\|_{\max}$ norm corresponds to updating each parameter according to the sign of its gradient, which recovers the \textrm{SignSGD} algorithm.
\end{fact}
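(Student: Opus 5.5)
The argument has two parts: identify the $1\to\infty$ operator norm with the entrywise maximum, then solve the linear minimization over the resulting box. For the first part, I will show two matching inequalities. For the upper bound, take any $\x\ne 0$. For each row $i$,
\[
|(\bm{D}\x)_i| = \Bigl|\sum_j \bm{D}_{ij}x_j\Bigr| \le \sum_j |\bm{D}_{ij}||x_j| \le \max_{i,j}|\bm{D}_{ij}|\,\|\x\|_1 .
\]
Taking the maximum over $i$ gives $\|\bm{D}\x\|_\infty \le \|\bm{D}\|_{\max}\|\x\|_1$, so $\|\bm{D}\|_{1\to\infty}\le \|\bm{D}\|_{\max}$. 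For the lower bound, pick $(i^\star,j^\star)$ that attains the maximum and set $\x = \bm{e}_{j^\star}$. Then $\|\x\|_1 = 1$ and $\|\bm{D}\bm{e}_{j^\star}\|_\infty \ge |\bm{D}_{i^\star j^\star}|$. Together these give equality.

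The same reasoning explains why the $\ell_1$ unit ball suffices: it is the convex hull of $\pm \bm{e}_j$. Since $\x\mapsto\|\bm{D}\x\|_\infty$ is convex, its supremum over that ball is attained at an extreme point, that is, at a signed column of $\bm{D}$.

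For the second part, the identity shows that the feasible set $\{\|\bm{D}\|_{1\to\infty}\le 1\}$ is exactly the box $\{|\bm{D}_{ij}|\le 1\ \forall i,j\}$. The objective $\langle \nabla f(\bm{W}),\bm{D}\rangle = \sum_{ij} G_{ij}D_{ij}$, with $G=\nabla f(\bm{W})$, is separable across entries. Each term $G_{ij}D_{ij}$ with $|D_{ij}|\le 1$ is minimized at $D_{ij}=-\mathrm{sign}(G_{ij})$, which gives the value $-|G_{ij}|$. Hence $\bm{D}^\star=-\mathrm{sign}(\nabla f(\bm{W}))$, and the optimal value is $-\|G\|_{1}$ (the entrywise $\ell_1$ norm, which is the dual of the max norm). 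Where $G_{ij}=0$, any $D_{ij}\in[-1,1]$ is optimal, and the convention $\mathrm{sign}(0)=0$ selects one of these minimizers.

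A step $\bm{W}\leftarrow \bm{W}+\eta\bm{D}^\star$ is then exactly the SignSGD update applied entrywise.

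No step poses a real obstacle. The only points needing care are the non-uniqueness of the minimizer at zero gradient entries, and stating that $\mathrm{sign}$ is applied entrywise.
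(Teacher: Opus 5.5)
Your proof is correct and follows essentially the same route the paper relies on. The paper gives no separate proof of this fact; it takes the identity $\|\bm{D}\|_{1\to\infty}=\|\bm{D}\|_{\max}$ from \citep{bernstein2024old}, and its proof of Proposition~\ref{proposition:rowcolnorm} solves the subproblem by the same decoupling of $\langle \nabla f,\bm{D}\rangle$ over the unit ball you use (column-wise via $\|\bm{D}\|_{1\to q}=\max_c\|\bm{D}_{:,c}\|_q$, which for $q=\infty$ is your entrywise box). The only difference is that you prove the norm identity yourself with the two matching inequalities (triangle inequality and a standard basis vector) rather than citing it, so your argument is self-contained.
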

\begin{remark}
\label{remark:adamassigngd}
    We briefly clarify the relationship between \textrm{Adam}, \textrm{RMSprop}, and sign-based methods at the level of update directions. Consider the sign descent update
$
\bm{\theta}^{t+1} = \bm{\theta}^{t} - \alpha\, \textrm{sign}(\tilde \g_t),
$
where $\tilde \g_t$ denotes a (possibly stochastic) gradient estimate and $\alpha>0$ is the step size. This update corresponds to steepest descent under the $\ell_\infty$ geometry, as discussed above. The optimizer \textrm{RMSprop} maintains an exponential moving average of squared gradients,
\[
\v^{t+1} = \beta_2 \v^t + (1-\beta_2){\tilde\g}^t\odot{\tilde{\g}}^t,
\qquad
\bm{\theta}^{t+1} = \bm{\theta}^{t} - \alpha\, \frac{{\tilde \g}^t}{\sqrt{\v^{t+1}+\epsilon}},
\]
% \Jiajin{In general, let us use the superscript to denote the iteration.}
where $\beta_2\in[0,1)$ and $\epsilon>0$ ensure numerical stability. At the level of update directions, \textrm{RMSprop} (and equivalently \textrm{Adam} with $\beta_1=0$) can be viewed as a smoothed variant of sign descent, where the discontinuous sign operator is replaced by a normalized gradient with adaptive, coordinate-wise scaling. Formally, in the limiting case $\beta_2\to 0$ and $\epsilon\to 0$, the \textrm{RMSprop} update reduces to
\[
\bm{\theta}^{t+1} = \bm{\theta}^{t} - \alpha\,  \frac{{\tilde{\g}}^t}{\sqrt{{\tilde\g}^t\odot{\tilde{\g}}^t}} = \bm{\theta}^t - \alpha\, \textrm{sign}({\tilde\g}^t),
\]
recovering the \textrm{SignSGD} direction. This observation places \textrm{RMSprop}, \textrm{Adam}  and \textrm{AdamW} within a broader family of sign-based and geometry-aware first-order methods.

\end{remark}

The second example is \textrm{Muon} \citep{muonblog}, which can be interpreted as performing steepest descent under the matrix $2\to2$ operator norm. Since the spectral norm of a matrix coincides with the $2\to2$ operator norm, the steepest-descent subproblem~\eqref{eq:steep_descent} admits the closed-form solution
$
\bm{D}^\star = -\textrm{matrixsign}(\nabla f(\bm{\Theta})),
$
where the matrix sign is defined via the singular value decomposition:
if $\nabla f(\bm{\Theta})=\bm{U}\bm{\Sigma}\bm{V}^\top$, then
$\textrm{matrixsign}(\nabla f(\bm{\Theta}))=\bm{U}\bm{V}^\top$.
This characterization recovers the update direction of \textrm{Muon}, which whitens or orthogonalizes the gradient matrix. Empirically, \textrm{Muon} has been shown to be an effective alternative to \textrm{AdamW} for large-scale language-model pretraining \citep{moonlight}.
% \Jiajin{Check: both $\bm{U}$ and $\bm{V}$ have been used later!}

% The final class of examples we consider are the operator norms 
% $\|\cdot\|_{p\to \infty}$ and $\|\cdot\|_{1\to q}$ 
% \citep{bernstein2024old,glentis2025minimalist}, 
% for which the steepest-descent directions admit closed-form solutions.
% These norms naturally yield row-wise and column-wise update rules on matrices, respectively.

The final class of examples consists of the operator norms $\|\cdot\|_{1\to q}$ and $\|\cdot\|_{p\to\infty}$ with $p,q\ge 1$, whose associated steepest-descent subproblems admit closed-form solutions \citep{bernstein2024old,glentis2025minimalist}. These geometries induce simple column-wise and row-wise update rules on matrix parameters, see Proposition \ref{proposition:rowcolnorm} for details. 

\begin{proposition}
% [{\protect \citep[Proposition 3]{bernstein2024old}}]
\label{proposition:rowcolnorm}
% Let $f: \mathbb{R}^{m \times n} \to \mathbb{R}$ be differentiable, and let 
% $\bG = \nabla f(\x) \in \mathbb{R}^{m \times n}$. Then we have 
Consider the steepest-descent subproblem~\eqref{eq:steep_descent} with gradient
$\bm{G}=\nabla f(\bm{\Theta})\in\mathbb{R}^{m\times n}$. 
For the operator norms $\|\cdot\|_{1\to q}$ and $\|\cdot\|_{p\to\infty}$ with $p,q\ge 1$, the corresponding steepest-descent directions admit the following closed-form expressions:
% \begin{enumerate}
% \item 123
% \end{enumerate}
\begin{itemize}[leftmargin=0.15in]
\setlength{\itemsep}{4pt}
\item[\textnormal{(i)}] (\textbf{Column-wise update, $\|\cdot\|_{1\to q}$}) \[
\bm{D}^\star = \textrm{colnorm}_q(\bm{G})
\quad \text{and} \quad 
\textrm{colnorm}_q(\bm{G})_{:,c}
:= \dfrac{\mathrm{sign}(\bm{G}_{:,c}) \odot |\bm{G}_{:,c}|^{q^\ast-1}}{\left\|\bm{G}_{:,c}\right\|_{q^\ast}^{q^\ast-1}},
\]
where $q^\ast = q/(q-1)$ is the dual exponent.
\item[\textnormal{(ii)}] 
(\textbf{Row-wise update, $\|\cdot\|_{p\to\infty}$}) 
\[
\bm{D}^\star = \textrm{rownorm}_p(\bm{G})
\quad \text{and} \quad 
\textrm{rownorm}_p(\bm{G})_{r,:} := \dfrac{\mathrm{sign}(\bm{G}_{r,:}) \odot |\bm{G}_{r,:}|^{p-1}}{\left\|\bm{G}_{r,:}\right\|_p^{p-1}}.
\] 
% \vspace{-0.1in}
% \setlength{\itemsep}{0pt}
% \setlength{\parsep}{0pt}
% \setlength{\parskip}{0pt}
% \item Steepest descent direction under  $\|\cdot\|_{1 \to q}$ norm is given by row normalization \(\textrm{colnorm}_q(\bm{G})\)  where
% \[
% \textrm{colnorm}_q(\bm{G})_{:,c} := \dfrac{\mathrm{sign}(\bm{G}_{:,c}) \odot |\bm{G}_{:,c}|^{q^\ast-1}}{\left\|\bm{G}_{:,c}\right\|_{q^\ast}^{q^\ast-1}},
% \]
% where $q^\ast = q/(q-1)$
% \item Steepest descent direction under the $\|\cdot\|_{p \to \infty}$ norm is given by column normalization \(\textrm{rownorm}_p(\bm{G}),\) where
% \[
%  \textrm{rownorm}_p(\bm{G})_{r,:} := \dfrac{\mathrm{sign}(\bm{G}_{r,:}) \odot |\bm{G}_{r,:}|^{p-1}}{\left\|\bm{G}_{r,:}\right\|_p^{p-1}}.
% \]
\end{itemize}
Here $\odot$ denotes elementwise multiplication.
\begin{comment}
For a matrix $\bm{D}\in\mathbb{R}^{m\times n}$, we use $\bm{D}_{:,c}\in\mathbb{R}^{m}$ to denote its $c$-th column vector (all rows in column $j$), and $\bm{D}_{r,:}\in\mathbb{R}^{n}$ to denote its $r$-th row vector (all columns in row $i$).
\end{comment}
\end{proposition}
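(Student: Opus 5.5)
The plan is to reduce both operator norms to norms of individual columns or rows. Once that is done, the linear objective $\langle \bm{G},\bm{D}\rangle=\sum_{i,j}\bm{G}_{ij}\bm{D}_{ij}$ splits into independent vector problems, each solved by the equality case of Hölder's inequality.

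Sign convention: the displayed formulas maximize $\langle \bm{G},\bm{D}\rangle$ over the unit ball. The minimizer in \eqref{eq:steep_descent} is their negative, because the unit ball is symmetric. I will prove the statement for the maximizer, as in Fact~\ref{prop:signdgasoperator} up to sign.

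\textbf{Step 1: norm identities.} I will first show
\[
\|\bm{D}\|_{1\to q}=\max_c\|\bm{D}_{:,c}\|_q, \qquad \|\bm{D}\|_{p\to\infty}=\max_r\|\bm{D}_{r,:}\|_{p^\ast},
\]
with $1/p+1/p^\ast=1$.
\begin{itemize}
\item For the first identity, the map $\x\mapsto\|\bm{D}\x\|_q$ is convex, so its supremum over the $\ell_1$ unit ball is attained at an extreme point $\pm e_c$. This gives $\|\bm{D}e_c\|_q=\|\bm{D}_{:,c}\|_q$. Equivalently, by the triangle inequality, $\|\bm{D}\x\|_q\le\sum_c|x_c|\,\|\bm{D}_{:,c}\|_q$.
\item For the second identity, $\|\bm{D}\x\|_\infty=\max_r|\langle\bm{D}_{r,:},\x\rangle|$. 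Exchanging the two suprema and using duality of $\ell_p$ and $\ell_{p^\ast}$ gives $\sup_{\|\x\|_p\le1}|\langle\bm{D}_{r,:},\x\rangle|=\|\bm{D}_{r,:}\|_{p^\ast}$.
\end{itemize}

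\textbf{Step 2: decoupling.} By Step 1, the unit ball of $\|\cdot\|_{1\to q}$ is the product of the column constraints $\{\|\bm{D}_{:,c}\|_q\le1\}$. The objective is $\sum_c\langle\bm{G}_{:,c},\bm{D}_{:,c}\rangle$, so the problem separates into $n$ independent problems $\max_{\|\bm{d}\|_q\le1}\langle\bm{g},\bm{d}\rangle$. Likewise, the $\|\cdot\|_{p\to\infty}$ ball is a product of the row constraints $\|\bm{D}_{r,:}\|_{p^\ast}\le1$.

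\textbf{Step 3: vector subproblems.} For the constraint $\|\bm{d}\|_s\le1$, Hölder gives $\langle\bm{g},\bm{d}\rangle\le\|\bm{g}\|_{s^\ast}$. Equality holds for
\[
\bm{d}=\mathrm{sign}(\bm{g})\odot|\bm{g}|^{s^\ast-1}/\|\bm{g}\|_{s^\ast}^{s^\ast-1}.
\]
To verify this, use $(s^\ast-1)s=s^\ast$. Then $\|\bm{d}\|_s^s=\sum|g_i|^{s^\ast}/\|\bm{g}\|_{s^\ast}^{s^\ast}=1$ and $\langle\bm{g},\bm{d}\rangle=\|\bm{g}\|_{s^\ast}$.
\begin{itemize}
\item Taking $s=q$ gives the column formula with exponent $q^\ast-1$.
\item Taking $s=p^\ast$, so $s^\ast=p$, gives the row formula with exponent $p-1$ and normalizer $\|\bm{G}_{r,:}\|_p^{p-1}$.
\end{itemize}

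\textbf{Main obstacle.} The main care point is Step 1, specifically the dual-exponent bookkeeping. The $p\to\infty$ norm constrains rows in $\ell_{p^\ast}$, which is exactly why the row formula ends up involving $p$ itself. The remaining loose ends are routine edge cases:
\begin{itemize}
\item If a column or row of $\bm{G}$ is zero, any feasible vector is optimal, and we adopt the convention $0$.
\item The endpoint exponents $q=1$ (so $q^\ast=\infty$) and $p=1$ are understood as limits, giving sign-type or max-coordinate selections.
\item Uniqueness holds for $1<s<\infty$ by strict convexity of the $\ell_s$ ball.
\end{itemize}
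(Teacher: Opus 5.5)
Your proposal is correct and takes the same route as the paper: column/row norm identities for $\|\cdot\|_{1\to q}$ and $\|\cdot\|_{p\to\infty}$, decoupling of the linear objective, and the Hölder equality case applied to each column or row. There are two small differences: you prove the norm identities directly where the paper cites \citep[Proposition~8]{bernstein2024old}, and you keep the sign consistent --- the displayed formulas are maximizers, so the minimizer in \eqref{eq:steep_descent} is their negative, which matches the minus sign the paper's own proof derives before dropping it.
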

\begin{proof}{Proof of Proposition~\ref{proposition:rowcolnorm}.}
By \citep[Proposition~8]{bernstein2024old}, the operator norms admit the representations
\[
\left\|\bm{D}\right\|_{1\to q} = \max_{c\in [n]} \left\|\bm{D}_{:,c}\right\|_q,
\quad  \text{and} \quad
\left\|\bm{D}\right\|_{p\to \infty} = \max_{r\in [m]} \left\|\bm{D}_{r,:}\right\|_{p^\ast},
\]
where $p^\ast = p/(p-1)$ denotes the dual exponent. As a consequence, the steepest-descent subproblem~\eqref{eq:steep_descent} under the $\|\cdot\|_{1\to q}$ and $\|\cdot\|_{p\to\infty}$ operator norms decouples across columns and rows, respectively.

We focus on the case $\|\cdot\|_{1\to q}$ for illustration; the proof for $\|\cdot\|_{p\to\infty}$ follows by an analogous argument and is therefore omitted. For the $\|\cdot\|_{1\to q}$ case, the steepest-descent subproblem~\eqref{eq:steep_descent} can be written as
\[
\min_{\bm{D}\in\mathbb{R}^{m\times n}}
\ \langle \bm{G}, \bm{D} \rangle
\quad \text{s.t.} \quad
\left\|\bm{D}_{:,c}\right\|_q \le 1,\ \forall c\in[n]. 
\]
 Since the objective decomposes as
$
\langle \bm{G}, \bm{D} \rangle
= \sum_{c=1}^n \langle \bm{G}_{:,c}, \bm{D}_{:,c} \rangle, 
$
the problem decouples across columns. For each column $c\in[n]$, we obtain the subproblem
$
\min_{\|\bm{d}\|_q \le 1} \ \langle \bm{G}_{:,c}, \bm{d} \rangle .
$ 
By Hölder's inequality, an optimal solution is given by
\[
\bm{d}^\star
= -\,\dfrac{\mathrm{sign}(\bm{G}_{:,c}) \odot |\bm{G}_{:,c}|^{q^\ast-1}}{\left\|\bm{G}_{:,c}\right\|_{q^\ast}^{q^\ast-1}},
\]
where $q^\ast = q/(q-1)$ is the dual exponent. Collecting the solutions for all columns yields
$
\bm{D}^\star = \textrm{colnorm}_q(\bm{G}),
$ 
which completes the proof. 
 
\end{proof}

\begin{highlightbox}
\textbf{Message:} Although \textrm{SignSGD/AdamW} is typically formulated under the vector $\ell_\infty$ norm, it can also be interpreted through the lens of a matrix $\ell_1\to\ell_\infty$ norm.  In a similar vein, \textrm{Muon} \citep{muonblog,carlson2015preconditioned} performs steepest descent under the $\ell_2\to\ell_2$ operator norm. More generally, our framework considers the full families of operator norms $\ell_1\to\ell_q$ and $\ell_p\to\ell_\infty$ for arbitrary $p,q\ge 1$, which induce column-wise and row-wise normalization updates, respectively. Several existing methods studied in the literature can be recovered as special cases corresponding to particular choices of $(p,q)$; for instance, \citep{glentis2025minimalist,scetbon2025gradient} focus on the $\ell_1\to\ell_2$ case, while \citep{bernstein2024old,ma2024swan} consider the $\ell_2\to\ell_\infty$ geometry.
This perspective highlights the underlying \textbf{matrix thinking} behind a broad class of first-order optimizers. Notably, the update rules induced by the $\ell_1\to\ell_q$ and $\ell_p\to\ell_\infty$ families admit efficient, vectorized implementations.

    \begin{center}
            \begin{tabular}{c|c|c|c}
         \textrm{SignSGD/AdamW}& \textrm{Column Normalization} & \textrm{Row Normalization}& \textrm{Muon} \\
         \hline
         \hline
         $\|\cdot\|_{1\to\infty }$& $\|\cdot\|_{1\to q}$ & $\|\cdot\|_{p\to\infty }$  & $\|\cdot\|_{2\to2 }$
    \end{tabular}
    \end{center}
%Since \textrm{SignSGD/AdamW} operates in the $\|\cdot\|_{\textrm{MAV} \to \ell_\infty}$ geometry, Adam effectively constrains the features in the $\textrm{MAV}$ norm, thereby enforcing feature sparsity. In contrast, \textrm{TINA} operates under the $\|\cdot\|_{\textrm{RMS} \to \ell_\infty}$ norm, which avoids the feature sparsity induced by $\ell_1$ constraints and aligns with the design  philosophy of \textrm{Muon}. Nevertheless, \textrm{TINA} still acts in the vector space---it remains vector acting.
\end{highlightbox}

% We find that the matrix $p\to q$ operator norms with implementable pre-iteration forms all satisfy $p \le q$. Therefore, we restrict our focus to {the case $p\to q$ with $p \le q$}.

In summary, all matrix $p\to q$ operator norms that admit implementable per-iteration updates fall within the regime $p \le q$. Accordingly, we focus on the case $p\to q$ with $p \le q$ in the remainder of the paper.

% \subsection{\color{blue}{Why We Need the $\pmean\to\qmean$ Norm: Making Operators Play Nice Together}}
\subsection{Width-independent Lipschitz Bound under Mean-Normalized Geometry}
\label{section:playtogher}

Section~\ref{sectionmethod} shows that a broad class of first-order optimizers can be interpreted as steepest-descent methods under matrix $p\to q$ operator norms ($p \le q$) with computable update rules. A natural next question is whether these classical operator-norm geometries are well aligned with the stability properties of deep neural networks. In particular, the chosen matrix norm should reflect the model’s sensitivity to weight perturbations so that the loss remains width-independent Lipschitz continuous \citep{bernstein2020distance,bernstein2024modular,large2024scalable}.  However, despite their computational tractability and clean geometric form, we show that classical $p \to q$ operator norms with $p \le q$ generally fail to yield width-independent Lipschitz bounds for neural networks. As a result, the induced Lipschitz bounds deteriorate with network width under these norms.
This structural limitation motivates a refined geometry. In the remainder of this subsection, we introduce mean-normalized operator norms and show that they do yield width-independent Lipschitz bounds, see Theorem \ref{thm:width-lip} for details.
 
\begin{comment}
\citep{bernstein2020distance,bernstein2024modular,large2024scalable} emphasize that the selected matrix norm should match the model’s sensitivity to weight perturbations so that the loss remains $M$-Lipschitz. Under this view, \citep{bernstein2024modular} choose the matrix operator norms, yet its selection lacks a rigorous justification. However, and somewhat surprisingly, we show in this section that \textbf{standard operator norms fail to yield width-independent $M$-Lipschitz bounds for neural networks}. As a result, wider networks experience increasingly distorted loss landscapes under these norms.
\end{comment}

% To analyze the $M$-Lipschitz property of a neural network under a given matrix operator norm, we view the network as a composition of linear maps and nonlinearities $\sigma$:
To quantify the Lipschitz behavior induced by a given operator norm, we start to analyze the network in a layer-wise manner and track how Lipschitz bounds compose across layers.
We model the network as a composition of linear maps and nonlinear activation functions:
\[
\y_{\ell+1} = \sigma(\z_{\ell+1}), \quad \text{and} \quad 
\qquad \z_{\ell+1} = \bm{W}_{\ell+1} \y_\ell + \b_{\ell+1},
\]
where $\z_{\ell+1}$ denotes the pre-activation.

Each linear layer $\bm{W}_\ell$ maps features from one normed space to the next pre-activation space. When $\bm{W}_\ell$ is endowed with the operator norm $\|\cdot\|_{\mathrm{in}\to\mathrm{out}}$, Lipschitz stability propagates layer by layer. In particular, if $\sigma$ is $1$-Lipschitz, then the sensitivity of $\y_{\ell+1}$ to perturbations in an earlier weight matrix $\bm{W}_i$ ($i<\ell+1$) satisfies
\[
\left\|\nabla_{\bm{W}_i} \y_{l+1} \right\|_{\mathrm{out}}\le\left\|\nabla_{\bm{W}_i}\z_{l+1}\right\|_{\mathrm{out}}=\left\|\bm{W}_{l+1}\nabla_{\bm{W}_i} \y_{l}\right\|_{\mathrm{out}}\le \left\|\bm{W}_{l+1}\right\|_{\mathrm{in}\to\mathrm{out}}\left\|\nabla_{\bm{W}_i} \y_{l}\right\|_{\mathrm{in}},
\] 
where  $\nabla_{\bm{W}_i} \y_\ell$ denotes the Jacobian of $\y_\ell$ with respect to $\bm{W}_i$. Similarly, by the definition of the induced operator norm of $W_{\ell+2}$, we have
$\|\nabla_{\bm{W}_i} \y_{l+2}\|_{\mathrm{out}}
\;\le\;
\|\bm{W}_{l+2}\|_{\mathrm{in}\to\mathrm{out}}\,
\|\nabla_{\bm{W}_i} \y_{l+1}\|_{\mathrm{in}}.$  
To compose this bound with the previous layer-wise estimate, the output norm of layer $\ell+1$ must be compatible with the input norm of layer $\ell+2$. That is, we impose the following norm-compatibility condition between consecutive layers:
\begin{equation}\label{cond:norm-compat}
\left\|\cdot\right\|_{\mathrm{in}} \le \left\|\cdot\right\|_{\mathrm{out}}.
\end{equation}
Under the compatibility condition \eqref{cond:norm-compat}, the layer-wise bounds compose and yield
\[
\left\|\nabla_{\bm{W}_i} \y_{\ell+2}\right\|_{\mathrm{in}}
\le
\left\|\bm{W}_{\ell+2}\right\|_{\mathrm{in}\to\mathrm{out}}
\left\|\nabla_{\bm{W}_i} \y_{\ell+1}\right\|_{\mathrm{out}}
\le
\left(\prod_{k=i+1}^{\ell+2} \left\|\bm{W}_k\right\|_{\mathrm{in}\to\mathrm{out}}\right)
\left\|\nabla_{\bm{W}_i} \y_i\right\|_{\mathrm{in}}.
\] 

From the layer-wise bounds derived above, a width-independent cross-layer stability estimate based on operator norms is valid only if the input and output norms across consecutive layers satisfy the compatibility condition \eqref{cond:norm-compat}. This ensures that the sensitivity bounds can be composed across layers without distortion and therefore imposes a structural constraint on the admissible operator-norm geometries. However, classical matrix-induced norm families—such as the $1\to q$ and $p\to\infty$ cases that admit closed-form steepest-descent directions—do not meet this compatibility requirement in a dimension-independent manner. The difficulty is that the underlying feature norms are not uniformly comparable across layers. Indeed, $\ell_p$ norms are not uniformly equivalent: for $\x=(1,\dots,1)\in\mathbb{R}^n$ and any $p> 1$,
$
\|\x\|_\infty = 1
\;<\;
\|\x\|_{p} = n^{1/p}
\;<\;
\|\x\|_{1} = n,
$
so the norm ratios grow with dimension. 

To remove this dimension dependence, we introduce a mean-normalized variant of the $p$-power norm and work with the $\pmean\to\qmean$  operator-norm geometry instead of the standard $p \to q$ setting. Formally, for $\x\in\mathbb{R}^n$ and $1 \le p < \infty$, we define the $(p,\mathrm{mean})$ norm by 
\begin{equation}
\label{eq:mean-norm}
\left\|\x\right\|_{\pmean}
:=
\left(\frac{1}{n}\sum_{i=1}^n |x_i|^p\right)^{1/p}. 
\end{equation}
% and set $\|\x\|_{(\infty,\mathrm{mean})} := \|
% \x\|_\infty$.
This family recovers several familiar quantities as special cases: 
$\|\cdot\|_{(1,\mathrm{mean})}$ equals the mean absolute value (\textrm{MAV}), 
$\|\cdot\|_{(2,\mathrm{mean})}$ equals the root mean square (\textrm{RMS}), 
and $\|\cdot\|_{(\infty,\mathrm{mean})}$ reduces to the maximum norm, i.e., $\|\x\|_\infty = \max_i |x_i|$.
This normalization rescales the classical $\ell_p$ norm by the width factor and removes dimension-dependent growth. The key property of the $(p,\mathrm{mean})$ norms is that they remain uniformly comparable across widths.

\begin{fact}[Monotonicity of $(p,\mathrm{mean})$ norms]
\label{fact:pmean-monotone}
Let $\x \in \mathbb{R}^n$ and $1 \le p < q \le \infty$. Then
\[
\left\|\x\right\|_{(p,\mathrm{mean})}
\le
\left\|\x\right\|_{(q,\mathrm{mean})}.
\]
\end{fact}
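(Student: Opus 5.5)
The natural route is to view $\|\x\|_{(p,\mathrm{mean})}$ as an $L^p$ norm on a probability space and apply Jensen's inequality (equivalently, Hölder's inequality with the uniform measure). Let $\mu$ be the uniform probability measure on $\{1,\dots,n\}$, so that $\|\x\|_{(p,\mathrm{mean})} = \left(\mathbb{E}_{i\sim\mu}|x_i|^p\right)^{1/p}$. The claim is then the classical monotonicity of $L^p(\mu)$ norms on a space of total mass one. I will treat the finite case $q<\infty$ first and handle $q=\infty$ separately.

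For $1\le p<q<\infty$, set $r := q/p > 1$. The function $\phi(t)=t^{r}$ is convex on $[0,\infty)$. Applying Jensen's inequality to the nonnegative random variable $|x_i|^p$ under $\mu$ gives
\[
\left(\frac{1}{n}\sum_{i=1}^n |x_i|^p\right)^{q/p} \le \frac{1}{n}\sum_{i=1}^n |x_i|^{q}.
\]
Raising both sides to the power $1/q>0$, which is monotone, yields $\|\x\|_{(p,\mathrm{mean})}\le\|\x\|_{(q,\mathrm{mean})}$.

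An equivalent check uses Hölder's inequality with exponents $r$ and $r^\ast = r/(r-1)$ applied to $\sum_i |x_i|^p\cdot 1$:
\[
\sum_i |x_i|^p \le \Big(\sum_i |x_i|^q\Big)^{p/q} n^{1-p/q}.
\]
Dividing by $n$ and taking the $1/p$-th power gives the same inequality. This also shows exactly how the factor $n^{-1/p}$ cancels the dimensional constant $n^{1/p-1/q}$ in the standard $\ell_p$ comparison.

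For $q=\infty$, with the convention $\|\x\|_{(\infty,\mathrm{mean})}=\|\x\|_\infty$, the bound is immediate: $|x_i|^p\le\|\x\|_\infty^p$ for every $i$, so the average is at most $\|\x\|_\infty^p$, and taking the $1/p$-th root finishes the case.

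There is no real obstacle here. The only points requiring care are the convention at $q=\infty$ and checking that the exponent $q/p$ exceeds one, so that Jensen's inequality applies in the correct direction.
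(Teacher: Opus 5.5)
Your proof is correct and is essentially the paper's argument. The paper simply invokes the generalized (power) mean inequality with $a_i=|x_i|$, and your Jensen step with $t\mapsto t^{q/p}$ is the standard proof of that inequality; your explicit treatment of the case $q=\infty$, which the paper leaves implicit, is a small but welcome addition.
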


\begin{proof}{Proof of Fact~\ref{fact:pmean-monotone}.}
The claim follows from the generalized mean inequality: For any $a_i \ge 0$ and $1 \le p \leq q$,
\[
\left(\frac{1}{n}\sum_{i=1}^n a_i^p\right)^{1/p}
\le
\left(\frac{1}{n}\sum_{i=1}^n a_i^q\right)^{1/q}.
\]
Apply this inequality with $a_i = |x_i|$ yields the result.
 
\end{proof}  

% When we consider the weights of the network matrix $W_l$ in the $(1,\textrm{mean})\to\qmean$ or $\pmean\to\ell_\infty$ geometry, the norms play well together since $\|\cdot\|_{(1,\textrm{mean})}\le\|\cdot\|_{\qmean}$ ,$\|\cdot\|_{\pmean}\le\|\cdot\|_\infty$ and lead to the following stability bound for neural network: 

We now establish a width-independent Lipschitz bound for neural networks under the general $\pmean \to \qmean$ operator-norm geometry.
\begin{figure}
    \centering
    \includegraphics[width=0.9\linewidth]{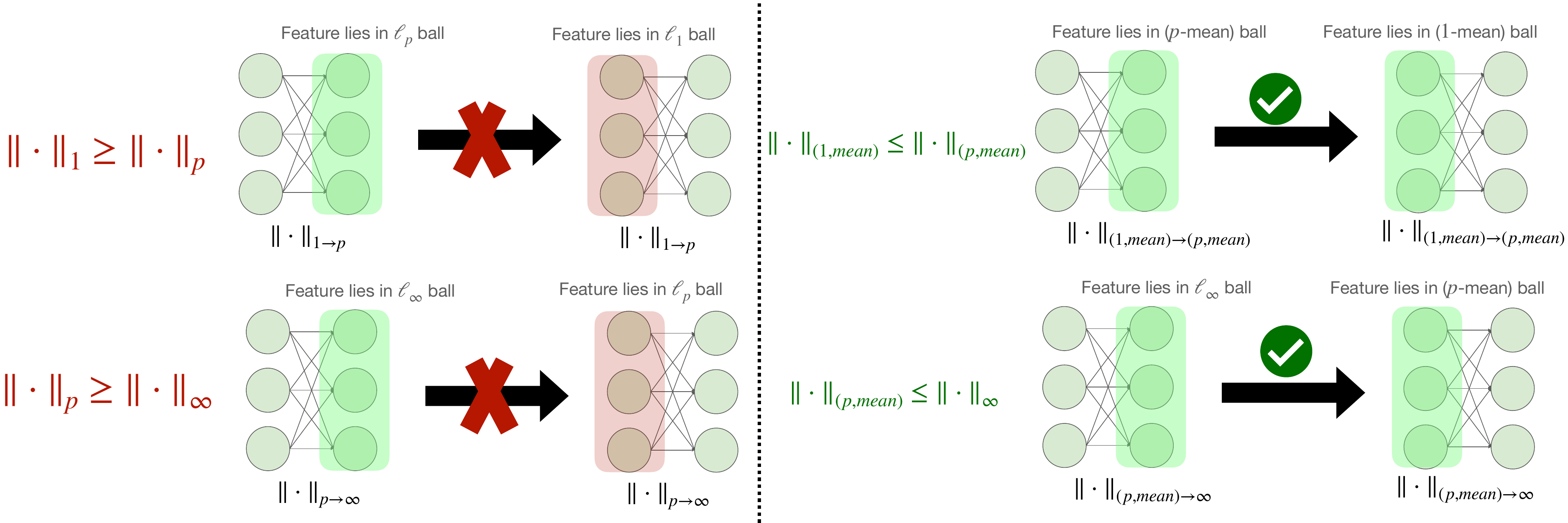}
    \caption{\textbf{Operators Should Play Nice Together.}  Chaining layer-wise stability bounds under $\|\cdot\|_{p \to q}$ requires $\|\cdot\|_p \le \|\cdot\|_q$. This fails for classical $p\to q$ norms when $p\leq q$ but holds for $(p,\textrm{mean}) \to (q,\textrm{mean})$ norms, yielding dimension-independent bounds. 
    % \Jiajin{Maybe we don't need this figure.}
    }
    \label{fig:placeholder}
\end{figure}
\begin{theorem}[Width-independent Lipschitz bound under mean-normalized geometry]
\label{thm:width-lip}
Consider a $K$-layer neural network defined in Definition \ref{def:nn} with activation function $\sigma$ and loss $\mathcal{L}$ satisfying 
Assumptions~\ref{assump:boundedderivativeactivation}--\ref{assump:boundedderivativeloss}. Let $1 \le p \le q < \infty$, and suppose the input satisfies $\|\x\|_2 \le C$ for some $C>1$.
Define the parameter set as 
$
\Omega_C := \{{\bm{\Theta}}:
\|{\bm{\Theta}}\|_{\mathrm{block}} \le C
\},$
where the block norm $\|{\bm{\Theta}}\|_{\mathrm{block}}$ is defined as  
\begin{equation}
\label{eq:block_norm}
 \max\left\{
\left\|\bm{W}_1\right\|_{1\to(q,\mathrm{mean})},
\max_{2\le i\le K-1}\left\|\bm{W}_i\right\|_{(p,\mathrm{mean})\to(q,\mathrm{mean})},
\left\|\bm{W}_K\right\|_{(p,\mathrm{mean})\to\infty},
\max_i \left\|\b_i\right\|_\infty
\right\}.
\end{equation}
Then the loss function is Lipschitz continuous on $\Omega_C$, i.e., 
\[
|f({\bm{\Theta}})-f({\bm{\Theta}}^\prime)|
\le
M_{\mathrm{net}}\,
\left\|{\bm{\Theta}}-{\bm{\Theta}}^\prime\right\|_{\mathrm{block}}
\quad
\forall\, {\bm{\Theta}}, {\bm{\Theta}}^\prime\in\Omega_C,
\]
% % Suppose there exists a constant $C>1$ such that the input, weights, and biases satisfy
% % \[
% % \max\!\left\{
% % \left\|\x\right\|_2,\;
% % \left\|\bm{W}_1\right\|_{1\to(q,\mathrm{mean})},\;
% % \max_{2\le i\le K-1}\left\|\bm{W}_i\right\|_{(p,\mathrm{mean})\to(q,\mathrm{mean})},\;
% % \left\|\bm{W}_K\right\|_{(p,\mathrm{mean})\to\infty},\;
% % \max_i \left\|\b_i\right\|_\infty
% % \right\}
% % \le C.
% % \]
% % Define the block norm as 
% % \begin{align*}
% % \left\|(\bm{W}_{1:K},\b_{1:K})\right\|_{\mathrm{block}}
% % :=
% % \max\!\left\{
% % \left\|\bm{W}_1\right\|_{1\to(q,\mathrm{mean})},\;
% % \max_{2\le i\le K-1}\left\|\bm{W}_i\right\|_{(p,\mathrm{mean})\to(q,\mathrm{mean})},\;
% % \left\|\bm{W}_K\right\|_{(p,\mathrm{mean})\to\infty},\;
% % \max_{1\le i\le K}\left\|\b_i\right\|_\infty
% % \right\}.
% % \end{align*}
% Then for any two parameter tuples $(\bm{W}_{1:K}^1,\b_{1:K}^1)$, $(\bm{W}_{1:K}^2,\b_{1:K}^2)$  in the above set, we have the width-independent Lipschitz estimate
% \[
% \big|f(\bm{W}_{1:K}^1,\b_{1:K}^1)-f(\bm{W}_{1:K}^2,\b_{1:K}^2)\big|
% \;\le\;
% M_{\mathrm{net}}\;
% \left\|(\bm{W}_{1:K}^1-\bm{W}_{1:K}^2,\ \b_{1:K}^1-\b_{1:K}^2)\right\|_{\mathrm{block}},
% \]
where $M_{\mathrm{net}}>0$ depends only on $C$, $K$, and the constants in Assumptions~\ref{assump:boundedderivativeactivation}--\ref{assump:boundedderivativeloss}, and is independent of all layer widths.
\end{theorem}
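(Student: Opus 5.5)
Since $\Omega_C$ is convex (a ball of the block norm), I would use the mean value theorem along the segment $\bm{\Theta}_s=\bm{\Theta}'+s(\bm{\Theta}-\bm{\Theta}')$. This gives $|f(\bm{\Theta})-f(\bm{\Theta}')|\le\sup_{s}|\nabla f(\bm{\Theta}_s)[\bm{\Delta}]|$ with $\bm{\Delta}=\bm{\Theta}-\bm{\Theta}'$. It then suffices to bound the directional derivative at any point of $\Omega_C$ by $M_{\mathrm{net}}\|\bm{\Delta}\|_{\mathrm{block}}$. By the chain rule, $\nabla f[\bm{\Delta}]=\mathcal{L}'(\y_K)\,\dot{\y}_K$, where $\dot{\y}_i$ is the directional derivative of the hidden state. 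Since $|\mathcal{L}'|\le L_J$, the task reduces to bounding $|\dot{\y}_K|$.

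\textbf{Step 1 (forward bounds).} I would show by induction that the hidden states are bounded uniformly in width. For the first layer, $\|\bm{W}_1\x\|_{(q,\mathrm{mean})}\le C\|\x\|_1\le C\sqrt d\,\|\x\|_2$; the input dimension $d$ is fixed, not a width. With the bias, $\|\z_1\|_{(q,\mathrm{mean})}\le C^2\sqrt d+C$, using $\|\b\|_{(q,\mathrm{mean})}\le\|\b\|_\infty$. For the activation, $|\sigma(z)|\le|\sigma(0)|+L_\sigma|z|$, so $\|\y_i\|_{(q,\mathrm{mean})}\le|\sigma(0)|+L_\sigma\|\z_i\|_{(q,\mathrm{mean})}$, because constant vectors have $(q,\mathrm{mean})$ norm equal to the constant. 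To go from layer $i$ to layer $i+1$, Fact~\ref{fact:pmean-monotone} gives $\|\y_i\|_{(p,\mathrm{mean})}\le\|\y_i\|_{(q,\mathrm{mean})}$, and then $\|\z_{i+1}\|_{(q,\mathrm{mean})}\le C\|\y_i\|_{(p,\mathrm{mean})}+C$. This yields constants $B_i$ depending only on $C,K,L_\sigma,|\sigma(0)|$ (and $d$).

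\textbf{Step 2 (perturbation recursion).} Differentiating gives $\dot{\z}_i=\Delta\bm{W}_i\y_{i-1}+\bm{W}_i\dot{\y}_{i-1}+\Delta\b_i$ and $\dot{\y}_i=\sigma'(\z_i)\odot\dot{\z}_i$. The entrywise bound $|\sigma'|\le L_\sigma$ preserves any $(q,\mathrm{mean})$ norm up to the factor $L_\sigma$. For $2\le i\le K-1$,
\[
\|\dot{\z}_i\|_{(q,\mathrm{mean})}\le\|\bm{\Delta}\|_{\mathrm{block}}\,B_{i-1}+C\,\|\dot{\y}_{i-1}\|_{(q,\mathrm{mean})}+\|\bm{\Delta}\|_{\mathrm{block}}.
\]
Here Fact~\ref{fact:pmean-monotone} is used again, twice, to pass from the $(q,\mathrm{mean})$ norm of $\y_{i-1}$ and $\dot{\y}_{i-1}$ to the $(p,\mathrm{mean})$ input norm of the operators. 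The base case uses $\dot{\y}_0=0$ and the $1\to(q,\mathrm{mean})$ bound, as in Step 1. The last layer uses $\|\cdot\|_{(p,\mathrm{mean})\to\infty}$ on the scalar output. Unrolling the linear recursion gives $|\dot{\y}_K|\le A\,\|\bm{\Delta}\|_{\mathrm{block}}$, where $A$ is a polynomial in $C,L_\sigma,B_i$ of degree depending on $K$. Setting $M_{\mathrm{net}}=L_J A$ completes the proof.

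\textbf{Main obstacle.} The delicate point is making sure no width factor sneaks in. This could happen in three places: the bias terms, the activation offset $\sigma(0)$, and the norm changes between layers. The bias terms are harmless because $(q,\mathrm{mean})$ norms are dominated by $\ell_\infty$. The activation offset is harmless because mean norms of constant vectors are width-free. The norm changes are exactly where Fact~\ref{fact:pmean-monotone} replaces the width-dependent identity-map constant $\|\bm{I}\|_{q\to p}$. The first layer needs separate care, since its input norm is $\ell_1$ and only the fixed input dimension $d$ enters; I would absorb $\sqrt d$ (or assume it is part of "$C$") into the constant.
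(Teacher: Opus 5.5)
Your proposal is correct and follows essentially the same route as the paper. Both reduce to a uniform directional-derivative bound via the mean value theorem (the paper's Lemma~\ref{lem:Mlip-set}), bound the hidden states in $(q,\mathrm{mean})$ by induction, and propagate the perturbation layer by layer, using Fact~\ref{fact:pmean-monotone} to turn $(q,\mathrm{mean})$ outputs into $(p,\mathrm{mean})$ inputs. The differences are cosmetic or in your favor: you run a single recursion for the total derivative $\dot{\y}_i$ rather than splitting into per-block terms $\bm{\Lambda}^{\bm W}_{i,j},\bm{\Lambda}^{\b}_{i,j}$, and you explicitly carry the $|\sigma(0)|$ offset and the $\sqrt d$ from $\|\x\|_1\le\sqrt d\,\|\x\|_2$, which the paper suppresses by tacitly using $\sigma(0)=0$ and writing $\|\x\|_1\lesssim C$.
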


\begin{remark}[Why the bounded parameter set is natural]
The restriction ${\bm{\Theta}} \in \Omega_C$ is natural in practice, as standard training procedures implicitly control parameter norms. Under the standard decoupled weight-decay update, 
$
{\bm{\Theta}}^{t+1} = (1-\eta\lambda){\bm{\Theta}}^t + \eta \bm{D}^t,
$ 
where $\eta>0$, $0<\lambda<1$ and $\|\bm{D}^t\|\le 1$, we have $
\|{\bm{\Theta}}^{t+1}\| \le (1-\eta\lambda)\|{\bm{\Theta}}^t\| + \eta.
$ Unrolling the recursion gives $
\|{\bm{\Theta}}^t\| \le (1-\eta\lambda)^t \|{\bm{\Theta}}^0\| + 1/\lambda,
$ and hence $\sup_t \|{\bm{\Theta}}^t\| \lesssim  O(1/\lambda)$. Thus, weight decay together with bounded update directions keeps the parameters in a uniformly bounded ball. Moreover, several modern optimizers admit equivalent formulations as optimization under explicit norm constraints
\citep{chen2023lion,chen2025muon,pethick2025training}, further supporting the bounded-parameter assumption.
\end{remark}

\begin{comment}
\begin{remark}
To obtain meaningful Lipschitz estimates for a neural network, one must impose
norm constraints on the weights.  Under the decoupled weight-decay update $\x_{t+1} = (1-\eta\lambda)\x_t + \eta\bD_t$, 
where $\eta>0$ is the learning rate, $\lambda>0$ the weight-decay coefficient, and 
$\|\bD_t\|\le 1$, the parameter norm satisfies the recursion 
$\|\x_{t+1}\|\le (1-\eta\lambda)\|\x_t\|+\eta$. Unrolling this yields 
$\|\x_t\|\le (1-\eta\lambda)^t\|\x_0\| + \eta/(\eta\lambda) = (1-\eta\lambda)^t\|\x_0\| + 1/\lambda$, 
and therefore $\sup_{t\ge 0}\|\x_t\|\lesssim 1/\lambda$. Thus, combining the contraction 
$(1-\eta\lambda)\x_t$ with bounded directions $\bD_t$ keeps the parameters confined 
to a ball of radius $1/\lambda$. Thus, the combination of contraction $(1-\lambda)\x_t$ and bounded directions $\bD_t$
keeps the weights confined to a ball of radius $O(1/\lambda)$, providing a 
uniform norm bound that is crucial for Lipschitz control. This mechanism implicitly enforces a per-iteration 
norm constraint on the parameters. At the same time, recent works \citep{chen2023lion,chen2025muon,pethick2025training}
show that such an update rule is equivalent to minimizing the loss under explicit 
norm constraints. Taken together, modern language-model optimizers implicitly 
control parameter norms, which are essential both for obtaining width-robust 
Lipschitz estimates and for stabilizing neural network training.
\end{remark}
\end{comment}

We now turn to the proof of Theorem~\ref{thm:width-lip}. 

\begin{proof}{Proof of Theorem~\ref{thm:width-lip}.}
We prove the Lipschitz continuity of $f({\bm{\Theta}})=\mathcal L(y_K({\bm{\Theta}};x))$ on $\Omega_C$
with respect to $\|\cdot\|_{\mathrm{block}}$ by invoking Lemma~\ref{lem:Mlip-set}.
It suffices to show the following \emph{uniform directional-derivative bound}:
\begin{equation}\label{eq:goal-dir-bound}
\sup_{\left\|\Delta{\bm{\Theta}}\right\|_{\mathrm{block}}\le 1}\big|\nabla f({\bm{\Theta}})[\Delta{\bm{\Theta}}]\big|
\;\le\; M_{\mathrm{net}},
\qquad \forall\,{\bm{\Theta}}\in\Omega_C,
\end{equation}
for some constant $M_{\mathrm{net}}$ that does not depend on the width $w$.
% Fix any ${\bm{\Theta}}\in\Omega_C$ and any direction $\Delta{\bm{\Theta}}$ with
% $\|\Delta{\bm{\Theta}}\|_{\mathrm{block}}\le 1$, i.e.,
% \[
% \left\|\bm{W}_1\right\|_{1\to(q,\mathrm{mean})}\leq 1,
% \max_{2\le i\le K-1}\left\|\bm{W}_i\right\|_{(p,\mathrm{mean})\to(q,\mathrm{mean})}\leq 1,
% \left\|\bm{W}_K\right\|_{(p,\mathrm{mean})\to\infty}\leq 1,
% \max_i \left\|\b_i\right\|_\infty \leq 1. 
% \]
By the chain rule and Assumption~\ref{assump:boundedderivativeloss}, we have 
$
|\nabla f({\bm{\Theta}})[\Delta{\bm{\Theta}}]|
\le L_J\,|\nabla \y_K({\bm{\Theta}})[\Delta{\bm{\Theta}}]|.
$
Therefore, to prove \eqref{eq:goal-dir-bound}, it is enough to bound
$|\nabla y_K({\bm{\Theta}})[\Delta{\bm{\Theta}}]|$ uniformly over ${\bm{\Theta}}\in\Omega_C$ and
$\|\Delta{\bm{\Theta}}\|_{\mathrm{block}}\le 1$.
We decompose
\[
\nabla \y_K({\bm{\Theta}})[\Delta{\bm{\Theta}}]
=
\sum_{j=1}^K \nabla_{\bm{W}_j}\y_K({\bm{\Theta}})[\Delta \bm{W}_j]
+\sum_{j=1}^K \nabla_{\b_j}\y_K({\bm{\Theta}})[\Delta \b_j].
\]
Then, it suffices to establish width-independent bounds on each term
$\nabla_{\bm{W}_j}\y_K[\Delta \bm{W}_j]$ and $\nabla_{\b_j}\y_K[\Delta \b_j]$.

We first establish a uniform bound on the intermediate hidden states
$y_i({\bm{\Theta}})$ for $i=1,\dots,K-1$.
Since $\sigma(0)=0$ and $|\sigma'(z)|\le L_\sigma$, the mean-value theorem implies
$|\sigma(z)|\le L_\sigma |z|$ for all $z\in\mathbb{R}$. Applying this componentwise yields
\begin{equation}\label{eq:sigma-lip}
\left\|\sigma(\x)\right\|_{(q,\textrm{mean})} \le L_\sigma \left\|\x\right\|_{(q,\textrm{mean})},\forall\,\x\in\mathbb{R}^d \text{ and } q\ge 1.
\end{equation}
Fix any ${\bm{\Theta}}\in\Omega_C$ and $\|\x\|_2\le C$. Using \eqref{eq:sigma-lip} and the definition of 
$\|{\bm{\Theta}}\|_{\mathrm{block}}\le C$, we have
\[
\left\|\y_1\right\|_{(q,\mathrm{mean})}
=\left\|\sigma(\bm{W}_1\x+\b_1)\right\|_{(q,\mathrm{mean})}
\le L_\sigma\left(\left\|\bm{W}_1\right\|_{1\to(q,\mathrm{mean})}\left\|\x\right\|_1+\left\|\b_1\right\|_\infty\right)
\le 2L_\sigma C^2.
\]
For $2\le i\le K-1$, thanks to Fact \ref{fact:pmean-monotone} and $\|{\bm{\Theta}}\|_{\textrm{block}}\leq C$, we obtain
\begin{align}
\left\|\y_i\right\|_{(q,\mathrm{mean})}
&=\left\|\sigma(\bm{W}_i\y_{i-1}+\b_i)\right\|_{(q,\mathrm{mean})}
\le L_\sigma\left(\left\|\bm{W}_i\right\|_{(p,\mathrm{mean})\to(q,\mathrm{mean})}\left\|\y_{i-1}\right\|_{(p,\mathrm{mean})}+\left\|\b_i\right\|_\infty\right)\nonumber\\
&\le L_\sigma(C\left\|\y_{i-1}\right\|_{(q,\mathrm{mean})}+C)
\le 2L_\sigma C\max\{1,\left\|\y_{i-1}\right\|_{(q,\mathrm{mean})}\}\leq (2L_\sigma C)^iC,
% ,\qquad i=1,\dots,K-1. 
\label{eq:yi-rec}
\end{align}
% Iterating \eqref{eq:yi-rec} gives the crude bound
% \begin{equation}\label{eq:yi-bound}
% \left\|\y_i\right\|_{(q,\mathrm{mean})}\le (2L_\sigma C)^i\,C,\qquad i=1,\dots,K-1,
% \end{equation}
which is independent of the width $w$.

We now bound the directional derivative of $\y_K$ with respect to each parameter block. Fix any direction $\Delta{\bm{\Theta}}=(\Delta \bm{W}_{1:K},\Delta \b_{1:K})$ with
$\|\Delta{\bm{\Theta}}\|_{\mathrm{block}}\le 1$.
For each $i=1,\dots,K$, let
 $\bm{\Sigma}_i:=\mathrm{diag}(\sigma'(\z_i))$, so that
$
\|\bm{\Sigma}_i\|_{\infty}\le L_\sigma, i=1,\dots,K.
$
For $1\le j\le i\le K$, we define
\[
\bm{\Lambda}^{\bm{W}}_{i,j}:=\nabla_{\bm{W}_j}\y_i[\Delta \bm{W}_j] \quad \text{and} \quad
\bm{\Lambda}^{\b}_{i,j}:=\nabla_{\b_j}\y_i[\Delta \b_j].
\]
By the chain rule (essentially the backpropogation rule), for each $i=1,\dots,K$ and $j=1,\dots,i$, we have 
\begin{equation}
\label{eq:first_deri}
\begin{aligned}
\bm{\Lambda}^{\bm{W}}_{i,j}
&=
\begin{cases}
\bm{\Sigma}_i(\Delta \bm{W}_j \y_{i-1}), & \mathclap{j=i}\\
\bm{\Sigma}_i \bm{W}_i \bm{\Lambda}^{\bm{W}}_{i-1,j}, & \mathclap{j<i}
\end{cases}\quad \text{ and } \,\,
\bm{\Lambda}^{\b}_{i,j}
=
\begin{cases}
\bm{\Sigma}_i \Delta \b_j, & \mathclap{j=i}\\
\bm{\Sigma}_i\bm{W}_i \bm{\Lambda}^{\b}_{i-1,j}, & \mathclap{j<i}
\end{cases}
\end{aligned}.  
\end{equation}
We now bound these quantities case by case. 

% \medskip
\noindent\textbf{Case 1: $j=i$.} For $2\le i\le K-1$, using Fact~\ref{fact:pmean-monotone},
% $\|\Delta \bm{W}_i\|_{(p,\mathrm{mean})\to(q,\mathrm{mean})}\le 1$, 
and the bound \eqref{eq:yi-rec}, we obtain
\begin{align}
\left\|\bm{\Lambda}^{\bm{W}}_{i,i}\right\|_{(p,\mathrm{mean})}
&\le \left\|\bm{\Lambda}^{\bm{W}}_{i,i}\right\|_{(q,\mathrm{mean})}
= \left\|\bm{\Sigma}_i(\Delta \bm{W}_i \y_{i-1})\right\|_{(q,\mathrm{mean})} \notag\\
&\le \left\|\bm{\Sigma}_i\right\|_{\infty\to\infty}\left\|\Delta \bm{W}_i \y_{i-1}\right\|_{(q,\mathrm{mean})}
\le L_\sigma \left\|\Delta \bm{W}_i\right\|_{(p,\mathrm{mean})\to(q,\mathrm{mean})}\left\|\y_{i-1}\right\|_{(p,\mathrm{mean})}\notag\\
&\le L_\sigma \left\|\y_{i-1}\right\|_{(q,\mathrm{mean})}
\le L_\sigma (2L_\sigma C)^{i-1}C,\notag \text{ and } \\
\left \|\bm{\Lambda}^{\b}_{i,i}\right\|_{(p,\mathrm{mean})}
& \le \left\|\bm{\Lambda}^{\b}_{i,i}\right\|_\infty
= \left\|\bm{\Sigma}_i\Delta \b_i\right\|_\infty
\le \left\|\bm{\Sigma}_i\right\|_{\infty}\left\|\Delta \b_i\right\|_\infty
\le L_\sigma\notag. 
% \label{eq:case1-W-mid}
\end{align}
For $i=1$, we have $\bm{\Lambda}^{\bm{W}}_{1,1}=\bm{\Sigma}_1(\Delta \bm{W}_1 \x)$ and thus
\begin{equation*}
% \label{eq:case1-W-1}
\left\|\bm{\Lambda}^{\bm{W}}_{1,1}\right\|_{(q,\mathrm{mean})}
\le L_\sigma \left\|\Delta \bm{W}_1\right\|_{1\to(q,\mathrm{mean})}\left\|\x\right\|_1
\le L_\sigma C,
\end{equation*}
where the last inequality uses $\|\Delta \bm{W}_1\|_{1\to(q,\mathrm{mean})}\le 1$ and $\|\x\|_1\lesssim C$.
For the last layer $i=K$, since $y_K\in\mathbb{R}$, we have
\begin{align*}
% \label{eq:case1-W-K}
\left|\bm{\Lambda}^{\bm{W}}_{K,K}\right|
& =\left|\bm{\Sigma}_K(\Delta \bm{W}_K \y_{K-1})\right|\le L_\sigma \left\|\Delta \bm{W}_K\right\|_{(p,\mathrm{mean})\to\infty}\left\|\y_{K-1}\right\|_{(p,\mathrm{mean})} \\ 
& \le L_\sigma \left\|\y_{K-1}\right\|_{(q,\mathrm{mean})} \leq  2^{K-1}L_\sigma^KC^{K}. 
\end{align*}
Finally, $|\bm{\Lambda}^{\b}_{1,1}|\le L_\sigma$ and $|\bm{\Lambda}^{\b}_{K,K}|\le L_\sigma$ follow similarly.

\noindent\textbf{Case 2: $j<i$ and $i\neq K$.} 
For $j<i\le K-1$, we have 
\begin{align}
\left\|\bm{\Lambda}^{\bm{W}}_{i,j}\right\|_{(p,\mathrm{mean})}
&\le \left\|\bm{\Lambda}^{\bm{W}}_{i,j}\right\|_{(q,\mathrm{mean})}
= \left\|\bm{\Sigma}_i \bm{W}_i \bm{\Lambda}^{\bm{W}}_{i-1,j}\right\|_{(q,\mathrm{mean})}
\le \left\|\bm{\Sigma}_i\right\|_{\infty\to\infty}\left\|\bm{W}_i \bm{\Lambda}^{\bm{W}}_{i-1,j}\right\|_{(q,\mathrm{mean})}\nonumber\\
&\le L_\sigma \left\|\bm{W}_i\right\|_{(p,\mathrm{mean})\to(q,\mathrm{mean})}\left\|\bm{\Lambda}^{\bm{W}}_{i-1,j}\right\|_{(p,\mathrm{mean})}
\le L_\sigma C\left\|\bm{\Lambda}^{\bm{W}}_{i-1,j}\right\|_{(p,\mathrm{mean})} \\ 
& \le (L_\sigma C)^{i-j}\left\|\bm{\Lambda}^{\bm{W}}_{j,j}\right\|_{(p,\mathrm{mean})}. 
\label{eq:case2-prop-W}
\end{align}
Combining with the result of Case 1 yields a width-independent bound on $\|\bm{\Lambda}^{\bm{W}}_{i,j}\|$. Similarly, we have 
\begin{equation}
\label{eq:b_first_order}
\left\|\bm{\Lambda}^{\b}_{i,j}\right\|_{(p,\mathrm{mean})}
\le (L_\sigma C)^{i-j}\left\|\bm{\Lambda}^{\b}_{j,j}\right\|_{(p,\mathrm{mean})}
\le (L_\sigma C)^{i-j} L_\sigma,
\end{equation}
which is also width-independent.

\noindent\textbf{Case 3: $j<i=K$.}
In this case $y_K\in\mathbb{R}$.  We have 
\begin{align*}
\left|\bm{\Lambda}^{
\bm{W}}_{K,j}\right|
&= \left|\bm{\Sigma}_K \bm{W}_K \bm{\Lambda}^{\bm{W}}_{K-1,j}\right|
\le L_\sigma \left\|W_K\bm{\Lambda}^{W}_{K-1,j}\right\|_\infty
\le L_\sigma \left\|\bm{W}_K\right\|_{(p,\mathrm{mean})\to\infty}\left\|\bm{\Lambda}^{\bm{W}}_{K-1,j}\right\|_{(p,\mathrm{mean})}\nonumber\\
&\le L_\sigma C \left\|\bm{\Lambda}^{W}_{K-1,j}\right\|_{(p,\mathrm{mean})}, \text{ and }\label{eq:case3-W}\\
\left|\bm{\Lambda}^{\b}_{K,j}\right|
& \le L_\sigma C\left\|\bm{\Lambda}^{\b}_{K-1,j}\right\|_{(p,\mathrm{mean})}.  
\end{align*}
Combining the results from Case 1 and 2 yields the width independent bounds. 

By Cases~1--3, there exists a constant $C_\star=C_\star(K,L_\sigma,C)$ independent of the width $w$
such that for all ${\bm{\Theta}}\in\Omega_C$ and all directions $\|\Delta{\bm{\Theta}}\|_{\mathrm{block}}\le 1$,
\[
\left|\nabla_{W_j}y_K({\bm{\Theta}})[\Delta W_j]\right|\le C_\star,
\quad \text{and} \quad 
\left|\nabla_{b_j}y_K({\bm{\Theta}})[\Delta b_j]\right|\le C_\star,
\qquad j=1,\ldots,K.
\]
We conclude our proof.    
\end{proof}

\subsection{Width-independent Smootheness Bound under Mean-Normalized Geometry}
\label{subsection:selectpq}

While \citep{bernstein2024modular,large2024scalable} suggest choosing norms that reflect the output’s sensitivity to input and weight perturbations, Lipschitz continuity alone provides limited structural information and may not adequately capture the dynamics of deep learning optimization. 
In this subsection, we further consider how the gradient itself varies along the optimization trajectory, which characterizes the local oscillation of the landscape. 
This property is known as $L$-smoothness, or gradient Lipschitz continuity, in optimization. 
Specifically, it requires that the gradient changes in a controlled manner, meaning that it does not vary too rapidly between nearby points. 
This property plays a central role in optimization, as it ensures that gradient-based methods can take stable and predictable steps and allows us to bound the decrease in the objective value at each iteration.

To formalize this notion, we introduce the standard concept of $L$-smoothness. 
\begin{definition}[$L$-Smoothness]
Let $f:\Omega \to \mathbb{R}$ be differentiable on a convex set $\Omega$ 
equipped with a norm $\|\cdot\|$. 
We say that $f$ is \textit{$L$-smooth} with respect to $\|\cdot\|$ if, for all 
$\bm{Z},\bm{Z}^\prime \in \Omega$,
\[
\left\|\nabla f(\bm{Z})-\nabla f(\bm{Z}^\prime)\right\|_{*}
\le L \left\|\bm{Z}-\bm{Z}^\prime\right\|,
\]
where $\|\cdot\|_{*}$ denotes the dual norm of $\|\cdot\|$.
\end{definition} 

% \begin{definition}[$L$-Smoothness, \citep{li2025note,pethick2025training,shen2025convergence}] Let $f: \bB \to \mathbb{R}$. We say that $f$ is \textit{$L$-smooth} with respect to the norm $\|\cdot\|_{\bB}$ if, for all $\x_1, \x_2 \in \bB$,  
% \[
% \left\|\nabla f(\x_1) - \nabla f(\x_2)\right\|_{\bB^\ast} \le L \, \left\|\x_1 - \x_2\right\|_{\bB}, 
% \]  
% where $\|\cdot\|_{\bB^\ast}$ denotes the dual norm of $\|\cdot\|_{\bB}$.  
% \end{definition}
% \begin{remark} Recent works \citep{li2025note,pethick2025training,shen2025convergence,glentis2025minimalist,riabinin2025gluon,kovalev2025understanding} have shown that, under a $L$-smoothness assumption, convergence results can be dimension independent, depending only on $L$. 
% \end{remark}

Recent works 
\citep{li2025note,pethick2025training,shen2025convergence,glentis2025minimalist,riabinin2025gluon,kovalev2025understanding}
have shown that, under an $L$-smoothness assumption, convergence guarantees of first-order methods can be dimension independent, depending only on the smoothness constant $L$. 
This connection can be understood through the standard descent analysis under a general norm geometry.  In particular, the steepest descent direction under $\|\cdot\|$ is defined by 
\[
\bm{D}^t
=
\argmin_{\left\|\bm{D}\right\|\leq 1}
\langle \nabla f(\bm{Z}^t), \bm{D}\rangle
=
-\frac{\nabla f(\bm{Z}^t)}{\left\|\nabla f(\bm{Z}^t)\right\|_*}, 
\]
and the update rule becomes $\bm{Z}^{t+1}=\bm{Z}^t-\eta^t \bm{D}^t$.
Since $\langle \nabla f(\bm{Z}^t), \bm{D}^t\rangle=-\|\nabla f(\bm{Z}^t)\|_{*}$,
the $L$-smoothness condition implies 
$
f(\bm{Z}^{t+1})
\le
f(\bm{Z}^t)
-
\eta^t \|\nabla f(\bm{Z}^t)\|_{*}
+
\frac{L}{2}\eta_t^2 .
$
Choosing the optimal step size
$
\eta^t
=
\frac{\|\nabla f(\bm{Z}^t)\|_{*}}{L}
$
yields
\[
f(\bm{Z}^{t+1})
\le
f(\bm{Z}^t)
-
\frac{1}{2L}\left\|\nabla f(\bm{Z}^t)\right\|_{*}^2 .
\]
This analysis shows that the optimal learning rate is governed by the smoothness constant $L$. 
Therefore, obtaining a width-independent bound on $L$ is crucial for deriving learning-rate rules that remain stable as model width grows, which motivates our study of width-independent smoothness bounds. 

Motivated by the above discussion, we next establish width-independent
bounds on the smoothness constant under the mean-normalized geometries $(p,\mathrm{mean}) \to (q,\mathrm{mean})$ considered in this work.
Our main result is stated as follows. 

\begin{theorem}\label{theorem:Lsmooth} Consider a $K$-layer neural network defined in Definition \ref{def:nn} with activation function $\sigma$ and loss $\mathcal{L}$ satisfying Assumptions \ref{assump:boundedderivativeactivation}--\ref{assump:boundedderivativeloss}. Let $1\le p\le q\le \infty$, and suppose the input satisfies $\|\x\|_2\leq C$ for some $C>1$. Define the parameter set as 
$
\Omega_C := \{{\bm{\Theta}}:
\|{\bm{\Theta}}\|_{\mathrm{block}} \le C
\},$
where $\|\bm{\Theta}\|_{\mathrm{block}}$ is defined in \eqref{eq:block_norm}. 
Then, we have
\[
\left\|\nabla f({\bm{\Theta}})-\nabla f({\bm{\Theta}}^\prime)\right\|_{\text{block},\star}
\le
L_{\mathrm{net}}\, w^{\max\left(0,\,\tfrac{2}{q}-\tfrac{1}{p}\right)}
\left\|{\bm{\Theta}}-{\bm{\Theta}}^\prime\right\|_{\mathrm{block}}
\quad
\forall\, {\bm{\Theta}}, {\bm{\Theta}}^\prime\in\Omega_C,
\]
where $L_{\mathrm{net}}>0$ depends only on $C,K$, and the constants in Assumptions~\ref{assump:boundedderivativeactivation}--\ref{assump:boundedderivativeloss}
, and is independent of all layer widths.
\end{theorem}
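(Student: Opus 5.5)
The plan is to reduce $L$-smoothness to a uniform bound on the directional Hessian. By the mean-value theorem along the segment $\bm{\Theta}+t(\bm{\Theta}'-\bm{\Theta})$, which stays in the convex set $\Omega_C$, we have
\[
\left\|\nabla f(\bm{\Theta})-\nabla f(\bm{\Theta}')\right\|_{\mathrm{block},\star}
\le \sup_{\bm{\Xi}\in\Omega_C}\ \sup_{\|\Delta_1\|_{\mathrm{block}}\le1,\ \|\Delta_2\|_{\mathrm{block}}\le 1}\left|\nabla^2 f(\bm{\Xi})[\Delta_1,\Delta_2]\right|\,\left\|\bm{\Theta}-\bm{\Theta}'\right\|_{\mathrm{block}}.
\]
It therefore suffices to show $|\nabla^2 f[\Delta_1,\Delta_2]|\lesssim w^{\max(0,2/q-1/p)}$. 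By the chain rule,
\[
\nabla^2 f[\Delta_1,\Delta_2]=\mathcal{L}''(y_K)\,\nabla y_K[\Delta_1]\,\nabla y_K[\Delta_2]+\mathcal{L}'(y_K)\,\nabla^2 y_K[\Delta_1,\Delta_2].
\]
The first term is bounded by $M_J C_\star^2$ using the first-order bounds from the proof of Theorem~\ref{thm:width-lip}. The whole task thus reduces to bounding $\nabla^2 y_K$.

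Next, I differentiate the recursion \eqref{eq:first_deri} once more. Write $\bm{\Gamma}_i:=\nabla^2 \y_i[\Delta_1,\Delta_2]$ and $\bm{\Lambda}_i^{(k)}:=\nabla \y_i[\Delta_k]$, where these now include all blocks. Then
\[
\bm{\Gamma}_i=\bm{\Sigma}_i\big(\bm{W}_i\bm{\Gamma}_{i-1}+\Delta_1\bm{W}_i\,\bm{\Lambda}^{(2)}_{i-1}+\Delta_2\bm{W}_i\,\bm{\Lambda}^{(1)}_{i-1}\big)+\mathrm{diag}(\sigma''(\z_i))\,\big(\nabla\z_i[\Delta_1]\odot\nabla\z_i[\Delta_2]\big).
\]
The linear terms propagate exactly as in Cases 1--3 of Theorem~\ref{thm:width-lip}. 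Each is controlled using $\|\bm{W}_i\|_{(p,\mathrm{mean})\to(q,\mathrm{mean})}\le C$, $\|\Delta\bm{W}_i\|\le 1$, Fact~\ref{fact:pmean-monotone}, and the width-independent bounds on $\|\bm{\Lambda}^{(k)}_{i-1}\|_{(q,\mathrm{mean})}$. The only new ingredient is the Hadamard product term. It is bounded by $M_\sigma$ times $\|\bm{u}\odot\bm{v}\|$, where $\bm{u},\bm{v}$ are pre-activation perturbations whose $(q,\mathrm{mean})$ norms are $O(1)$ by the first-order analysis.

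The main obstacle is measuring $\bm{u}\odot\bm{v}$ in the $(p,\mathrm{mean})$ norm, which is what the next layer needs as input. It must be bounded using only the $(q,\mathrm{mean})$ control on $\bm{u}$ and $\bm{v}$. By Hölder, $\|\bm{u}\odot\bm{v}\|_{(q/2,\mathrm{mean})}\le \|\bm{u}\|_{(q,\mathrm{mean})}\|\bm{v}\|_{(q,\mathrm{mean})}$. If $q/2\ge p$, Fact~\ref{fact:pmean-monotone} gives an $O(1)$ bound in $(p,\mathrm{mean})$. If $q<2p$, comparing $\ell_p$ and $\ell_{q/2}$ norms gives $\|\bm{z}\|_{(p,\mathrm{mean})}\le w^{2/q-1/p}\|\bm{z}\|_{(q/2,\mathrm{mean})}$. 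This follows from $\|\bm{z}\|_p\le\|\bm{z}\|_{q/2}$ together with rescaling by $w^{-1/p}$ and $w^{2/q}$. This is exactly where the factor $w^{\max(0,2/q-1/p)}$ arises, and it is sharp for vectors concentrated on one coordinate.

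If $q/2<1$, I would bound the product directly in $\ell_1$ or $\ell_\infty$ instead. The case $q=\infty$ is trivial, since then $\|\bm{u}\odot\bm{v}\|_\infty\le\|\bm{u}\|_\infty\|\bm{v}\|_\infty$. The same factor appears at the first layer, using $\|\x\|_1\lesssim C$ and the $1\to(q,\mathrm{mean})$ norm. At the last layer the product is a scalar, so no loss occurs there. After this step the factor is propagated linearly through the remaining layers by $\|\bm{W}_i\|\le C$. Collecting terms yields
\[
|\nabla^2 y_K[\Delta_1,\Delta_2]|\le C'(K,C,L_\sigma,M_\sigma)\,w^{\max(0,2/q-1/p)},
\]
which gives $L_{\mathrm{net}}$.
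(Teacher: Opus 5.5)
Your proposal is correct and follows essentially the same route as the paper. You reduce to a uniform directional-Hessian bound (Lemma~\ref{lem:directional_hessian}), split via the chain rule into the $\mathcal{L}''$ and $\mathcal{L}'$ terms, and propagate the second-order recursion layer by layer, with the Hadamard term $\nabla\z_i[\Delta^1]\odot\nabla\z_i[\Delta^2]$ as the only source of $w^{\max(0,2/q-1/p)}$. The differences are minor. You treat all parameter blocks jointly rather than through the paper's separate $\Xi^{\bm W},\Xi^{\bm b},\Xi^{\bm W,\bm b}$ recursions. You also prove the Hadamard bound by H\"older at exponent $q/2$ followed by one norm comparison, whereas Lemma~\ref{lem:montomeannorm} uses Cauchy--Schwarz to pass to $(2p,\mathrm{mean})$ and then compares each factor to $(q,\mathrm{mean})$. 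Both give the same exponent, and your H\"older step also holds for $q/2<1$, so the fallback is unnecessary.
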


\begin{remark}
As shown in Theorem~\ref{theorem:Lsmooth}, the smoothness constant exhibits
a width dependence of order $w^{\max\{0,\,2/q-1/p\}}$ under the
$(p,\mathrm{mean}) \to (q,\mathrm{mean})$ geometry.
In particular, the smoothness becomes width independent whenever
$q \ge 2p$.
Therefore, achieving width-independent smoothness naturally motivates
considering geometries with sufficiently large output norms.
In addition, for practical optimization algorithms, the steepest descent
direction should admit a closed-form expression.
Combining these two considerations, our analysis focuses on the
$(1,\mathrm{mean}) \to q$ geometries with $q \ge 2$ and the
$(p,\mathrm{mean}) \to \infty$ geometries.
\end{remark}

Before presenting the proof of Theorem~\ref{theorem:Lsmooth}, we now explain how the width-dependent factor
$w^{\max\{0,\,2/q-1/p\}}$ arises in the smoothness bound under the
$\pmean\to\qmean$ geometry.
Recall that $L$-smoothness is defined with respect to perturbations in the
parameter space.
By Lemma~\ref{lem:directional_hessian}, it suffices to bound the directional
Hessian
$\nabla^2 f(\bm{\Theta})[\Delta\bm{\Theta}^1,\Delta\bm{\Theta}^2]$
uniformly over unit perturbations
$\|\Delta\bm{\Theta}^1\|_{\mathrm{block}},\|\Delta\bm{\Theta}^2\|_{\mathrm{block}}\le 1$. To see the key mechanism, consider a single layer map
$\sigma(\bm W\x)$ with a perturbation $\bm W\mapsto \bm W+\Delta\bm W$.
Since $\bm W\x$ is linear in $\bm W$, the only nontrivial second-order term
comes from the feature-wise nonlinearity $\sigma$. In particular, the second-order variation contains the quadratic activation
perturbation
$(\Delta\bm W\x)\odot(\Delta\bm W\x)$.
Using boundedness of $\sigma''$ and the loss derivatives
(Assumptions~\ref{assump:boundedderivativeactivation}--\ref{assump:boundedderivativeloss}),
the directional Hessian can be reduced to controlling this quadratic term:
\[
\bigl|\nabla^2 f(\bm{W})[\Delta\bm{W}^1,\Delta\bm{W}^2]\bigr|
\;\lesssim\;
\left\|(\Delta\bm W^1\x)\odot(\Delta\bm W^2\x)\right\|_{(p,\mathrm{mean})}.
\]
 Under the $\pmean\to\qmean$ operator-norm geometry,
$\|\Delta\bm W^i\x\|_{(q,\mathrm{mean})}$ is directly controlled by
$\|\Delta\bm W^i\|_{(p,\mathrm{mean})\to(q,\mathrm{mean})}\leq 1$.
To bound the quadratic term in the $(p,\mathrm{mean})$ norm, we invoke
Lemma~\ref{lem:montomeannorm}, which yields
\[
\left\|(\Delta\bm W^1\x)\odot(\Delta\bm W^2\x)\right\|_{(p,\mathrm{mean})}
\;\lesssim\;
w^{\max\left(0,\,\tfrac{2}{q}-\tfrac{1}{p}\right)}
\left\|\Delta\bm W^1\x\right\|_{(q,\mathrm{mean})}\,
\left\|\Delta\bm W^2\x\right\|_{(q,\mathrm{mean})}.
\]
Combining this with the operator-norm bound
$\|\Delta\bm W^i\x\|_{(q,\mathrm{mean})}
\le
\|\Delta\bm W^i\|_{(p,\mathrm{mean})\to(q,\mathrm{mean})}\|\x\|_{(p,\mathrm{mean})}$
shows that the directional Hessian---and hence the smoothness constant---scales
as $w^{\max\{0,\,2/q-1/p\}}$.

With the above preparation, we are now ready to present the proof of
Theorem~\ref{theorem:Lsmooth}. 
\begin{proof}{Proof of Theorem~\ref{theorem:Lsmooth}.}
We prove the $L$-smoothness of
$f({\bm{\Theta}})=\mathcal L(y_K({\bm{\Theta}};x))$
on $\Omega_C$ with respect to $\|\cdot\|_{\mathrm{block}}$
by invoking Lemma~\ref{lem:directional_hessian}. It suffices to establish the following uniform directional-Hessian bound:
\begin{equation}\label{eq:goal-dir-hessian-bound}
\sup_{\left\|\Delta{\bm{\Theta}}^1\right\|_{\mathrm{block}}\le 1,\,
      \left\|\Delta{\bm{\Theta}}^2\right\|_{\mathrm{block}}\le 1}
\big|\nabla^2 f({\bm{\Theta}})[\Delta{\bm{\Theta}}^1,\Delta{\bm{\Theta}}^2]\big|
\le L_{\mathrm{net}},
\qquad
\forall\,{\bm{\Theta}}\in\Omega_C,
\end{equation}
for some constant $L_{\mathrm{net}}$. By the chain rule and
Assumptions~\ref{assump:boundedderivativeactivation}
--\ref{assump:boundedderivativeloss},
we have
\[
\bigl|\nabla^2 f({\bm{\Theta}})
    [\Delta{\bm{\Theta}}^1,\Delta{\bm{\Theta}}^2]\bigr|
\le
M_J\,
\bigl|\nabla y_K({\bm{\Theta}})
        [\Delta{\bm{\Theta}}^1]\bigr|
\,
\bigl|\nabla y_K({\bm{\Theta}})
        [\Delta{\bm{\Theta}}^2]\bigr|
+
L_J\,
\bigl|\nabla^2 y_K({\bm{\Theta}})
        [\Delta{\bm{\Theta}}^1,\Delta{\bm{\Theta}}^2]\bigr|.
\]
Therefore, to prove \eqref{eq:goal-dir-hessian-bound}, it suffices to bound
$\nabla y_K({\bm{\Theta}})[\Delta{\bm{\Theta}}]$
and
$\nabla^2 y_K({\bm{\Theta}})
[\Delta{\bm{\Theta}}^1,\Delta{\bm{\Theta}}^2]$
uniformly over ${\bm{\Theta}}\in\Omega_C$
with
$\|\Delta{\bm{\Theta}}\|_{\mathrm{block}}\le 1$ and
$\|\Delta{\bm{\Theta}}^1\|_{\mathrm{block}},
 \|\Delta{\bm{\Theta}}^2\|_{\mathrm{block}}\le 1$.
The first-order term
$\nabla y_K({\bm{\Theta}})[\Delta{\bm{\Theta}}]$
already admits a width-independent bound from
Theorem~\ref{thm:width-lip}.
Hence, it remains to bound the second-order directional derivative.

To proceed, fix any directions 
$\Delta{\bm{\Theta}^1}=(\Delta \bm{W}_{1:K}^1,\Delta \b_{1:K}^1)$ and
$\Delta{\bm{\Theta}^2}=(\Delta \bm{W}_{1:K}^2,\Delta \b_{1:K}^2)$
with
$\|\Delta{\bm{\Theta}^1}\|_{\mathrm{block}}\le 1$ and
$\|\Delta{\bm{\Theta}^2}\|_{\mathrm{block}}\le 1$.
For each $i=1,\dots,K$, let
$
\bm{\Sigma}_i:=\mathrm{diag}(\sigma'(\z_i))$ and 
$\bm{\Gamma}_{i}:=\mathrm{diag}(\sigma''(\z_i))$ so that $\|\bm{\Sigma}_{i}\|_\infty\leq L_\sigma$ and $\|\bm{\Gamma}_{i}\|_\infty\leq M_\sigma$
For $1\le j,\ell\le i\le K$, we introduce the following shorthand
notations for the first- and second-order directional derivatives: 
\begin{align*}
\bm{\Lambda}^{\bm{W}^1}_{i,j}
&:= \nabla_{\bm{W}_j}\y_i[\Delta \bm{W}_j^1],
\qquad
&
\bm{\Lambda}^{\b^1}_{i,j}
&:= \nabla_{\b_j}\y_i[\Delta \b_j^1],\\
\bm{\Lambda}^{\bm{W}^2}_{i,j}
&:= \nabla_{\bm{W}_j}\y_i[\Delta \bm{W}_j^2],
\qquad
&
\bm{\Lambda}^{\b^2}_{i,j}
&:= \nabla_{\b_j}\y_i[\Delta \b_j^2],\\[4pt]
\Xi^{\bm{W}}_{i,j,\ell}
&:= \nabla^2_{\bm W_j,\bm W_\ell}\y_i
[\Delta \bm{W}_j^1,\Delta \bm{W}_\ell^2],
\qquad
&
\Xi^{\bm{b}}_{i,j,\ell}
&:= \nabla^2_{\bm b_j,\bm b_\ell}\y_i
[\Delta \bm{b}_j^1,\Delta \bm{b}_\ell^2],\\
\Xi^{\bm{W},\bm{b}}_{i,j,\ell}
&:= \nabla^2_{\bm W_j,\bm b_\ell}\y_i
[\Delta \bm{W}_j^1,\Delta \bm{b}_\ell^2].
\end{align*}

Since the first-order derivatives have already been computed in
\eqref{eq:first_deri}, we next compute the second-order directional
derivatives by differentiating \eqref{eq:first_deri} once more and
applying the second-order chain rule. Recall that $\y_0 = \x$.
For any parameter perturbations
$\Delta{\bm{\Theta}}^1, \Delta{\bm{\Theta}}^2$, and for each
$i=1,\dots,K$, we have
\[
\nabla^2 \y_i({\bm{\Theta}})
[\Delta{\bm{\Theta}}^1,\Delta{\bm{\Theta}}^2]
=
\bm{\Gamma}_i \left(
\nabla \z_i({\bm{\Theta}})[\Delta{\bm{\Theta}}^1]
\odot
\nabla \z_i({\bm{\Theta}})[\Delta{\bm{\Theta}}^2]
\right)
+
\bm{\Sigma}_i \,
\nabla^2 \z_i({\bm{\Theta}})
[\Delta{\bm{\Theta}}^1,\Delta{\bm{\Theta}}^2],
\]
where $\z_i = \bm W_i \y_{i-1} + \b_i$, and
\[
\nabla \z_i({\bm{\Theta}})[\Delta{\bm{\Theta}}]
=
\Delta \bm W_i \, \y_{i-1}
+
\bm W_i \nabla \y_{i-1}({\bm{\Theta}})[\Delta{\bm{\Theta}}]
+
\Delta \b_i,
\]
\begin{align*}
    & \nabla^2 \z_i(\bm{\Theta})[\Delta
    \bm{\Theta}^1,\Delta\bm{\Theta}^2] \\
    = & \Delta \bm{W}_i^1\,\nabla \y_{i-1}(\bm{\Theta})[\Delta\bm{\Theta}^2] +\Delta \bm{W}_i^2\,\nabla \y_{i-1}(\bm{\Theta})[\Delta\bm{\Theta}^1] +\bm{W}_i\nabla^2 \y_{i-1}(\bm{\Theta})[\Delta\bm{\Theta}^1,\Delta\bm{\Theta}^2].
\end{align*}
We decompose the second-order directional derivative layerwise as
\begin{align*}
 \nabla^2 \y_K({\bm{\Theta}})
[\Delta{\bm{\Theta}}^1,\Delta{\bm{\Theta}}^2] 
&= 
\sum_{1\le j,\ell\le K}
\left(
\nabla^2_{\bm W_j,\bm W_\ell} \y_K
[\Delta\bm W_j^1,\Delta\bm W_\ell^2]
+
\nabla^2_{\bm W_j,\bm b_\ell} \y_K
[\Delta\bm W_j^1,\Delta\bm b_\ell^2]\right) \\
&+ \sum_{1\le j,\ell\le K}
\left(\nabla^2_{\bm b_j,\bm W_\ell} \y_K
[\Delta\bm b_j^1,\Delta\bm W_\ell^2]
+
\nabla^2_{\bm b_j,\bm b_\ell} \y_K
[\Delta\bm b_j^1,\Delta\bm b_\ell^2]
\right).
\end{align*}
The two mixed terms
$\nabla^2_{\bm W_j,\bm b_\ell}\y_K[\cdot,\cdot]$
and
$\nabla^2_{\bm b_j,\bm W_\ell}\y_K[\cdot,\cdot]$
can be treated analogously and therefore admit the same bound.
Hence, it suffices to establish bounds for the three types of terms
introduced above. We now specialize the chain-rule identity to
layerwise perturbations to derive recursion formulas for the
second-order quantities. In what follows, we only derive recursion formulas for $\Xi^{\bm W}_{i,j,\ell}$ and
$\Xi^{\bm b}_{i,j,\ell}$ with $j\le \ell$.
The remaining cases $j>\ell$ follow by symmetry of the Hessian. 
% Therefore, it suffices to establish bounds for the
% three types of terms above.  Fix any direction $\Delta{\bm{\Theta}^1}=(\Delta \bm{W}_{1:K}^1,\Delta \b_{1:K}^1)$, $\Delta{\bm{\Theta}^2}=(\Delta \bm{W}_{1:K}^2,\Delta \b_{1:K}^2)$  with
% $\|\Delta{\bm{\Theta}^1}\|_{\mathrm{block}}\le 1$ and $\|\Delta{\bm{\Theta}^2}\|_{\mathrm{block}}\le 1$.
% For each $i=1,\dots,K$, let
%  $\bm{\Sigma}_i:=\mathrm{diag}(\sigma'(\z_i))$ and $\bm{\Gamma}_{i}:=\mathrm{diag}(\sigma''(\z_i))$. 
% For $1\le j,\ell\le i\le K$, we introduce the shorthand notations
% for the first- and second-order directional derivatives defined below:

% \begin{align*}
% &\bm{\Lambda}^{\bm{W}^1}_{i,j}:=\nabla_{\bm{W}_j}\y_i[\Delta \bm{W}_j^1], \quad
% \bm{\Lambda}^{\b^1}_{i,j}:=\nabla_{\b_j}\y_i[\Delta \b_j^1], \quad \bm{\Lambda}^{\bm{W}^2}_{i,j}:=\nabla_{\bm{W}_j}\y_i[\Delta \bm{W}_j^2], \quad
% \bm{\Lambda}^{\b^2}_{i,j}:=\nabla_{\b_j}\y_i[\Delta \b_j^2], \\
% & \Xi^{\bm{W}}_{i,j,\ell}:= \nabla^2_{\bm W_j,\bm W_\ell}\y_i [\Delta \bm{W}_j^1,\Delta \bm{W}_\ell^2], \,\,  \Xi^{\bm{b}}_{i,j,\ell}:= \nabla^2_{\bm b_j,\bm b_\ell}\y_i [\Delta \bm{b}_j^1,\Delta \bm{b}_\ell^2], \,\,    \Xi^{\bm{W},\bm{b}}_{i,j,\ell}:= \nabla^2_{\bm W_j,\bm b_\ell}\y_i [\Delta \bm{W}_j^1,\Delta \bm{b}_\ell^2]. 
%  \end{align*}

\begin{align*}
&\Xi^{\bm{W}}_{i,j,\ell}
=
\begin{cases}
\bm{\Gamma}_{i}\left((\Delta\bm W_j^1 \y_{i-1}) \odot (\Delta\bm W_\ell^2 \y_{i-1})\right),
& j=\ell=i, \\
\bm{\Gamma}_{i}\left((\bm{W}_i \bm{\Lambda}^{\bm{W}^1}_{i-1,j})
\odot (\Delta\bm W_\ell^2 \y_{i-1})\right)
+ \bm{\Sigma}_i \,\Delta\bm W_\ell^2\,\bm{\Lambda}^{\bm{W}^1}_{i-1,j},
& j<\ell=i, \\
\bm{\Gamma}_{i}\left((\bm{W}_i \, \bm{\Lambda}^{\bm{W}^1}_{i-1,j})
\odot (\bm{W}_i \, \bm{\Lambda}^{\bm{W}^2}_{i-1,\ell})\right)
+ \bm{\Sigma}_i \bm{W}_i \,\Xi^{\bm{W}}_{i-1,j,\ell},
& \ell<i.
\end{cases}
\\[0.6em]
&\Xi^{\bm{b}}_{i,j,\ell}
=
\begin{cases}
\bm{\Gamma}_{i}\left(\Delta\bm b_j^1 \odot \Delta\bm b_\ell^2 \right),
& j=\ell=i, \\[0.4em]
\bm{\Gamma}_{i}\left((\bm{W}_i \, \bm{\Lambda}^{\b^1}_{i-1,j}) \odot \Delta\bm b_\ell^2 \right),
& j<\ell=i, \\[0.4em]
\bm{\Gamma}_{i}\left((\bm{W}_i \, \bm{\Lambda}^{\b^1}_{i-1,j})
\odot (\bm{W}_i \, \bm{\Lambda}^{\b^2}_{i-1,\ell})\right)+ \bm{\Sigma}_i \bm{W}_i \,\Xi^{\bm{b}}_{i-1,j,\ell},
& \ell<i.
\end{cases}
\\[0.6em]
&\Xi^{\bm{W},\bm{b}}_{i,j,\ell}
=
\begin{cases}
\bm{\Gamma}_{i}\left((\Delta\bm W_j^1 \y_{i-1}) \odot \Delta\bm b_\ell^2 \right),
& j=\ell=i, \\[0.4em]
\bm{\Gamma}_{i}\left((\bm{W}_i \, \bm{\Lambda}^{\bm{W}^1}_{i-1,j}) \odot \Delta\bm b_\ell^2 \right),
& j<\ell=i, \\[0.4em]
% \bm{\Gamma}_{i}\left((\bm{W}_i \, \bm{\Lambda}^{\bm{W}^1}_{i-1,j}) \odot \Delta\bm b_\ell^2 \right)+\bm{\Sigma}_i\Delta \bm W_j^1\bm{\Lambda}^{\b^2}_{i-1,\ell},
\bm{\Gamma}_i\left((\Delta \bm{W}_i^1\y_{i-1}) \odot (\bm{W}_i\bm{\Lambda}_{i-1,\ell}^{\b^2})\right) + \bm{\Sigma}_i\Delta\bm{W}_j^1\bm{\Lambda}_{i-1,\ell}^{\b^2}, 
& \ell<j=i, \\[0.4em]
\bm{\Gamma}_{i}\left((\bm{W}_i \, \bm{\Lambda}^{\bm{W}^1}_{i-1,j})
\odot (\bm{W}_i \, \bm{\Lambda}^{\b^2}_{i-1,\ell})\right)+ \bm{\Sigma}_i \bm{W}_i \,\Xi^{\bm{W},\bm{b}}_{i-1,j,\ell},
& \ell, j<i.
\end{cases}
\end{align*}
We now bound these quantities case by case. 

% \medskip
\noindent\textbf{Case 1: $j=\ell=i$.} For $2\le i\le K-1$, we have 
\begin{align*}
        &  \left\|\Xi^{\bm{W}}_{i,i,i}\right\|_\pmean\\
        \le & \left\|\bm{\Gamma}_{i}\right\|_\infty \left\|(\Delta \bm W_i^1 \y_{i-1}) \odot (\Delta \bm W_i^2 \y_{i-1})\right\|_\pmean\\
        \le & M_\sigma w^{\max\left(0,\,\tfrac{2}{q}-\tfrac{1}{p}\right)} \left\|\Delta \bm W_i^1 \y_{i-1}\right\|_\qmean \left\|\Delta \bm W_i^2 \y_{i-1}\right\|_\qmean 
        \\
        \le & M_\sigma w^{\max\left(0,\,\tfrac{2}{q}-\tfrac{1}{p}\right)}\left\|\Delta \bm W_i^1\right\|_{_\pmean\to_\qmean} \left\|\y_{i-1}\right\|_\pmean \left\|\Delta \bm W_i^2\right\|_{_\pmean\to_\qmean}\left\|\y_{i-1}\right\|_\pmean\\
        \le & M_\sigma w^{\max\left(0,\,\tfrac{2}{q}-\tfrac{1}{p}\right)}\left\|\y_{i-1}\right\|_\qmean^2\\
        \le & M_\sigma 2^{2i-2}L_\sigma^{2i-2}C^{2i} w^{\max\left(0,\,\tfrac{2}{q}-\tfrac{1}{p}\right)},\\
\end{align*}
where the second inequality follows from Lemma \ref{lem:montomeannorm}, the fourth one is due to Fact \ref{fact:pmean-monotone}, and the last one is from \eqref{eq:yi-rec}. Similarly, we have 
\begin{align*}
        &\left\|\Xi^{\bm b}_{i,i,i}\right\|_\pmean\le \left\|\bm{\Gamma}_{i}\right\|_\infty\left\|\Delta\bm b_i^1\odot \Delta\bm b_i^2\right\|_\infty\le M_\sigma, \text{and}
\end{align*} 
\begin{align*}
     \left\|\Xi^{\bm{W},\bm b}_{i,i,i}\right\|_\pmean & \le \left\|\bm{\Gamma}_{i}\right\|_\infty \left\|(\Delta \bm W_i^1 \y_{i-1}) \odot \Delta\bm b_i^2\right\|_\pmean\\ &
     \le  M_\sigma \left\|\Delta\bm b_i^2\right\|_\infty \left\|\Delta \bm W_i^1 \y_{i-1}\right\|_\pmean\\
        &\le M_\sigma \left\|\Delta \bm W_i^1\right\|_{_\pmean\to_\qmean} \left\|\y_{i-1}\right\|_\pmean\\
        &\le  M_\sigma \left\|\Delta \bm W_i^1\right\|_{_\pmean\to_\qmean} \left\|\y_{i-1}\right\|_\qmean
        \le M_\sigma2^{i-1}L_\sigma^{i-1}{C^{i}}.
\end{align*}
Similarly, for the boundary layers $i=1$ and $i=K$, by replacing
$\|\cdot\|_{_\pmean\to_\qmean}$ with $\|\cdot\|_{_1\to_\qmean}$ and
$\|\cdot\|_{_\pmean\to\infty}$ respectively, we obtain the following bounds,
\begin{align*}
&\left\|\Xi^{\bm W}_{1,1,1}\right\|_\pmean
\le
M_\sigma  C^2 w^{\max\left(0,\frac{2}{q}-\frac{1}{p}\right)},\\
&\left|\Xi^{\bm W}_{K,K,K}\right|
\le
M_\sigma
2^{2K-2}L_\sigma^{2K-2}C^{2K}w^{\max\left(0,\frac{2}{q}-\frac{1}{p}\right)},\\
& \left\|\Xi^{\bm b}_{1,1,1}\right\|_\pmean
\le
\left\|\Xi^{\bm b}_{1,1,1}\right\|_\infty
\le
\left\|\bm \Gamma_{1}\right\|_\infty
\left\|\Delta\bm b_1^1\odot \Delta\bm b_1^2\right\|_\infty
\le
M_\sigma,\\
&\left|\Xi^{\bm b}_{K,K,K}\right|
\le
\left\|\bm \Gamma_{K}\right\|_\infty
\left\|\Delta\bm b_K^1\odot \Delta\bm b_K^2\right\|_\infty
\le
M_\sigma,\\
    & \left\|\Xi^{\bm W,\bm b}_{1,1,1}\right\|_\pmean\le \left\|\bm \Gamma_{1}\right\|_\infty\left\|(\Delta \bm W_1^1\y_0)\odot \Delta\bm b_1^2\right\|_\infty\le M_\sigma \sqrt{d} C,\\
     &\left|\Xi^{\bm W,\bm b}_{K,K,K}\right|\le \left\|\bm \Gamma_{K}\right\|_\infty\left\|(\Delta \bm W_K^1\y_{K-1})\odot \Delta\bm b_{K}^2\right\|_\infty\le M_\sigma 2^{K-1}L_\sigma^{K-1}C^K.
\end{align*}
\noindent\textbf{Case 2: $\min\{j,\ell\}<\max\{j,\ell\}=i$.} For $2\leq i\leq K-1$, we first bound the weight--weight and bias--bias blocks for $j<\ell=i$.
The remaining case  $\ell<j=1$ follows by symmetry.
For the mixed blocks, both boundary configurations $\Xi^{\bm W,\bm b}_{i,j,i} (j<\ell=i)$ and
$\Xi^{\bm W,\bm b}_{i,i,\ell} (\ell <j=i)$ appear and will be bounded separately.
\begin{align*}
    & \left\|\Xi^{\bm{W}}_{i,j,i}\right\|_\pmean \\=&\,\left\|\bm{\Gamma}_{i}\left((\bm{W}_i \, \bm{\Lambda}^{\bm{W}^1}_{i-1,j}) \odot (\Delta \bm W_i^2 \y_{i-1})\right)+ \bm{\Sigma}_i \Delta \bm W_i^2 \,\bm{\Lambda}^{\bm{W}^1}_{i-1,j}\right\|_{\pmean} \\
    \leq & \, \left\|\bm{\Gamma}_{i}\right\|_\infty \left\|\left((\bm{W}_i \, \bm{\Lambda}^{\bm{W}^1}_{i-1,j}) \odot (\Delta \bm W_i^2 \y_{i-1})\right)\right\|_{\pmean}+ \left\|\bm{\Sigma}_i \Delta \bm W_i^2 \,\bm{\Lambda}^{\bm{W}^1}_{i-1,j}\right\|_{\pmean} \\
    \leq &\, M_\sigma w^{\max\left(0,\,\tfrac{2}{q}-\tfrac{1}{p}\right)} \left\|\bm{W}_i \, \bm{\Lambda}^{\bm{W}^1}_{i-1,j}\right\|_\qmean \left\|\Delta \bm W_i^2\y_{i-1}\right\|_\qmean\\
 &+ L_\sigma\left\|\Delta \bm W_i^2 \right\|_{_\pmean\to_\qmean}\left\|\bm{\Lambda}^{\bm{W}^1}_{i-1,j}\right\|_\pmean\\     \le &\,  M_\sigma w^{\max\left(0,\,\tfrac{2}{q}-\tfrac{1}{p}\right)} \left\|\bm{W}_i \right\|_{_\pmean\to_\qmean} \left\| \bm{\Lambda}^{\bm{W}^1}_{i-1,j}\right\|_\pmean \left\|\Delta \bm W_i^2\right\|_{_\pmean\to_\qmean}\left\| \y_{i-1}\right\|_\qmean \\
  &+ L_\sigma\left\|\Delta \bm W_i^2 \right\|_{_\pmean\to_\qmean}\left\|\bm{\Lambda}^{\bm{W}^1}_{i-1,j}\right\|_\pmean \\
  \leq &  M_\sigma w^{\max\left(0,\,\tfrac{2}{q}-\tfrac{1}{p}\right)} 
  C \left\| \bm{\Lambda}^{\bm{W}^1}_{i-1,j}\right\|_\pmean \left\| \y_{i-1}\right\|_\qmean + L_\sigma \left\|\bm{\Lambda}^{\bm{W}^1}_{i-1,j}\right\|_\pmean  \\ 
 \le & \,  M_\sigma w^{\max\left(0,\,\tfrac{2}{q}-\tfrac{1}{p}\right)} 2^{2i-3}L_\sigma^{2i-2}C^{2i} +2^{i-2}L_\sigma^iC^{i-1},
\end{align*}
where the second inequality follows from Lemma \ref{lem:montomeannorm} and the definition of the operator norm, and the last one is from \eqref{eq:yi-rec} and \eqref{eq:case2-prop-W}. 
Similarly, we have 
\begin{align*}
    \left\|\Xi^{\bm b}_{i,j,i}\right\|_\pmean \le & \,  \left\|\bm{\Gamma}_{i}\right\|_\infty \left\|(\bm{W}_i\bm{\Lambda}^{\b^1}_{i-1,j})\odot \Delta\bm b_{i}^2\right\|_\pmean\\
    \le& \, M_\sigma w^{\max\left(0,\,\tfrac{2}{q}-\tfrac{1}{p}\right)}\left\|\bm W_i\right\|_{_\pmean\to_\qmean}\left\|\bm{\Lambda}^{\b^1}_{i-1,j}\right\|_\pmean\left\|\Delta\bm b_{i}^2\right\|_{\infty}\\
    \le & M_\sigma w^{\max\left(0,\,\tfrac{2}{q}-\tfrac{1}{p}\right)}L_\sigma^{i-1} C^{i-1},
\end{align*}
where the last inequality follows from \eqref{eq:b_first_order}. Then, we continue to bound two cross terms:  
\begin{align*}
\left\|\Xi^{\bm W, \bm b}_{i,j,i}\right\|_\pmean & = \left\|\bm{\Gamma}_{i}\left((\bm{W}_i \, \bm{\Lambda}^{\bm{W}^1}_{i-1,j}) \odot \Delta \b_i^2\right)\right\|_{\pmean}\\
    &\le \left\|\bm{\Gamma}_{i}\right\|_\infty\left\|\bm{W}_i \, \bm{\Lambda}^{\bm{W}^1}_{i-1,j}\right\|_\qmean \left\| \Delta \b_i^2\right\|_\infty\\
    &\le M_\sigma \left\|\bm{W}_i \right\|_{_\pmean\to_\qmean} \left\| \bm{\Lambda}^{\bm{W}^1}_{i-1,j}\right\|_\pmean \le M_\sigma 2^{i-2}L_\sigma^{i-1}C^{i}, \text{$(j<\ell=i)$}
\end{align*}
\begin{align*}
     &\, \left\|\Xi^{\bm W, \bm b}_{i,i,\ell}\right\|_\pmean \\
     = &\,  \left\|\bm{\Gamma}_{i}\left((\Delta\bm{W}_i \, \y_{i-1}) \odot (\bm{W}_i \, \bm{\Lambda}^{\b^2}_{i-1,\ell})\right)+ \bm{\Sigma}_i\, \Delta \bm W_i^1 \,\bm{\Lambda}^{\b^2}_{i-1,\ell}\right\|_{\pmean}\\
    \le & \, \left\|\bm{\Gamma}_{i}\right\|_\infty\left\|(\Delta\bm{W}_i \, \y_{i-1}) \odot (\bm{W}_i \, \bm{\Lambda}^{\b^2}_{i-1,\ell})\right\|_{\pmean}+ \left\|\bm{\Sigma}_i\, \Delta \bm W_i^1 \,\bm{\Lambda}^{\b^2}_{i-1,\ell}\right\|_{\pmean}\\
    \le &\,  M_\sigma w^{\max\left(0,\,\tfrac{2}{q}-\tfrac{1}{p}\right)} \left\|\Delta \bm{W}_i\right\|_{_\pmean\to_\qmean}\left\|\y_{i-1}\right\|_\pmean\left\|\bm{W}_i \right\|_{_\pmean\to_\qmean} \left\| \bm{\Lambda}^{\bm{b}^2}_{i-1,\ell}\right\|_\pmean \\
    &\, + L_\sigma\left\|\Delta \bm W_i^1 \right\|_{_\pmean\to_\qmean}\left\|\bm{\Lambda}^{\b^2}_{i-1,\ell}\right\|_\pmean \\
        \leq &\, M_\sigma w^{\max\left(0,\,\tfrac{2}{q}-\tfrac{1}{p}\right)} C\left\|\y_{i-1}\right\|_\qmean \left\| \bm{\Lambda}^{\bm{b}^2}_{i-1,\ell}\right\|_\pmean +L_\sigma  \left\| \bm{\Lambda}^{\bm{b}^2}_{i-1,\ell}\right\|_\pmean \\ 
 \le &\, M_\sigma w^{\max\left(0,\,\tfrac{2}{q}-\tfrac{1}{p}\right)} 2^{i-1}L_\sigma^{2i-2}C^{2i-1} +L_\sigma^iC^{i-2}, \, \text{$(\ell<j=i)$}. 
\end{align*}
\noindent\textbf{Case 3: $\max\{j,\ell\}<i<K$.} Similar with \textbf{Case 2}, WLOG, we assume $j\leq \ell$. 
\begin{align*}
\left\|\Xi^{\bm{W}}_{i,j,\ell}\right\|_{\pmean} = &\, \left\|\bm{\Gamma}_{i}\left((\bm{W}_i \, \bm{\Lambda}^{\bm{W}^1}_{i-1,j}) \odot (\bm{W}_i \,  \bm{\Lambda}^{\bm{W}^2}_{i-1,\ell})\right)+ \bm{\Sigma}_i \bm{W}_i \,\Xi^{\bm{W}}_{i-1,j,\ell}\right\|_{\pmean}\\
\le &\,  M_\sigma w^{\max\left(0,\,\tfrac{2}{q}-\tfrac{1}{p}\right)}\left\|\bm{W}_i \,\bm{\Lambda}^{\bm{W}^1}_{i-1,j}\right\|_\qmean\left\|\bm{W}_i \, \bm{\Lambda}^{\bm{W}^2}_{i-1,\ell}\right\|_\qmean \\
&\, +L_\sigma\left\|\bm{W}_i\right\|_{_\pmean\to_\qmean}\left\|\Xi^{\bm{W}}_{i-1,j,\ell}\right\|_\pmean\\
\le &\,  M_\sigma w^{\max\left(0,\,\tfrac{2}{q}-\tfrac{1}{p}\right)}\left\|\bm{W}_i \right\|^2_{_\pmean\to_\qmean} \left\| \bm{\Lambda}^{\bm{W}^1}_{i-1,j}\right\|_\pmean \left\|\bm{\Lambda}^{\bm{W}^2}_{i-1,\ell}\right\|_\pmean \\
&\, +L_\sigma\left\|\bm{W}_i\right\|_{_\pmean\to_\qmean}\left\|\Xi^{\bm{W}}_{i-1,j,\ell}\right\|_\pmean\\
\le &\, M_\sigma w^{\max\left(0,\,\tfrac{2}{q}-\tfrac{1}{p}\right)}2^{2i-4}L_\sigma^{2i-2} C^{2i}+L_\sigma C\left\|\Xi^{\bm{W}}_{i-1,j,\ell}\right\|_\pmean. 
\end{align*}
Combining the bounds from Cases~1--2 and unrolling the above recursion,
we obtain
\[
\left\|\Xi^{\bm W}_{i,j,\ell}\right\|_{\pmean}
\lesssim
w^{\max\left(0,\frac{2}{q}-\frac{1}{p}\right)},
\]
where the hidden constant depends only on $C$, $K$, $L_\sigma$, and $M_\sigma$,
but is independent of the layer widths. Similarly, we have 
\begin{align*}
& \left\|\Xi^{\bm{b}}_{i,j,\ell}\right\|_\pmean\\
=&\, \left\|\bm{\Gamma}_{i}\left((\bm{W}_i \, \bm{\Lambda}^{\b^1}_{i-1,j}) \odot (\bm{W}_i \,  \bm{\Lambda}^{\bm b^2}_{i-1,\ell})\right) + \bm{\Sigma}_i \bm{W}_i \,\Xi^{\bm{b}}_{i-1,j,\ell}\right\|_\pmean\\
\le &\,  M_\sigma w^{\max\left(0,\,\tfrac{2}{q}-\tfrac{1}{p}\right)}\left\|\bm{W}_i \, \bm{\Lambda}^{\b^1}_{i-1,j}\right\|_\qmean \left\|\bm{W}_i \, \bm{\Lambda}^{\b^2}_{i-1,\ell}\right\|_\qmean+ L_\sigma \left\| \bm{W}_i \,\Xi^{\bm{b}}_{i-1,j,\ell}\right\|_\qmean\\
\le &\, M_\sigma w^{\max\left(0,\,\tfrac{2}{q}-\tfrac{1}{p}\right)}\left\|\bm{W}_i\right\|^2_{_\pmean\to_\qmean}\left\|\bm{\Lambda}^{\b^1}_{i-1,j}\right\|_{\pmean}\left\| \bm{\Lambda}^{\b^2}_{i-1,\ell}\right\|_{\pmean}\\
&\,+L_\sigma \left\|\bm{W}_i\right\|_{_\pmean\to_\qmean}\left\|\Xi^{\bm{b}}_{i-1,j,\ell}\right\|_\pmean\\
\le &\, M_\sigma w^{\max\left(0,\,\tfrac{2}{q}-\tfrac{1}{p}\right)} (L_\sigma C)^{2i-2}+L_\sigma C\left\|\Xi^{\bm{b}}_{i-1,j,\ell}\right\|_\pmean\lesssim
w^{\max\left(0,\frac{2}{q}-\frac{1}{p}\right)}.
\end{align*}
Finally, for $\Xi^{\bm W,\bm b}_{i,j,\ell}$, we have
\begin{align*}
\left\|\Xi^{\bm W,\bm b}_{i,j,\ell}\right\|_{\pmean}= &\, \left\|\bm{\Gamma}_{i}\left((\bm{W}_i \, \bm{\Lambda}^{\bm{W}^1}_{i-1,j}) \odot (\bm{W}_i \,  \bm{\Lambda}^{\b^2}_{i-1,\ell})\right)+ \bm{\Sigma}_i \bm{W}_i \,\Xi^{\bm W,\bm b}_{i-1,j,\ell}\right\|_{\pmean}\\
\le &\, M_\sigma w^{\max\left(0,\,\tfrac{2}{q}-\tfrac{1}{p}\right)}\left\|\bm{W}_i \,\bm{\Lambda}^{\bm{W}^1}_{i-1,j}\right\|_\qmean\left\|\bm{W}_i \, \bm{\Lambda}^{\b^2}_{i-1,\ell}\right\|_\qmean \\
&\, +L_\sigma\left\|\bm{W}_i\right\|_{_\pmean\to_\qmean}\left\|\Xi^{\bm W,\bm b}_{i-1,j,\ell}\right\|_\pmean\\
\le&\, M_\sigma w^{\max\left(0,\,\tfrac{2}{q}-\tfrac{1}{p}\right)}\left\|\bm{W}_i \right\|^2_{_\pmean\to_\qmean} \left\| \bm{\Lambda}^{\bm{W}^1}_{i-1,j}\right\|_\pmean \left\|\bm{\Lambda}^{\b^2}_{i-1,\ell}\right\|_\pmean \\
&\,+L_\sigma\left\|\bm{W}_i\right\|_{_\pmean\to_\qmean}\left\|\Xi^{\bm W,\bm b}_{i-1,j,\ell}\right\|_\pmean\\
\le &\, M_\sigma w^{\max\left(0,\,\tfrac{2}{q}-\tfrac{1}{p}\right)}2^{i-2}L_\sigma^{2i-2} C^{2i-1}+L_\sigma C\left\|\Xi^{\bm W,\bm b}_{i-1,j,\ell}\right\|_\pmean\lesssim
w^{\max\left(0,\frac{2}{q}-\frac{1}{p}\right)}.
\end{align*}
\noindent\textbf{Case 4: $\max\{j,\ell\}<i=K$.} WLOG, we assume $j\leq \ell$. With the similar computation as \textbf{Case 3}, we have
\begin{align*}
\left|\Xi^{\bm W}_{K,j,\ell}\right|= &\, \left|\bm \Gamma_{K}\left((\bm{W}_K \, \bm{\Lambda}^{\bm W^1}_{K-1,j}) \cdot (\bm{W}_K \,  \bm{\Lambda}^{\bm W^2}_{K-1,\ell})\right)
+ \bm{\Sigma}_K \bm{W}_K \,\Xi^{\bm W}_{K-1,j,\ell}\right|\\
\le &\,  M_\sigma \left|\bm{W}_K \, \bm{\Lambda}^{\bm W^1}_{K-1,j}\right|\left|\bm{W}_K \, \bm{\Lambda}^{\bm W^2}_{K-1,\ell}\right| +L_\sigma\left\|\bm{W}_K\right\|_{_\pmean\to_\infty}\left\|\Xi^{\bm W}_{K-1,j,\ell}\right\|_\pmean\\
\le &\, M_\sigma \left\|\bm{W}_i \right\|_{_\pmean\to_\infty} \left\| \bm{\Lambda}^{\bm W^1}_{K-1,j}\right\|_\pmean\left\|\bm{W}_i\right\|_{_\pmean\to_\infty} \left\| \bm{\Lambda}^{\bm W^2}_{K-1,\ell}\right\|_\pmean \\
&\,+L_\sigma C\left\|\Xi^{\bm W}_{K-1,j,\ell}\right\|_\pmean
\lesssim w^{\max\left(0,\,\tfrac{2}{q}-\tfrac{1}{p}\right)},
\end{align*}
\begin{equation*}
\begin{aligned}
       \left|\Xi^{\bm b}_{K,j,\ell}\right|
       =&\,\left|\bm \Gamma_K\left((\bm{W}_K \, \bm{\Lambda}^{\bm b^1}_{K-1,j}) \cdot (\bm{W}_K \,  \bm{\Lambda}^{\bm b^2}_{K-1,\ell})\right) + \bm{\Sigma}_K \bm{W}_K \,\Xi^{\bm b}_{K-1,j,\ell}\right|\\
        \le &\, M_\sigma \left|\bm{W}_K \, \bm{\Lambda}^{\bm b^1}_{K-1,j}\right|\left|\bm{W}_K \, \bm{\Lambda}^{\bm b^2}_{K-1,\ell}\right|+ L_\sigma \left\|\bm{W}_K\right\|_{_\pmean\to\infty}\left\|  \,\Xi^{\bm b}_{K-1,j,\ell}\right\|_\pmean\\
        \le &\, M_\sigma \left\|\bm{W}_K\right\|_{_\pmean\to_\infty}\left\|\bm{\Lambda}^{\bm b^1}_{K-1,j}\right\|_{\pmean}\left\|\bm{W}_K\right\|_{_\pmean\to_\infty}\left\| \bm{\Lambda}^{\bm b^2}_{K-1,\ell}\right\|_{\pmean}\\
        &\,+L_\sigma \left\|\bm{W}_K\right\|_{_\pmean\to_\infty}\left\|\Xi^{\bm b}_{K-1,j,\ell}\right\|_\pmean \lesssim  w^{\max\left(0,\,\tfrac{2}{q}-\tfrac{1}{p}\right)}.
\end{aligned}
\end{equation*}
Finally, for $\Xi^{\bm W,\bm b}_{K,j,\ell}$ and $\Xi^{\bm W,\bm b}_{K,\ell, j}$, we have
\begin{align*} \left|\Xi^{\bm W,\bm b}_{K,j,\ell}\right|
= & \, \left|\bm \Gamma_{K}\left((\bm{W}_K \, \bm{\Lambda}^{\bm W^1}_{K-1,j}) \cdot (\bm{W}_K \,  \bm{\Lambda}^{\bm b^2}_{K-1,\ell})\right)
+ \bm{\Sigma}_K \bm{W}_K \,\Xi^{\bm W,\bm b}_{K-1,j,\ell}\right|\\
\le &\, M_\sigma \left|\bm{W}_K \, \bm{\Lambda}^{\bm W^1}_{K-1,j}\right|\left|\bm{W}_K \, \bm{\Lambda}^{\bm b^2}_{K-1,\ell}\right| +L_\sigma\left\|\bm{W}_K\right\|_{_\pmean\to_\infty}\left\|\Xi^{\bm W,\bm b}_{K-1,j,\ell}\right\|_\pmean\\
\le &\, M_\sigma \left\|\bm{W}_i \right\|_{_\pmean\to_\infty} \left\| \bm{\Lambda}^{\bm W^1}_{K-1,j}\right\|_\pmean\left\|\bm{W}_i\right\|_{_\pmean\to_\infty} \left\| \bm{\Lambda}^{\bm b^2}_{K-1,\ell}\right\|_\pmean \\ &\, +L_\sigma C\left\|\Xi^{\bm W,\bm b}_{K-1,j,\ell}\right\|_\pmean
\lesssim w^{\max\left(0,\,\tfrac{2}{q}-\tfrac{1}{p}\right)}.
\end{align*}
Putting everything together yields the result.  
\end{proof}  
\begin{highlightbox}
    \textbf{Message:}
While certain operator-norm geometries allow the $M$-Lipschitz constant to remain well controlled, the behavior of the $L$-smoothness coefficient can vary significantly:
\begin{itemize}
  \item \textbf{\textrm{Muon}:}
  Under the $\|\cdot\|_{(2,\textrm{mean})\to(2,\textrm{mean})}$ geometry, the $L$-smoothness coefficient scales as $O(\sqrt{\text{width}})$.
  \item \textbf{Width-independent smoothness:}
  In contrast, width-independent $L$-smoothness can be achieved under alternative geometries. In particular, the gradient of a neural network admits a width-independent $L$-smoothness coefficient under the $\|\cdot\|_{(1,\textrm{mean})\to\pmean}$ or $\|\cdot\|_{\pmean\to \infty}$ geometries. As a special case, under the $\|\cdot\|_{(1,\textrm{mean})\to\infty}$ geometry associated with \textrm{Adam}/\textrm{SignSGD}, the $L$-smoothness coefficient is width-independent.
\end{itemize}
\end{highlightbox}

% \begin{highlightbox}
% \textbf{Message:} Although the operator norms may play nicely together, allowing the $M$-Lipschitz constant to be well controlled, the $L$-smoothness coefficient can still exhibit diverse behavior:
% \begin{itemize}
%   \item \textbf{\textrm{Muon}:} Under the $\|\cdot\|_{\textrm{RMS} \to \textrm{RMS}}$ geometry, the $L$-smoothness coefficient scales as $O(\sqrt{\text{width}})$.
%      \item \textbf{Width-independent Smoothness is Possible:}  The gradient of a neural network enjoys a width-independent $L$-smoothness coefficient in the $\|\cdot\|_{{1,\textrm{mean}} \to \pmean}$ or $\|\cdot\|_{\pmean \to \ell_\infty}$ geometry. Under the $\|\cdot\|_{{1,\textrm{mean}} \to \ell_\infty}$ geometry of \textrm{Adam}/\textrm{SignSGD}, the neural network enjoys a width-independent $L$-smoothness coefficient.
% \end{itemize}
% \end{highlightbox}

\subsection{\textrm{MOGA} Optimizer} 
% : Fixing the Scaling Problems in $p \to q$ Geometry}
\label{subsection:scaling}

In this subsection, we show that passing from the classical $p\to q$ geometry to the
$(p,\textrm{mean})\to(q,\textrm{mean})$ geometry amounts to a width-aware rescaling of the learning rate.
This follows from the fact that the mean-normalized norm $\|\cdot\|_{(p,\textrm{mean})}$ differs from the standard
$\ell_p$ norm on $\mathbb{R}^d$ only by a multiplicative factor of $d^{1/p}$.
Consequently, the two geometries induce equivalent update directions up to a dimension-dependent scaling,
which can be absorbed into the step size. 
Moreover, we show that the resulting width-aware rescaling coincides exactly with the
$\mu$P scaling rule of \cite{yang2022tensor} in the special cases of \textrm{Adam} and \textrm{SignSGD}.
Precisely, the following equivalence holds.

\begin{fact}\label{fact:mogascaling} 
Let $\bm{D} \in \mathbb{R}^{d_{\mathrm{out}}\times d_{\mathrm{in}}}$ and
$1 \le p,q\le \infty$.
Then the operator norms under the mean-normalized geometry satisfy
% \begin{equation}
% \left\|\bm{D}\right\|_{_{(1,\textrm{mean})}\to_\qmean}=\frac{d_{\mathrm{in}}}{d_{\mathrm{out}}^{1/q}}\left\|\bD\right\|_{1\rightarrow q},
% \quad \text{and} \quad \left\|\bm{D}\right\|_{(p,\textrm{mean}) \to \infty}
% =
% d_{\mathrm{in}}^{1/p}
% \left\|\bD\right\|_{p \to \infty}
% .
% \end{equation}
\[
\left\|\bm{D}\right\|_{{(p,\textrm{mean})}\rightarrow {(q,\textrm{mean})}}= \boxed{\frac{d_{\mathrm{in}}^{1/p}}{d^{1/q}_{\mathrm{out}}}}\cdot \left\|\bm{D}\right\|_{p\rightarrow q}.
\]
Consequently, the steepest-ascent directions induced by these norms coincide
with those in Proposition~\ref{proposition:rowcolnorm} up to a width-dependent
rescaling factor. In particular,
\begin{itemize}[leftmargin=0.15in]
\setlength{\itemsep}{4pt}
\item[\textnormal{(i)}] (\textbf{Column-wise update, $\|\cdot\|_{(1,\textrm{mean})\to \qmean}$}) 
\[
\bm{D}^\star = \boxed{\frac{d_{\mathrm{out}}^{1/q}}{d_{\mathrm{in}}}}\cdot\textrm{colnorm}_q(\bm{G}).\]
\item[\textnormal{(ii)}] 
(\textbf{Row-wise update, $\|\cdot\|_{\pmean\to_\infty}$}) 
\[
\bm{D}^\star = \boxed{\frac{1}{d^{1/p}_{\mathrm{in}}}}\cdot \textrm{rownorm}_p(\bm{G}). 
\]
\end{itemize}
\end{fact}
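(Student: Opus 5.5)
The identity follows directly from the definitions. For $\x\in\mathbb{R}^{d_{\mathrm{in}}}$ we have $\|\x\|_{(p,\mathrm{mean})}=d_{\mathrm{in}}^{-1/p}\|\x\|_p$, and for $\bm{D}\x\in\mathbb{R}^{d_{\mathrm{out}}}$ we have $\|\bm{D}\x\|_{(q,\mathrm{mean})}=d_{\mathrm{out}}^{-1/q}\|\bm{D}\x\|_q$. Here we use the convention $d^{-1/\infty}=1$, which matches $\|\cdot\|_{(\infty,\mathrm{mean})}=\|\cdot\|_\infty$. Substituting both into Definition~\ref{def:op_norm} gives
\[
\left\|\bm{D}\right\|_{(p,\mathrm{mean})\to(q,\mathrm{mean})}=\sup_{\x\neq 0}\frac{d_{\mathrm{out}}^{-1/q}\left\|\bm{D}\x\right\|_q}{d_{\mathrm{in}}^{-1/p}\left\|\x\right\|_p}=\frac{d_{\mathrm{in}}^{1/p}}{d_{\mathrm{out}}^{1/q}}\left\|\bm{D}\right\|_{p\to q}.
\]
The constant factor comes out of the supremum because it does not depend on $\x$.

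For the update directions, set $\kappa:=d_{\mathrm{in}}^{1/p}/d_{\mathrm{out}}^{1/q}>0$. The mean-normalized unit ball is $\{\bm{D}:\kappa\|\bm{D}\|_{p\to q}\le 1\}$, which equals $\kappa^{-1}\{\bm{E}:\|\bm{E}\|_{p\to q}\le1\}$. Substituting $\bm{D}=\kappa^{-1}\bm{E}$ turns the objective $\langle \bm{G},\bm{D}\rangle$ into $\kappa^{-1}\langle\bm{G},\bm{E}\rangle$. Since $\kappa^{-1}>0$, the minimizers correspond exactly: $\bm{D}^\star=\kappa^{-1}\bm{E}^\star$, where $\bm{E}^\star$ solves \eqref{eq:steep_descent} under the classical norm.

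Both cases now follow by specializing $\kappa$ and quoting Proposition~\ref{proposition:rowcolnorm} for $\bm{E}^\star$, with the sign convention used there.
\begin{itemize}[leftmargin=0.15in]
\item[(i)] For $p=1$, $\kappa=d_{\mathrm{in}}/d_{\mathrm{out}}^{1/q}$. This gives the factor $d_{\mathrm{out}}^{1/q}/d_{\mathrm{in}}$ in front of $\textrm{colnorm}_q(\bm{G})$.
\item[(ii)] For $q=\infty$, $\kappa=d_{\mathrm{in}}^{1/p}$. This gives the factor $d_{\mathrm{in}}^{-1/p}$ in front of $\textrm{rownorm}_p(\bm{G})$.
\end{itemize}

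None of these steps is hard. The only care needed is the $p=\infty$ or $q=\infty$ endpoint, where the mean norm must be read as the max norm so that the scaling factor equals $1$. One should also confirm that Proposition~\ref{proposition:rowcolnorm}'s sign convention (ascent versus descent) is carried through unchanged, which it is, since the scaling is positive.
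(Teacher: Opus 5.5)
Your proposal is correct and follows the paper's proof: both write out Definition~\ref{def:op_norm} and pull the $\bm{x}$-independent factor $d_{\mathrm{in}}^{1/p}/d_{\mathrm{out}}^{1/q}$ out of the supremum. The paper stops at the norm identity and leaves the rescaling of the update directions implicit, so your substitution $\bm{D}=\kappa^{-1}\bm{E}$ (with $\kappa^{-1}>0$ preserving the minimizers and the sign convention) fills in that step, as does your remark on the $p=\infty$ or $q=\infty$ endpoint.
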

\begin{proof}{Proof of Fact~\ref{fact:mogascaling}.}
   By Definition \ref{def:op_norm}, we have 
\[
\left\|\bm{D}\right\|_{(p,\textrm{mean})\to (q,\textrm{mean})}
=
\sup_{\x\neq0}
\frac{\left\|\bm{D} \x\right\|_{(q,\textrm{mean})}}{\left\|\x\right\|_{(p,\textrm{mean})}} 
=
\frac{d^{1/p}_{\mathrm{in}}}{d_{\mathrm{out}}^{1/q}}
\sup_{\x\neq0}
\frac{\left\|\bm{D} \x\right\|_q}{\left\|\x\right\|_p}
=
\frac{d^{1/p}_{\mathrm{in}}}{d_{\mathrm{out}}^{1/q}}
\left\|\bm{D}\right\|_{p\rightarrow q}, 
\]

% \[
% \left\|\bm{D}\right\|_{(p,\mathrm{mean})\to \infty}
% =
% \sup_{\x\ne 0}
% \frac{\left\|\bm{D}\x\right\|_{\infty}}{\left\|\x\right\|_{(p,\mathrm{mean})}} =  {d_{\mathrm{in}}}^{1/p}\sup_{\x\neq0}\frac{\left\|\bm{D} \x\right\|_{\infty}}{\left\|\x\right\|_{p}} ={d_{\mathrm{in}}}^{1/p}\left\|\bm{D}\right\|_{p\to \infty},
% \]

which establishes the claimed identities. This completes the proof.
\end{proof}

\begin{comment}
\begin{fact}[$(p,\textrm{mean})\to(q,\textrm{mean})$ Geometry as Width-aware Scaling of $p\to q$ Geometry]
\label{fact:mogascaling} For a matrix $\bD\in\mathbb{R}^{\textrm{fan\_out}\times \textrm{fan\_in}}$ and $1\le p \le \infty$, then we have 
\begin{equation} \left\|\bD\right\|_{_{(1,\textrm{mean})}\to_\pmean}=\frac{\textrm{fan\_in}}{\textrm{fan\_out}^{1/p}}\left\|\bD\right\|_{1\rightarrow p},\quad  \left\|\bD\right\|_{_\pmean\to_\infty}={\textrm{fan\_in}}^{1/p}\left\|\bD\right\|_{p\rightarrow\infty}.
\end{equation}
Thus the steepest descent direction associated with the above two norms is steepest update in Proposition \ref{proposition:rowcolnorm} with an appropriate width-dependent rescaling:
\begin{itemize}
    \item $\arg\max_{\|X\|_{(1,\textrm{mean})\to \pmean}\le 1}\langle X,\bm{G}\rangle
= {\color{red}\frac{\textrm{fan\_out}^{1/p}}{\textrm{fan\_in}}}\,\textrm{rownorm}_p(\bm{G}),$
\item $\arg\max_{\|X\|_{_\pmean\to_\infty}\le 1}\langle X,\bm{G}\rangle
={\color{red}\frac{1}{\textrm{fan\_in}^{1/p}}}\,\textrm{colnorm}_p(\bm{G}).$
\end{itemize}
\end{fact}
\end{comment}

The $\pmean\to\qmean$ geometry induces update directions that coincide with
those of the classical $p\to q$ geometry up to width-dependent rescaling factors.
These factors, highlighted in Fact~\ref{fact:mogascaling} (boxed for emphasis),
are referred to as
MOGA (Matrix Operator Geometry Aware) scaling.
Motivated by this observation, we introduce a simple yet general family of
optimizers based on this scaling, which we term \textbf{MOGA}. Here, we treat all parameters in a block-wise manner.
Gradients and momentum variables inherit the same block structure as ${\bm{\Theta}} = \{\bm{W}_i, \b_i\}_{i=1}^\ell$.

\begin{algorithm}
\caption{\textbf{MOGA}: Matrix-Operator-Geometry-Aware Steepest Descent}
\label{alg:scaled_normalization}
\vskip6pt
\begin{algorithmic}[1]
\Require Learning rates $\{\eta^t\}_{t\ge 0}$, momentum parameters $\beta_1,\beta_2\in(0,1)$,
initial parameters ${\bm{\Theta}}^{0}$
\State $\bm{M}^0 \gets \mathbf{0}$ \Comment{$\bm{M}^t$ has the same block structure as ${\bm{\Theta}}^t$}
\For{$t=1,2,\dots$}
    \State Sample stochastic gradient: $\bm{G} \gets \nabla F({\bm{\Theta}}^{t-1};\xi)$, $\xi\sim\mathbb{P}$
    \State Exponential moving average: $\bm{M}^{t} \gets \beta_1 \bm{M}^{t-1} +(1-\beta_1)\bm{G}$
    \State Lookahead (Nesterov-type) momentum: $\tilde{ \bm{M}}^{t} \gets \beta_2 \bm{M}^{t-1} +(1-\beta_2)\bm{G}$

    \If{Descent under $(1,\mathrm{mean})\to(q,\mathrm{mean})$} \Comment{$q\ge 2$}
        \For{$i=1$ to $\ell$}

            \State $\bm{W}^t_i \gets \bm{W}^{t-1}_i - \eta^t \cdot \boxed{\frac{(d^{i}_{\mathrm{out}})^{1/q}}{d^{i}_{\mathrm{in}}}}
                \cdot \textrm{colnorm}_q(\tilde{\bm{M}}^t_{\bm{W}^{t-1}_i})$
                % \Comment{Weight matrix block $\bm{W}_i\in\mathbb{R}^{d^{i}_{\mathrm{out}}\times d^{i}_{\mathrm{in}}}$}
            % \For{$c=1$ to $d^{i}_{\mathrm{in}}$}
            %     \State $
            %     (\bm{W}^t_i)_{:,c} \gets (\bm{W}^{t-1}_i)_{:,c} 
            %     - \eta^{t}\,
            %     \boxed{\frac{(d^{i}_{\mathrm{out}})^{1/q}}{d^{i}_{\mathrm{in}}}}
            %     \cdot
            %     \frac{\mathrm{sign}(\tilde M^{t}_{W_i,:,c})\odot|\tilde M^{t}_{W_i,:,c}|^{q^{\ast}-1}}
            %     {\|\tilde M^{t}_{W_i,:,c}\|_{q^\ast}^{q^\ast-1}}$
            % \EndFor
            % \Comment{Bias block $b_i\in\mathbb{R}^{d^{(i)}_{\mathrm{out}}}$ (use steepest descent under $(q,\mathrm{mean})$ if desired)}
            \State $\b_i^{t} \gets \b_i^{t-1} - \eta^{t}\,\mathrm{sign}(\tilde{\bm{M}}^{t}_{\b_i})$
            \Comment{or steepest descent under $(q,\mathrm{mean})$ norm}
        \EndFor

    \ElsIf{Descent under $(p,\mathrm{mean})\to\infty$}
        \For{$i=1$ to $\ell$}
            \State 
            % \For{$r=1$ to $d^{(i)}_{\mathrm{out}}$}
                $\bm{W}^t_i \gets \bm{W}^{t-1}_i - \eta^t \cdot \boxed{\frac{1}{(d^{i}_{\mathrm{in}})^{1/p}}}
                \cdot \textrm{rownorm}_p(\tilde{\bm{M}}^t_{\bm{W}^{t-1}_i})$
            
            %     \State $W^{(t)}_{i,r,:} \gets W^{(t-1)}_{i,r,:}
            %     - \eta^{(t)}\,
            %     \textcolor{red}{(d^{(i)}_{\mathrm{in}})^{-1/p}}
            %     \cdot
            %     \frac{\mathrm{sign}(\tilde M^{t}_{W_i,r,:})\odot|\tilde M^{t}_{W_i,r,:}|^{p-1}}
            %     {\|\tilde M^{t}_{W_i,r,:}\|_{p}^{p-1}}$
            % \EndFor
            \State $\b_i^{t} \gets \b_i^{t-1} - \eta^{t}\,\mathrm{sign}(\tilde{\bm{M}}^{t}_{\b_i})$
        \EndFor
    \EndIf
\EndFor
\end{algorithmic}
\end{algorithm}

The MOGA scaling derived from our analysis also provides
a principled explanation for hyperparameter transfer across widths, namely, that appropriately rescaled parameters and learning rates preserve the effective optimization geometry as the network width grows, thereby maintaining width-independent optimization behavior. Consequently, hyperparameters tuned on small proxy networks can be
transferred reliably to much wider models.
This property is particularly important in modern large-scale pretraining, where hyperparameter tuning is computationally expensive and reliable transfer across model sizes is crucial. 

Moreover, MOGA scaling exactly recovers the Maximal Update Parametrization ($\mu$P scaling) \citep{yang2021tensor} in the special case of \textrm{Adam}. Importantly, the underlying principles are conceptually distinct:
$\mu$P is motivated by preserving feature-learning behavior as width grows (e.g., keeping feature-level update magnitudes non-degenerate),
whereas MOGA scaling arises from an optimization-geometry viewpoint, via width-independent Lipschitz and smoothness control under mean-normalized operator norms.
Nevertheless, both lead to the hyperparameter transfer phenomenon observed in practice.  The original $\mu$P work primarily provided a parametrization rule motivated by
feature-learning considerations, without a formal theoretical analysis. Later, the work of \citep{yang2023spectral} proposes a spectral condition requiring the update
direction to have spectral norm $\sqrt{\tfrac{d_{\mathrm{in}}}{d_{\mathrm{out}}}}$, which
recovers the $\mu$P scaling for SGD.
From the perspective of our operator-geometry framework,
this spectral condition is equivalent to requiring that the update direction has a width-independent
$\|\cdot\|_{(2,\textrm{mean})arrow(2,\textrm{mean})}$ norm. We first verify that MOGA scaling for \textrm{Adam}/\textrm{SignSGD}
satisfies the spectral condition of \citep{yang2023spectral}.
Fact~\ref{fact:pmean-monotone} implies that 
$
\|\cdot\|_{(1,\textrm{mean})arrow \infty}
\le
\|\cdot\|_{(2,\textrm{mean})arrow(2,\textrm{mean})}.
$ 
Therefore, any update direction normalized to unit
$\|\cdot\|_{(1,\textrm{mean})arrow\infty}$ norm
automatically has a width-independent
$\|\cdot\|_{(2,\textrm{mean})arrow(2,\textrm{mean})}$ norm, and hence satisfies the spectral condition.

Nevertheless, MOGA scaling also admits regimes where updates remain stable even when the spectral condition is violated. To illustrate this, consider the   $(3,\mathrm{mean})\to\infty$ operator norm
and the rank-one matrix $\bm D\in \mathbb{R}^{d\times d}$ defined by $\bm{D}_{i,1} = d^{-1/3}$ and $\bm{D}_{i,j}=0$ when $j\neq 1$. 
It is straightforward to verify that  $\|\bm D\|_{(3,\mathrm{mean})\to\infty}=1$. 
In contrast, its spectral norm satisfies
$
\|\bm D \|_{(2,\mathrm{mean})\to(2,\mathrm{mean})}
=d^{1/6},
$
which grows with $d$ and therefore violates the spectral condition.
Empirically, as demonstrated in Figure~\ref{fig:lr_transfer_3} with $p=3$,
the optimal learning rate of \textrm{MOGA} with row normalization remains invariant
from GPT-2 Small to GPT-XL, indicating that the optimization dynamics
remain stable across widths.
This example shows that hyperparameter transfer in practice can arise even outside
$\mu$P scaling developed through the spectral condition, highlighting the effectiveness of the proposed
MOGA scaling.

\begin{highlightbox}
\textbf{Message}: 
Under the $(p,\mathrm{mean}) \to (q,\mathrm{mean})$ geometry,
we derive a width-aware \emph{MOGA} scaling rule that applies to a broad
family of optimizers.
In particular, several commonly used methods arise as special cases:
\vspace{2mm}
\begin{center}
% \caption{MOGA scaling under different operator geometries.
% % For a layer with input dimension $d_{\mathrm{in}}$ and output dimension
% % $d_{\mathrm{out}}$, the learning-rate scaling follows from
% % Fact~\ref{fact:mogascaling}.
% }
\setlength{\tabcolsep}{2pt}   % 默认 6pt，缩小列间距

\begin{tabularx}{\linewidth}{l c c X}
\toprule
Optimizer & Geometry & Lr scaling & Remark \\
\midrule

Adam / SignSGD 
& $(1,\mathrm{mean})\to\infty$
& $\displaystyle 1/{d_{\mathrm{in}}}$
& Recovers $\mu$P scaling \citep{yang2022tensor} \\

Column Normalization 
& $(1,\mathrm{mean})\to(q,\mathrm{mean})$
& $\displaystyle d_{\mathrm{out}}^{1/q}/d_{\mathrm{in}}$
& $q\ge2$  \\

Row Normalization 
& $(p,\mathrm{mean})\to\infty$
& $\displaystyle {d_{\mathrm{in}}^{-1/p}}$
& \\

\multirow{2}{*}{Muon}
& \multirow{2}{*}{$(2,\mathrm{mean})\to(2,\mathrm{mean})$}
& $\displaystyle \sqrt{{d_{\mathrm{in}}}/{d_{\mathrm{out}}}}$
& \multirow{2}{=}{Two controversial learning-rate rules} \\

& & $\displaystyle \sqrt{\max\{d_{\mathrm{out}},d_{\mathrm{in}}\}}$ & \\
\bottomrule
\end{tabularx}
\end{center}
% \vspace{2mm}
% \begin{itemize}
% \item The scaling $\sqrt{d_{\mathrm{in}}/d_{\mathrm{out}}}$ 
% follows directly from the width-independent Lipschitz bound established in 
% Theorem~\ref{thm:width-lip}, and corresponds to the width-invariant MOGA scaling rule.

% \item On the other hand, Theorem~\ref{theorem:Lsmooth} shows that under the 
% $(2,\mathrm{mean}) \to (2,\mathrm{mean})$ geometry the $L$-smoothness
% constant grows with width as $\mathcal{O}(\sqrt{w})$, which instead induces a
% learning-rate scaling $\sqrt{w}$. 
% This matches the empirical rule used in \cite{moonlight}.
% \end{itemize}
% \begin{itemize}
%     % \item If we use $\textrm{MAV}\to\ell_\infty$ norm do gradient descent, one need to scale the \textrm{Adam/SignSgd} updates by $\frac{1}{\textrm{fan\_in}}$ in each layer. Interestingly, our derived worst-case scaling coincides with the parameter choice in the $\mu$P parameterization \citep[Table 3]{yang2022tensor}, which scales the update by $\tfrac{1}{\mathrm{fan_in}}$ to achieve the same optimal convergence rate across different network widths, as predicted by an average-case analysis near initialization.
%     \item If we do column-wise $\ell_2$ normalization \citep{glentis2025minimalist}, the learning rate should scale $\frac{\textrm{fan\_out}^{1/2}}{\textrm{fan\_in}}$ in each layer. If we do row-wise $\ell_2$ normalization, the learning rate should scale $\sqrt{\frac{1}{\textrm{fan\_in}}}$ in each layer.
% \end{itemize}
\end{highlightbox}

\begin{remark}[Learning Rate Scaling of Muon] The learning-rate scaling of Muon remains unsettled in the literature,
with different prescriptions arising from distinct theoretical viewpoints. 
(i) 
Several works \citep{yang2023spectral,pethick2025training}
interpret Muon as a steepest-descent method under the
$(2,\mathrm{mean}) \to (2,\mathrm{mean})$ operator geometry.
Under this perspective, the scaling
$\sqrt{d_{\mathrm{in}}/d_{\mathrm{out}}}$
follows directly from the width-independent Lipschitz bound
established in Theorem~\ref{thm:width-lip},
and  the  width-invariant MOGA scaling rule derived here.
% which motivates inserting a factor $\sqrt{d_{\mathrm{in}}/d_{\mathrm{out}}}$ into the normalized update. 
(ii) In contrast, large-scale industrial implementations
\cite{moonlight} adopt the heuristic prescription
$
\eta_{\mathrm{Muon}}
=
\sqrt{\max\{d_{\mathrm{out}},d_{\mathrm{in}}\}}\;
\eta_{\mathrm{AdamW}},
$
where $\eta_{\mathrm{AdamW}}$ denotes the tuned AdamW learning rate.
Tensor-program analyses \citep{yang2022tensor}
suggest that $\eta_{\mathrm{AdamW}} = {{\Theta}}(1/w)$ as the network width
$w$ varies, implying that the above rule corresponds to
$\eta_{\mathrm{Muon}} = {{\Theta}}(1/\sqrt{w})$.
This scaling matches the worst-case $L$-smoothness growth predicted
by Theorem~\ref{theorem:Lsmooth} under the
$(2,\mathrm{mean}) \to (2,\mathrm{mean})$ geometry. (iii) 
We hypothesize that these two scalings correspond to different training regimes.
Near random initialization, the factor $\sqrt{d_{\mathrm{in}}/d_{\mathrm{out}}}$
helps maintain width-independent Lipschitz control of the loss,
whereas the worst-case $L$-smooth behavior becomes more relevant
during later stages of training.
This stage-dependent mismatch in step-size scaling may constitute a potential
limitation of Muon.
In contrast, the MOGA optimizer proposed in this paper derives its scaling
directly from the underlying operator geometry, and therefore does not
exhibit such regime-dependent inconsistencies.
\end{remark}

\section{Generalization to Transformer}
\label{section:adaptation}

We extend the scaling framework developed for feedforward networks in 
Section~\ref{sec:main} to Transformer architectures. In particular, we consider 
a standard Transformer block and specify, for each parameter matrix, the 
normalization and parametrization induced by 
Algorithm~\ref{alg:scaled_normalization}. 

Throughout this section, we adopt the $(p,\mathrm{mean}) \to (q,\mathrm{mean})$ 
operator-norm geometry introduced in Section~\ref{sec:main}. For most weight 
matrices in the Transformer block, the scaling rules follow directly from the 
same geometric principle. Vector parameters and a few special weight matrices  require slightly different treatment, 
which we discuss in detail below.
\begin{comment}
We extend the scaling framework developed for feedforward
networks in Section~\ref{sec:main} to Transformer architectures
In particular, we consider a standard Transformer block and specify,
for each parameter matrix, the normalization and parametrization induced by
Algorithm~\ref{alg:scaled_normalization}. In this section, we use the $(p,\mathrm{mean}) \to (q,\mathrm{mean})$ geometry throughout, except for vector parameters and certain special weight matrices, which will be discussed in detail below.
\end{comment}
\paragraph{Input Word Embedding and Positional Embeddings}
Input word embeddings map each token to a continuous representation,
while positional embeddings encode the position of each token in the sequence.
The token embedding matrix has dimension
$d_{\textrm{model}}\times \textrm{vocabsize}$,
and the positional embedding matrix has dimension
$d_{\textrm{model}}\times \textrm{contextsize}$.
In both cases, the ${\textrm{d}_{\textrm{in}}}$  dimension corresponds to
$\textrm{vocabsize}$ or $\textrm{contextsize}$,
while the ${\textrm{d}_{\textrm{out}}}$ dimension equals $d_{\textrm{model}}$.
Importantly, the ${\textrm{d}_{\textrm{in}}}$ dimensions of these embedding layers
are determined by the vocabulary size or context length and therefore do not
scale with the model width $d_{\textrm{model}}$.
As observed by \cite{yang2022tensor},
this implies that for Adam-type optimizers the magnitude of parameter
updates remains at an $\mathcal{O}(1)$ scale throughout training.

From an operator perspective, both token and positional embeddings act on one-hot basis vectors. Since one-hot vectors lie in the $\ell_1$ unit ball, it is therefore natural to model
the input space using the $\ell_1$ geometry. In our framework, hidden
representations are modeled in the $(q,\textrm{mean})$ geometry with
$q \ge 1$, so the embedding layer can be viewed as a linear operator
from $\ell_1$ to the hidden-feature space, i.e., a
$1 \rightarrow (q,\textrm{mean})$ operator. Under this operator norm,
\[
\|\bm{W}\|_{1\rightarrow(q,\mathrm{mean})}
=
\max_c \|\bm{W}_{:,c}\|_{(q,\mathrm{mean})},
\]
so the geometry directly controls the scale of individual embedding
vectors (the columns of the embedding matrix). This interpretation
is also consistent with the structure of embedding lookups, since
$\bm{W}\x = \bm{W}_{:,c}$ for one-hot inputs.

\begin{itemize}[leftmargin=0.15in]
\setlength{\itemsep}{4pt}
\item[\textnormal{(i)}] \textbf{SignSGD}. 
Our first choice models the embedding layer under the 
$1 \rightarrow \infty$ operator geometry.
Under this geometry, the steepest--descent update reduces to applying 
\textrm{SignSGD} to the embedding parameters
(Proposition~\ref{prop:signdgasoperator}).
\item[\textnormal{(ii)}]   \textbf{Scaled Column Normalization}. 
Our second choice models the embedding layer under the 
$1 \rightarrow (q,\textrm{mean})$ operator geometry.
The corresponding steepest--descent update takes the form $
{\textrm{d}_{\textrm{out}}}^{1/q}\,\textrm{colnorm}_{q}(\cdot)$. The factor ${\textrm{d}_{\textrm{out}}}^{1/q}$ arises because the
$(q,\textrm{mean})$ geometry corresponds to a rescaled
$\ell_q$-norm,
$
\|\x\|_{(q,\textrm{mean})} = n^{-1/q}\|\x\|_{q},
$
for vectors in $\mathbb{R}^n$.
Here $n={\textrm{d}_{\textrm{out}}}$, since each embedding vector lies in a
${\textrm{d}_{\textrm{out}}}$-dimensional output space.
\end{itemize}

% \begin{itemize}
% % \vspace{-0.1in}
%     \item \textbf{SignSGD}. Our first choice is to model the embedding layer under the \(1 \!\rightarrow\! \infty\) operator geometry, which naturally reduces to applying \textsc{SignSGD} (Proposition \ref{prop:signdgasoperator})  on the embedding parameters.
%     \item \textbf{Scaled Column Normalization} Our second choice is to model the embedding layer under the \(1 \!\rightarrow\! (p,\textrm{mean})\) operator geometry, where the corresponding steepest--descent update becomes \(\textrm{fan\_out}^{1/p}\,\textrm{colnorm}_{p}(\cdot)\). The \(\textrm{fan\_out}^{1/p}\) scaling arises because the \((p,\textrm{mean})\) geometry corresponds to a rescaled \(\ell_p\)-norm, defined by \(\|x\|_{(p,\textrm{mean})} = d^{-1/p}\|x\|_{p}\) for vectors in \(\mathbb{R}^d\), where \(d=\textrm{fan\_out}\) for the embedding layer since each embedding vector lives in a \(\textrm{fan\_out}\)-dimensional output space.
% \end{itemize}

\begin{figure}
    \centering
    \includegraphics[width=0.5\linewidth]{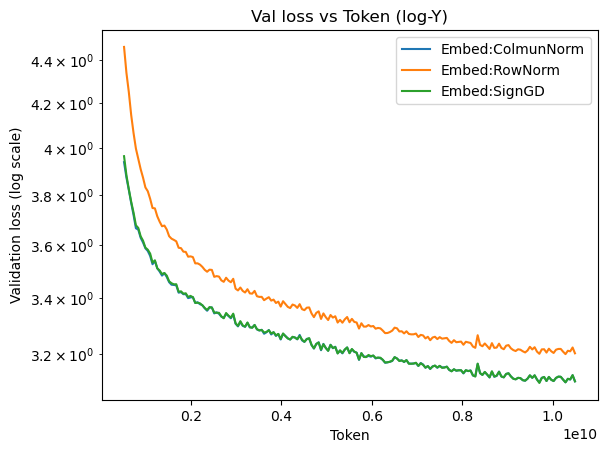}
    \caption{\textbf{Embedding layer geometry.} One-hot inputs place embedding training in the $1 \to \qmean$ geometry, where both column normalization and \textrm{SignGD} are effective.}
    \label{fig:embeeding}
\end{figure}

Although one-hot vectors are unit vectors in $\ell_p$ for any $p\ge1$,
modeling the embedding layer under the $p\rightarrow\infty$ geometry
(corresponding to a $\textrm{rownorm}_p(\cdot)$ update) performs poorly
in our experiments; see Figure~\ref{fig:embeeding}.
This observation further supports the $\ell_1$-based modeling choice
for embedding inputs.

\paragraph{Biases and LayerNorm Parameters}
Bias parameters directly perturb the hidden feature representation,
which lives in the $(q,\textrm{mean})$ geometry. To ensure that these updates remain safely within the feature constraint set, we adopt the $\ell_\infty$ geometry for all vector parameters. Since the $\ell_\infty$ ball is contained in the $(q,\textrm{mean})$ ball
when $q\ge1$, restricting updates to the $\ell_\infty$ geometry ensures that the resulting perturbation remains within the $(q,\textrm{mean})$
constraint set. Under the $\ell_\infty$ geometry, the steepest--descent direction is
given by the sign of the gradient, leading to a \textrm{SignSGD}-type update without additional scaling.

LayerNorm weights $\bm{w}^{\textrm{LN}}$ and biases $\bm{b}^{\textrm{LN}}$ are
vectors in $\mathbb{R}^{d_{\textrm{model}}}$. The LayerNorm weight
corresponds to a diagonal linear operator acting on the hidden feature space, whose operator norm under the
$(q,\textrm{mean}) \rightarrow (q,\textrm{mean})$ geometry reduces to the maximum absolute value of the diagonal entries. Consequently, the steepest--descent direction again collapses to a sign-based update.
In practice, we therefore apply pure \textrm{SignSGD} updates to all vector parameters, including LayerNorm weights, LayerNorm biases, and MLP biases. This scaling is consistent with the parametrization recommended by the
$\mu$P theory \citep{yang2022tensor}.

\paragraph{Self-Attention}
In a Transformer self-attention block, the input
$\bm{X} \in \mathbb{R}^{\text{contextsize} \times d_{\textrm{model}}}$
represents the hidden features of a sequence of tokens, where
$\text{contextsize}$ denotes the sequence length and
$d_{\textrm{model}}$ is the model width.
In the standard multi-head attention with $n_{\textrm{head}}$ heads~\citep{vaswani2017attention},
each head has value dimension $d_v$.
The input is projected into queries, keys, and values via
\[
\bm{Q} = \bm{X} \bm{W}_q, \qquad \bm{K} = \bm{X}\bm{W}_k, \qquad \bm{V} = \bm{X} \bm{W}_v,
\]
where
$
\bm{W}_q, \bm{W}^k, \bm{W}^v
\in
\mathbb{R}^{d_{\textrm{model}} \times (d_v n_{\textrm{head}})}.
$ 
The outputs from different heads are concatenated and projected back
to the model dimension through the output matrix
$
\bm{W}_o
\in
\mathbb{R}^{(d_v n_{\textrm{head}}) \times d_{\textrm{model}}}.
$
Following \cite{vaswani2017attention}, the matrices
$\bm{Q},\bm{K},\bm{V}$ are reshaped into $n_{\textrm{head}}$ heads,
i.e., tensors of shape
$(n_{\textrm{head}}, \text{contextsize}, d_v)$.
Each head then computes
\[
\mathrm{head}_i
=
\mathrm{softmax}\!\left(
\frac{\bm{Q}_i \bm{K}_i^\top}{\sqrt{d_v}}
\right)\bm{V}_i .
\]
The outputs of all heads are concatenated and projected back to the
model dimension through
\[
\mathrm{MultiHead}(
\bm{X}
)
= \mathrm{concat}(\mathrm{head}_1,\ldots,\mathrm{head}_{n_{\textrm{head}}})\, \bm{W}_o,
\qquad
\bm{W}_o \in \mathbb{R}^{d_{\textrm{model}}\times (d_v n_{\textrm{head}})}.
\]
However, in the $\mu$P parametrization \citep{yang2022tensor}, the authors
use a normalization factor $1/{d_v}$ rather than $1/\sqrt{d_v}$, i.e.,
\[
\mathrm{head}_i
=
\mathrm{softmax}\!\left(
\frac{\bm{Q}_i \bm{K}_i^\top}{{d_v}}
\right)\bm{V}_i .
\]
We now provide an alternative explanation from the perspective of
$M$-Lipschitzness and $L$-smoothness.
To ensure that the $\mathrm{softmax}$ operator remains $L$-smooth, the
magnitude of the attention logits must be controlled so that it does not
grow with the network width.
% Lemma~\ref{lem:mean-holder} implies that if the query and key vectors
% are bounded in the $(p,\mathrm{mean})$ norm, then their inner product
% satisfies
% \[
% q^\top k = \mathcal{O}\!\left(d^{\,\max\{1,\;2/p\}}\right).
% \]
Under row normalization
(i.e., the geometry $(p,\textrm{mean})  \to \ell_\infty$),
both query and key vectors are bounded in the $\ell_\infty$ norm.
Consequently,
$
|\q^\top \k|
\le
d_v\,\|\q\|_\infty \|\k\|_\infty
=
\mathcal{O}(d_v),
$
so we apply a normalization factor $1/d$ to keep the attention logits
at a constant scale.
In contrast, under column normalization 
(i.e., the geometry $(1,\textrm{mean}) \to  (q,\textrm{mean})$),
the query and key vectors remain bounded in the $(q,\mathrm{mean})$
norm. By Lemma~\ref{lem:mean-holder}, their inner product scales as
\[
\q^\top \k
=
\mathcal{O}\!\left(d_v^{\,\max\{1,\tfrac{2}{p}\}}\right),
\]
which leads to the normalization factor
$
d_v^{-\max\{1,2/p\}},
$
ensuring that the attention logits remain width-independent.

\begin{highlightbox}
\textbf{Attention scaling.}
\begin{itemize}[leftmargin=*]
\item $(p,\textrm{mean}) \!\to\! \ell_\infty$ (row normalization): scale by $1/d_v$.
\item $(1,\textrm{mean}) \!\to\! (q,\textrm{mean})$ (column normalization): scale by $d_v^{-\max\{1,2/q\}}$.
\item For \textrm{Adam/SignSgd} case, we recover the normalization in $\mu$P parameterization \citep{yang2022tensor}..
\end{itemize}
\end{highlightbox}

% \begin{highlightbox} \textbf{Message}: \begin{itemize}[leftmargin=*] \item For row normalization (i.e., the geometry $(p,\textrm{mean}) \!\to\! \infty$), we use $1/d$ normalization, \textit{i.e.} \(\mathrm{head}_i = \mathrm{softmax}\!\left(\frac{Q_i K_i^\top}{d_v}\right)V_i\). \item For column normalization (i.e., the geometry $(1,\textrm{mean}) \!\to\! (p,\textrm{mean})$), we use $1/d^{\,\max\{\,1,\;2/p\,\}}$ normalization, \textit{i.e.} \(\mathrm{head}_i = \mathrm{softmax}\!\left(\frac{Q_i K_i^\top}{d_v^{\,\max\{\,1,\;2/p\,\}}}\right)V_i\). \item For \textrm{Adam/SignSgd} case, we recover the normalization in $\mu$P parameterization \citep{yang2022tensor}. \end{itemize} \end{highlightbox}

\paragraph{MLP}
A Transformer MLP block contains two weight matrices
$
\bm{W}_1 \in \mathbb{R}^{d_{\textrm{ffn}}\times d_{\textrm{model}}}
$
and
$
\bm{W}_2 \in \mathbb{R}^{d_{\textrm{model}}\times d_{\textrm{ffn}}},
$
where the feedforward dimension is typically
$d_{\textrm{ffn}} = 4 d_{\textrm{model}}$. 
Accordingly, the $\textrm{d}_{\textrm{in}}$ and $\textrm{d}_{\textrm{out}}$t are
$(d_{\textrm{model}},\, d_{\textrm{ffn}})$ for $\bm{W}_1$
and $(d_{\textrm{ffn}},\, d_{\textrm{model}})$ for $\bm{W}_2$.
Following the treatment of other bias parameters, the MLP bias is
updated under the $\ell_\infty$ geometry, resulting in a
\textrm{SignSGD}-type update without additional scaling.

% \paragraph{MLP} In a Transformer MLP block, there are two weight matrices \(W^1 \in \mathbb{R}^{d_{\textrm{ffn}}\times d_{\textrm{model}}}, W^2 \in \mathbb{R}^{d_{\textrm{model}}\times d_{\textrm{ffn}}},\) where \( d_{\textrm{ffn}} \) is typically set to \(4d_{\textrm{model}}\).
% For \(W^1\), we set {$\textrm{fan\_in}$} to \(d_{\textrm{model}}\) and {$\textrm{fan\_out}$} to \(d_{\textrm{ffn}}\).
% For \(W^2\), we set {$\textrm{fan\_in}$} to \(d_{\textrm{ffn}}\) and {$\textrm{fan\_out}$} to \(d_{\textrm{model}}\).  In line with the treatment of other bias parameters, we update the MLP bias in the $\ell_\infty$ geometry. This results in a \textrm{signSGD}-type update with no scaling applied to the MLP bias.

%We treat the MLP bias in the same way as the LayerNorm bias. Although the bias could, in principle, be updated under the $\qmean$ geometry, we adopt the $\ell_\infty$--norm ball for simplicity. Since the $\ell_\infty$ ball is strictly smaller (and therefore more conservative) than the $\qmean$ ball, the update collapses to taking the sign of the gradient, i.e., \textrm{signSGD} without any scaling. As a result, 

\paragraph{Word Unembedding}
Following the $\mu$P scaling rule~\citep{yang2022tensor}, the word
\emph{unembedding} matrix is tied to the word \emph{embedding} matrix.
Consequently, the unembedding parameters follow the same scaling as
the embedding parameters, i.e., we do not scale down the learning rate
for the unembedding layer.

\section{Experiments}
We evaluate the proposed \textrm{MOGA} optimizer on large-scale language model
pre-training tasks and compare it against strong baselines including
\textrm{AdamW} and \textrm{Muon}.
Our experiments focus on three aspects:
(i) learning-rate transfer across model widths,
(ii) training efficiency under standard token budgets, and
(iii) performance under large token budgets.

We conduct experiments on both \textbf{GPT-2} and \textbf{LLaMA} architectures.
For GPT-2 models, we use the standard tokenizer with a vocabulary size 50,257
and context length 1,024, and train on the \emph{OpenWebText} dataset~\citep{Gokaslan2019OpenWeb}.
For LLaMA models, we use vocabulary size 32,000 and context length 256,
and pretrain from scratch on the \emph{C4} dataset~\citep{raffel2020exploring}.  All experiments are conducted on four NVIDIA H100 GPUs. 

We evaluate \textrm{MOGA} instantiated with row normalization,
and compare against \textrm{AdamW} and \textrm{Muon}.
The implementation code is publicly available at
\url{https://github.com/ruihanxx/rownorm}.

\subsection{Learning Rate Transfer}
\label{sec:lr_transfer} 
 In this subsection, we study whether the {optimal learning rate} remains invariant across model widths. Details are as follows:
\begin{itemize}
    \item \textit{Models. }We experiment on GPT-2 models ranging from Small to XL, with hidden dimensions from 768 to 1600 and parameter counts from 124M to 1.5B.
To evaluate width invariance, we instantiate \textrm{MOGA} with row normalization
using norm parameters $p\in\{1.5,2,3\}$.

\item \textit{LR schedulers and training configuration.} Models are pretrained from scratch with a 3B-token budget.
Due to a limited token budget, we adopt the LR schedule consisting of a linear warm-up over
10\% of the training steps followed by cosine decay.
The learning rate increases linearly from zero to the peak value $ \textrm{lr\_max}$, after which it follows a cosine annealing schedule down to the terminal learning rate $\textrm{lr\_min} = \textrm{lr\_max} /10$.  

\item \textit{Training configurations. } The total batch size corresponds to $1024\times512=524{,}288$ tokens per step.
We sweep the peak learning rate $\textrm{lr\_max}$ over
$\{1\times10^{-3},2\times10^{-3},4\times10^{-3},1\times10^{-2},2\times10^{-2},4\times10^{-2}\}$.
\end{itemize}
\begin{figure}[t]
\centering
\begin{minipage}[t]{0.32\linewidth}
  \centering
  \includegraphics[width=\linewidth]{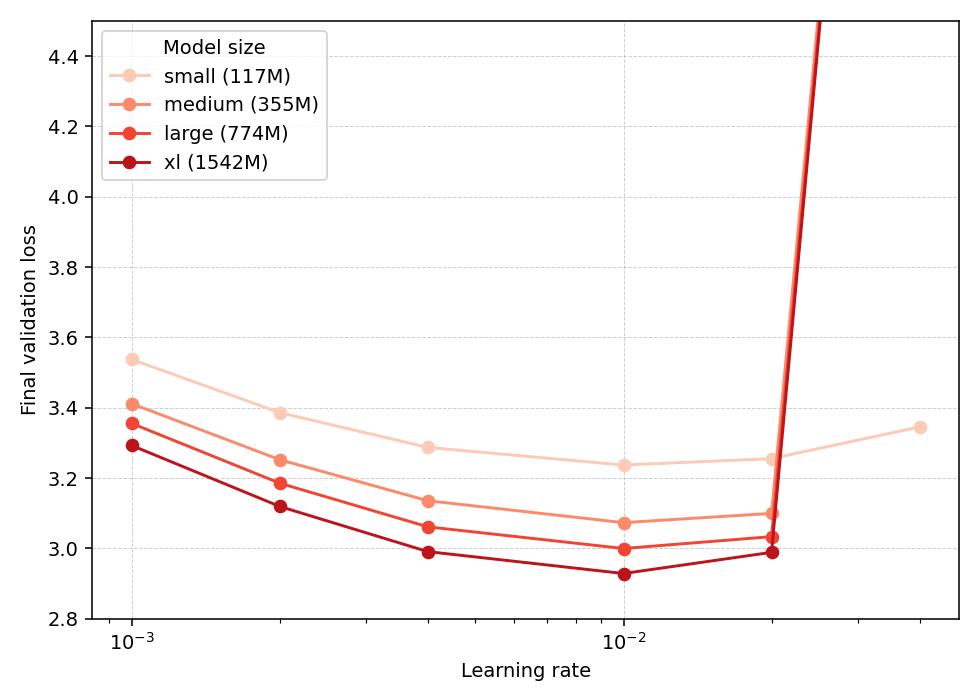}
  \subcaption{\textrm{MOGA} (p=1.5)}
\end{minipage}\hfill
\begin{minipage}[t]{0.32\linewidth}
  \centering
  \includegraphics[width=\linewidth]{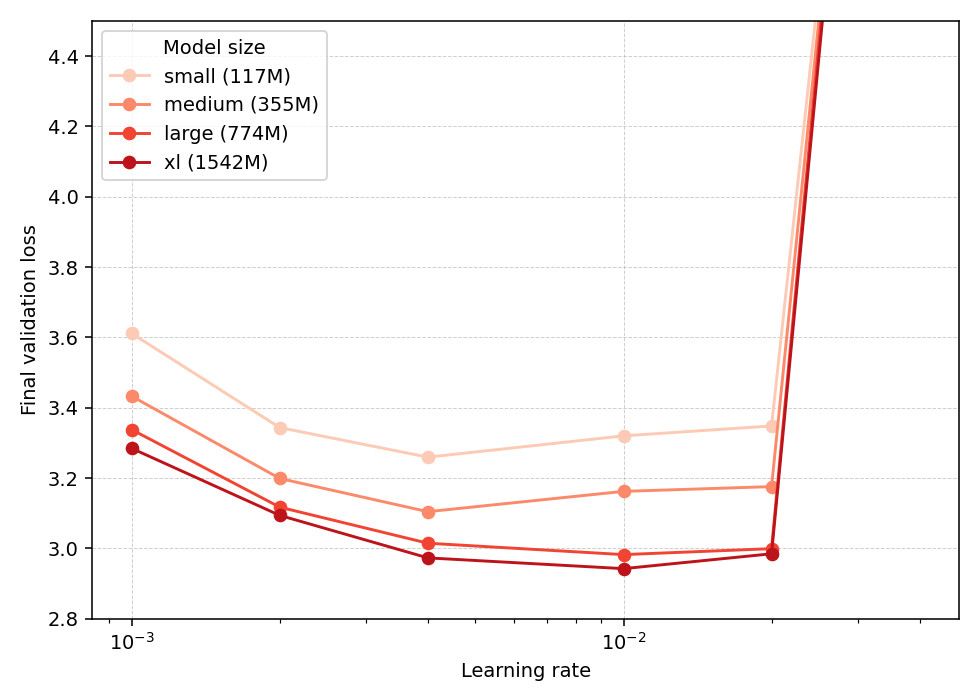}
  \subcaption{\textrm{MOGA} (p=2)}
\end{minipage}\hfill
\begin{minipage}[t]{0.32\linewidth}
  \centering
  \includegraphics[width=\linewidth]{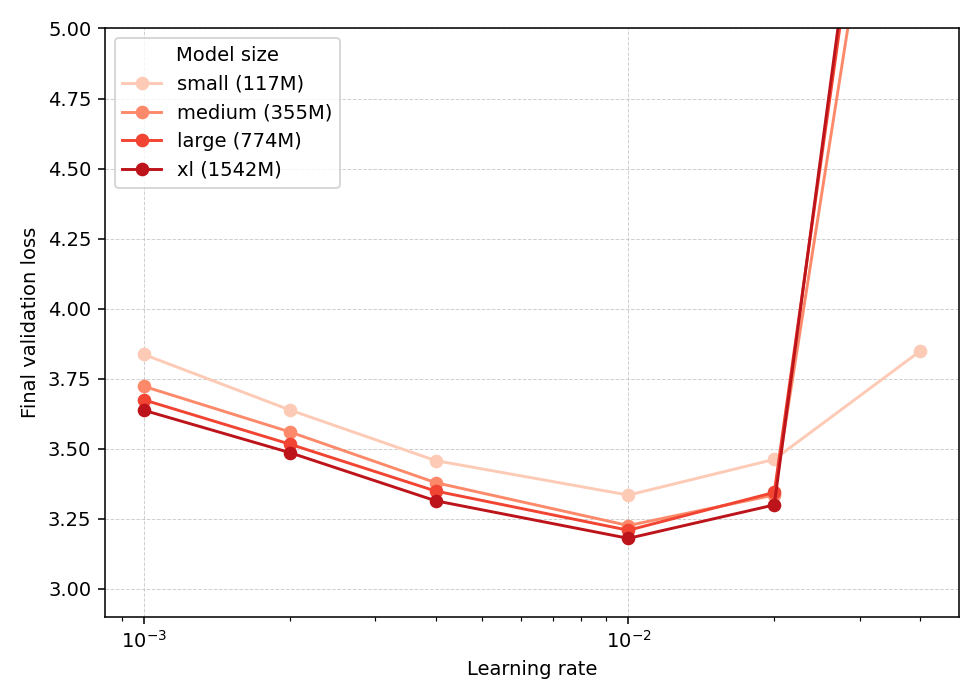}
  \subcaption{\textrm{MOGA} (p=3)}
\end{minipage}

\caption{\textbf{Learning Rate Transfer.} Performance of \textrm{MOGA} from GPT-2 Small to GPT-XL. The optimal learning rate of \textrm{MOGA} is invariant to width.}
\label{fig:lr_transfer_3}
\end{figure}
% We extend the $\mu$P scaling rule to a broader family of optimizers, enabling consistent learning-rate transfer across model widths. Models with radically different parameter counts converge fastest when using nearly the same $\textrm{lr\_max}$. This supports the idea that, when the parameterization follows a width-consistent scaling rule, the optimization geometry of the loss landscape remains stable as width grows. As a consequence, learning rates tuned on a small model can be directly transferred to large models {without additional search}. This property significantly reduces the cost of hyperparameter tuning and enables efficient scaling experiments.

% As shown in Figure~\ref{fig:lr_transfer_3}, the optimal learning rate of \textrm{MOGA}, with row normalization and multiple different norm parameters $p$, remains invariant
% across expanding model size, which supports our theory of width-independent smoothness in our optimization problem.

As shown in Figure~\ref{fig:lr_transfer_3}, models with substantially different
parameter counts achieve their best training performance at nearly the same
peak learning rate $\textrm{lr\_max}$.
This indicates that the optimal learning rate is effectively invariant to
model width under the proposed optimizer.
As a consequence, learning rates tuned on a small model can be directly
transferred to larger models without additional hyperparameter search,
significantly reducing the cost of scaling experiments.

This invariance holds consistently across multiple norm choices $p$ in the
row-normalization family, supporting our theoretical prediction of
width-independent smoothness under the mean-normalized operator geometry.
Notably, the case $p=3$ does not satisfy the spectral assumptions typically
required by $\mu$P scaling theory, yet still exhibits reliable learning-rate
transfer in practice.
This provides empirical evidence that our framework applies to a broader
class of optimizers beyond those covered by existing $\mu$P analyses.

\subsection{Standard Token Budget Experiment}
\label{sec:exp_stand}
For standard token budget pretraining, we experiment on \textbf{GPT-2 small} model and \textbf{LLaMA-130M} model, using approximately $\sim 1\times$ the Chinchilla-optimal number of training tokens~\cite{hoffmann2022training}. Details are as follows:
\begin{itemize}
    \item \textit{LR schedulers.} We adopt the same learning-rate schedule as in
Section~\ref{sec:lr_transfer}, namely a linear warm-up over 10\% of the total training steps
followed by cosine decay.
    \item \textit{Training configurations. } 
    For GPT-2, we use a total batch size of 1024 with a maximum sequence length 512, corresponding to $1024\times512=524{,}288$ tokens per step, which is consistent with the widely adopted configuration \citep{radford2019language} for GPT-2 small training. For LLaMA-130M, we follow \citep{wang2025conda} and use a total batch size of 512 with a maximum sequence length 256, yielding $512\times256=131{,}072$ tokens per step.
    \item \textit{Hyperparameters.}
    For the GPT-2 models, \textrm{AdamW} uses $\beta_{1}=0.9$, $\beta_{2}=0.95$, and weight decay $\lambda=0.1$, while \textrm{Muon} follows the configuration of~\citep{liu2025mars} with $\beta_1=0.95, \beta_2 = 0$ and weight decay $\lambda=0.1$.  For the LLaMA models, \textrm{AdamW} uses $\beta_{1}=0.9$, $\beta_{2}=0.999$, and weight decay $\lambda=0$, and \textrm{Muon} uses $\beta_1=0.95, \beta_2=0$ with weight decay $\lambda=0$.  For each optimizer, the peak learning rate $\textrm{lr\_max}$ is tuned via a logarithmic sweep during the first training epoch. Specifically, for GPT-2 we search over  $\textrm{lr\_max}\in[8\times10^{-5},\,8\times10^{-4}]$ for \textrm{AdamW}, and $\textrm{lr\_max}\in[4\times10^{-4},\,4\times10^{-3}]$ for \textrm{Muon}, and $\textrm{lr\_max}\in[4\times10^{-3},\,4\times10^{-2}]$ for \textrm{MOGA} instantiated with row normalization using $p=2$. For LLaMA we search over $\textrm{lr\_max}\in[4\times10^{-5},\,4\times10^{-4}]$ for \textrm{AdamW}, $\textrm{lr\_max}\in[4\times10^{-4},\,4\times10^{-3}]$ for \textrm{Muon}, and $\textrm{lr\_max}\in[4\times10^{-3},\,4\times10^{-2}]$ for \textrm{MOGA} instantiated with row normalization using $p=2$.  All models are subsequently trained using the optimizer-specific $\textrm{lr\_max}$ that achieves the best validation performance.
    % \item \textbf{Baselines. } 
    %     We evaluate our method against \textrm{AdamW} and \textrm{Muon}, which serve as strong baseline optimizers in all experiments. The details of configurations are as follows.
    %     \begin{itemize}
    %         \item \textbf{GPT-2 model.} For \textrm{AdamW}, we use the standard configuration with $\beta_{1}=0.9$, $\beta_{2}=0.95$, and weight decay $\lambda=0.1$. For \textrm{Muon}, we adopt settings from ~\citep{liu2025mars}, with momentum parameter $\beta=0.95$ and weight decay $\lambda=0.1$. For each optimizer, we tune the peak learning rate $\textrm{lr\_max}$ via a logarithmic sweep during the first training epoch.Specifically, we search over $\textrm{lr\_max}\in[8\times10^{-5},\,8\times10^{-4}]$ for \textrm{AdamW} and $\textrm{lr\_max}\in[4\times10^{-4},\,4\times10^{-3}]$ for \textrm{Muon}.
    %         \item \textbf{LLaMA model.} For \textrm{AdamW}, we use $\beta_{1}=0.9$, $\beta_{2}=0.99$, and weight decay $\lambda=0$. For \textrm{Muon}, we use momentum parameter $\beta=0.95$ and weight decay $\lambda=0$. For each optimizer, we tune the peak learning rate $\textrm{lr\_max}$ via a logarithmic sweep during the first training epoch.Specifically, we search over $\textrm{lr\_max}\in[4\times10^{-5},\,4\times10^{-4}]$ for \textrm{AdamW},  $\textrm{lr\_max}\in[4\times10^{-4},\,4\times10^{-3}]$ for \textrm{Muon}, and $\textrm{lr\_max}\in[4\times10^{-2},\,4\times10^{-3}]$ for \textrm{MOGA}
    %     \end{itemize}
        
    %     All baseline models are subsequently trained using the optimizer-specific $\textrm{lr\_max}$ that achieves the best validation performance.
\end{itemize}
\begin{figure}[t]
\centering
\begin{minipage}[t]{0.5\linewidth}
  \centering
  \includegraphics[width=\linewidth]{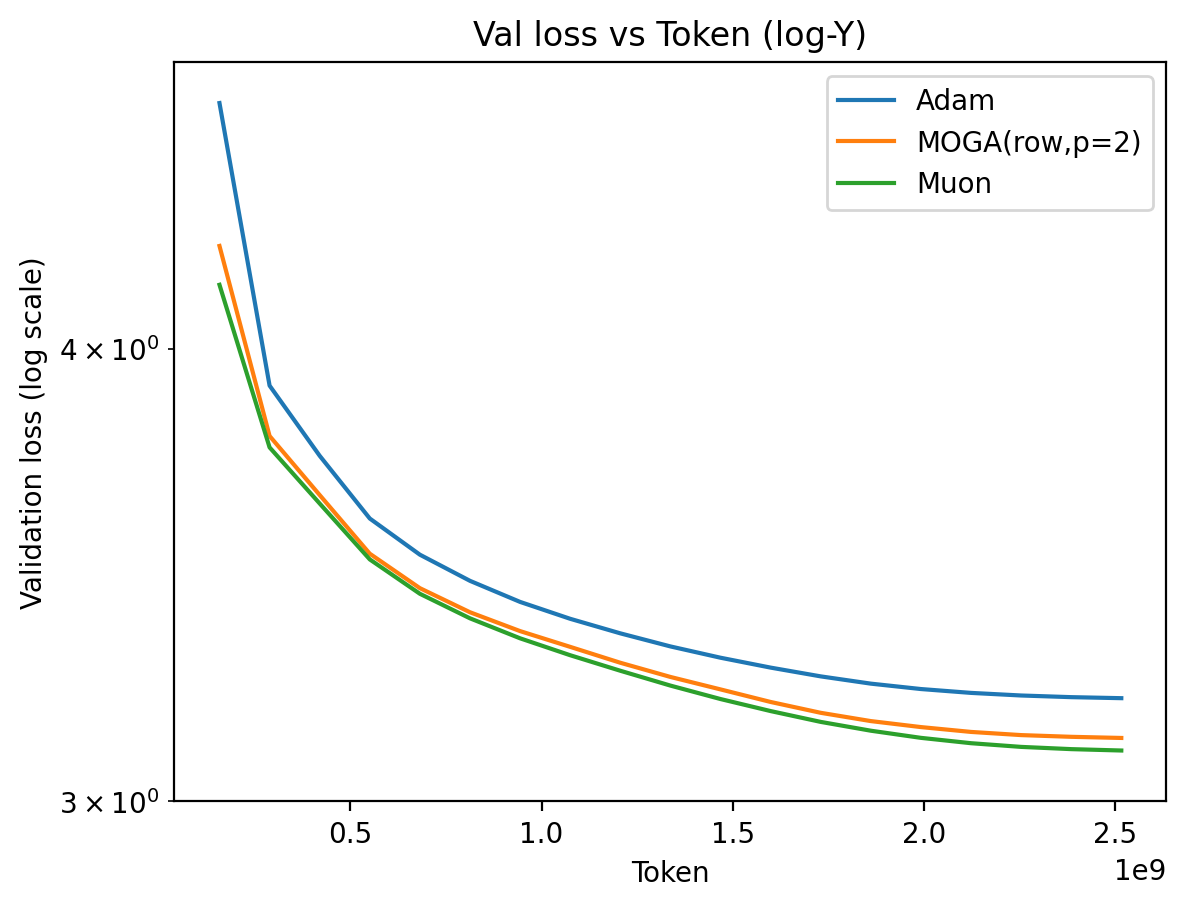}
  \subcaption{LLaMA-130M}
\end{minipage}\hfill
\begin{minipage}[t]{0.5\linewidth}
  \centering
  \includegraphics[width=\linewidth]{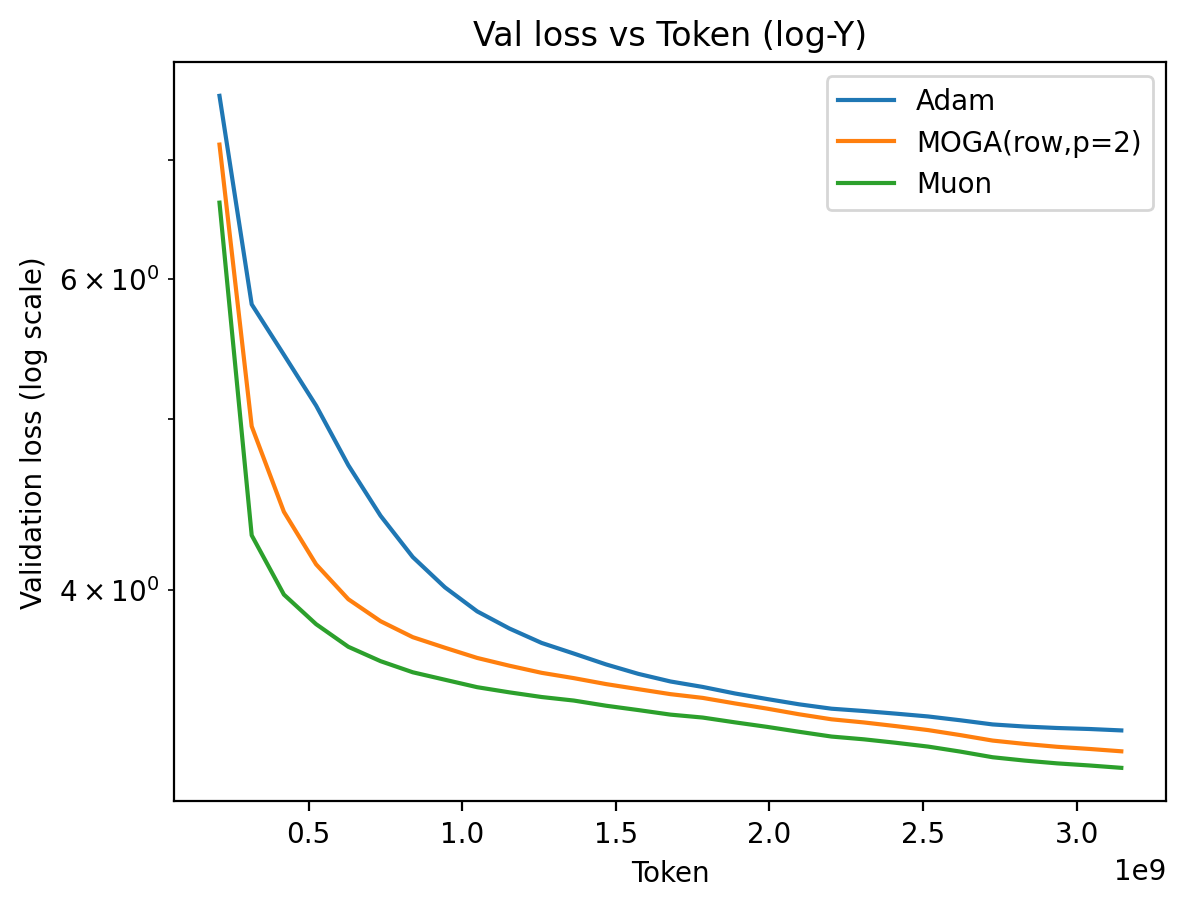}
  \subcaption{GPT-2 small}
\end{minipage}\hfill
\caption{\textbf{Standard Token Budget Experiment.} Performance of \textrm{MOGA} on GPT-2 small model and LLaMA-130M model compared to \textrm{Muon} and \textrm{AdamW} with $1\times$chinchilla optimal token budget.}
\label{fig:smalltoken}
\end{figure}

As shown in Figure~\ref{fig:smalltoken}, on the LLaMA-130M model,
\textrm{MOGA} achieves performance comparable to \textrm{Muon},
with both optimizers converging substantially faster than \textrm{AdamW}.
On the GPT-2 small model, the convergence behavior of \textrm{MOGA}
lies between that of \textrm{Muon} and \textrm{AdamW},
demonstrating competitive optimization efficiency across architectures.
% \Jiajin{We are currently waiting for the final results.
% After they are available, please add a concise take-home message here
% summarizing the main empirical findings.}

\subsection{Large Token Budget Experiment}
We evaluate optimization performance under a large-token regime,
using approximately $\sim 8\times$ the Chinchilla-optimal number of training tokens.
Experiments are conducted on \textbf{GPT-2 Small} and \textbf{LLaMA-130M} models. Details are follows:
\begin{itemize}
    \item \textit{LR schedulers.} We follow the same schedule as in Section~\ref{sec:lr_transfer}, but with different warm-up ratios.
For both GPT-2 Small and LLaMA-130M, we use a linear warm-up over 2.5\% of the total steps
followed by cosine decay.
\item   \textit{Training configurations. }  We use the same training setup as in Section~\ref{sec:exp_stand}.
\item \textit{Hyperparameters.}The hyperparameter settings are the same as those in Section~\ref{sec:exp_stand}, except for the learning-rate search ranges specified below. For GPT-2 we search over  $\textrm{lr\_max}\in[8\times10^{-5},\,8\times10^{-4}]$ for \textrm{AdamW} and $\textrm{lr\_max}\in[4\times10^{-4},\,4\times10^{-3}]$ for \textrm{Muon}. For LLaMA we search over $\textrm{lr\_max}\in[4\times10^{-5},\,4\times10^{-4}]$ for \textrm{AdamW}, $\textrm{lr\_max}\in[4\times10^{-4},\,4\times10^{-3}]$ for \textrm{Muon}, and $\textrm{lr\_max}\in[4\times10^{-3},\,4\times10^{-2}]$ for \textrm{MOGA} instantiated with row normalization using $p=2$.
\end{itemize}

\begin{figure}[t]
\centering
\begin{minipage}[t]{0.5\linewidth}
  \centering
  \includegraphics[width=\linewidth]{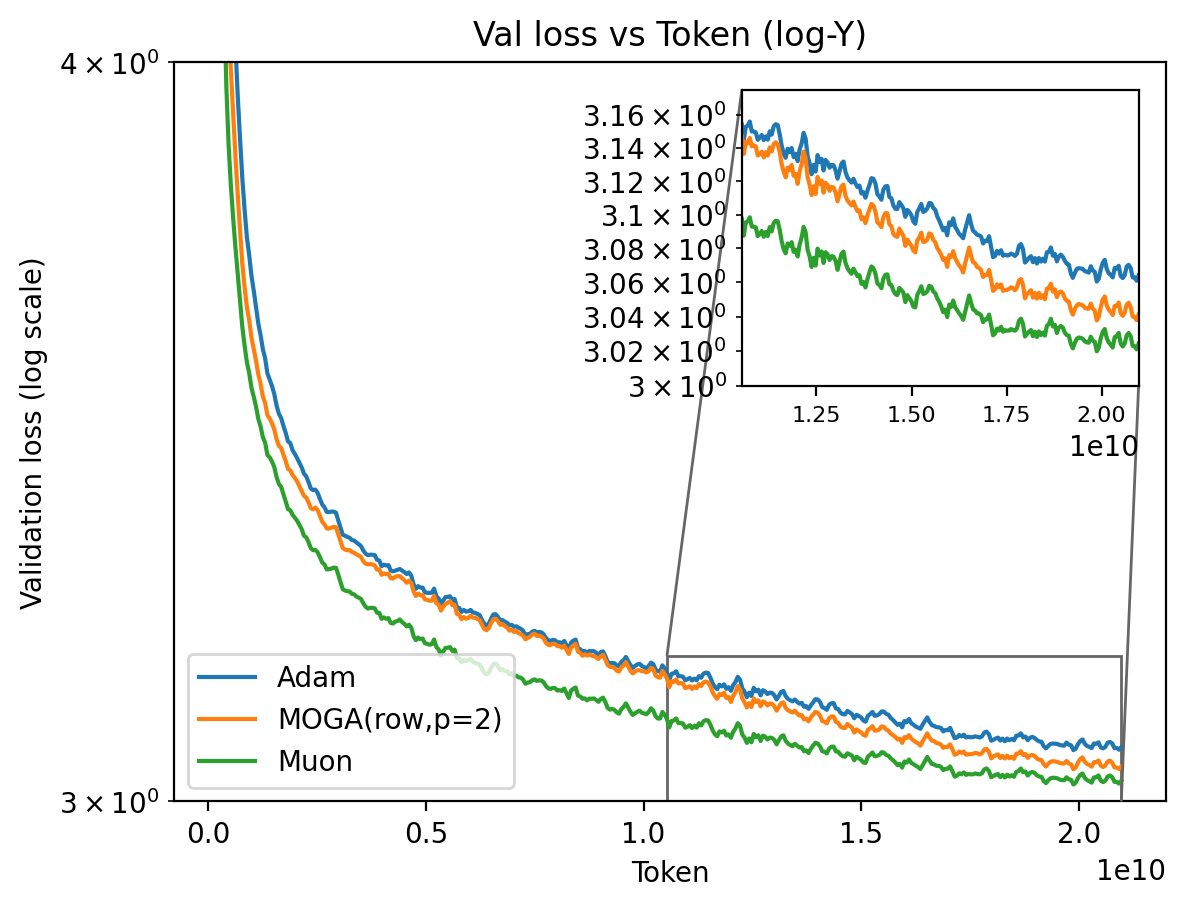}
  \subcaption{GPT-2 Small}
\end{minipage}\hfill
\begin{minipage}[t]{0.5\linewidth}
  \centering
  \includegraphics[width=\linewidth]{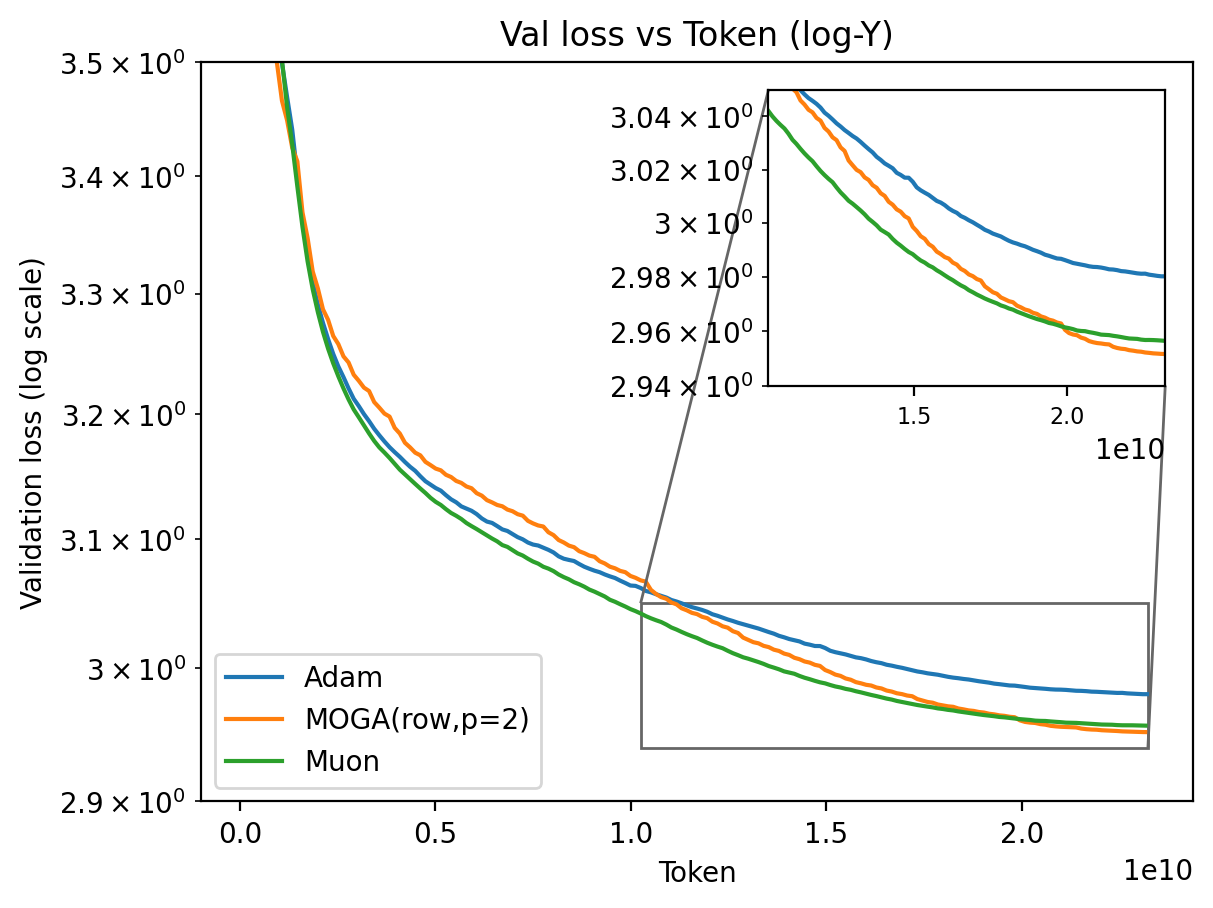}
  \subcaption{LLaMA-130M}
\end{minipage}\hfill
\caption{\textbf{Large Token Budget Experiment.} Performance of \textrm{MOGA} on GPT-2 small model and LLaMA-130M model compared to \textrm{Muon} and \textrm{AdamW}. \textrm{MOGA} exhibits a faster convergence speed in the low-loss stage.}
\label{fig:largetoken}
\end{figure}

% As shown in Figure~\ref{fig:largetoken}, \textrm{MOGA} optimizer exhibits a clear speed advantage in the later stages of training, particularly during the low-loss phase. In the GPT-2 Small experiment, \textrm{MOGA} performs close to the \textrm{Muon} optimizer and outperforms \text{AdamW}. In the LLaMA-130 experiments, \textrm{MOGA} achieves the best overall performance. This indicates that \textrm{MOGA} has an advantage in longer training runs and in the low-loss regime, which is highly valuable in practice.
As shown in Figure~\ref{fig:largetoken}, the \textrm{MOGA} optimizer exhibits a clear speed advantage in the later stages of training, particularly in the low-loss regime. 
In the GPT-2 Small experiment, \textrm{MOGA} performs comparably to \textrm{Muon} and consistently outperforms \textrm{AdamW}. 
Notably, the loss curve of \textrm{MOGA} shows a steeper downward trend toward the end of training, suggesting that with a longer training horizon it would likely surpass \textrm{Muon}. 
In the LLaMA-130 experiments, \textrm{MOGA} already achieves the best overall performance among all optimizers. 
These results indicate that \textrm{MOGA} is particularly advantageous for longer training runs and in the low-loss regime, which is highly valuable in practical large-scale training.

\section{Discussion}

\subsection{Related Work} 

\paragraph{Optimizers for Modern Language Models.}
Modern large-scale neural networks and language models commonly rely on
variants of adaptive gradient methods, beginning with AdaGrad
\citep{duchi2011adaptive,ward2020adagrad}. Most practical optimizers employ diagonal preconditioning,
as in Adam
\citep{kingma2014adam,reddi2019convergence,loshchilov2017decoupled},
or block-wise approximations such as \textrm{Adafactor}
\citep{shazeer2018adafactor},
\textrm{LAMB} \citep{you2019large},
and \textrm{Adam-Mini} \citep{zhang2024adam}.
More expressive preconditioning strategies have also been explored.
For example, \textrm{Shampoo}~\citep{gupta2018shampoo}
and \textrm{SOAP}~\citep{vyas2024soap}
use structured tensor-based preconditioners to approximate curvature
information.
Another line of work views second-order methods as the gold standard
for neural network optimization.
In particular, natural-gradient approaches approximate the Fisher
information matrix
\citep{martens2015optimizing,ren2021tensor,pooladzandi2024curvature},
while recent works such as
\citep{lin2024can,vyas2024soap}
develop scalable approximations inspired by \textrm{SOAP}-style
preconditioning.
More recently,
\citep{xie2025structured,an2025asgo}
proposed \textrm{ASGO} (One-Sided \textrm{Shampoo}),
which applies a one-sided variant of the \textrm{Shampoo} preconditioner.

% Modern large-scale neural networks and language models commonly rely on variants and approximations of the AdaGrad algorithm \citep{duchi2011adaptive,ward2020adagrad}. These methods typically employ diagonal preconditioning, as exemplified by Adam \citep{kingma2014adam,reddi2019convergence,loshchilov2017decoupled}, or block-wise approximations such as \textrm{Adafactor} \citep{shazeer2018adafactor}, \textrm{LAMB} \citep{you2019large}, and \textrm{Adam-Mini} \citep{zhang2024adam}. More sophisticated approaches use structured tensor-based preconditioning, including \textrm{Shampoo} \citep{gupta2018shampoo} and \textrm{SOAP} \citep{vyas2024soap}.  Another line of work treats the Gauss-Newton method as the gold standard for optimization. For instance, \citep{martens2015optimizing,ren2021tensor,pooladzandi2024curvature} approximate the Fisher information matrix to implement natural gradient methods, while \citep{lin2024can,vyas2024soap} employ \textrm{SOAP}-like methodologies for similar approximations. More recently, \citep{xie2025structured,an2025asgo} introduced \textrm{ASGO} (One-Sided \textrm{Shampoo}), which only applies a one-sided preconditioner within the \textrm{Shampoo}. 

%\citep{zhao2024galore}  was proposed to exploit the low-rank structure of gradients for memory efficiency.

\paragraph{Gradient Normalization.}
Starting from \textrm{SignSGD}~\citep{bernstein2018signsgd}, a line of work
has connected gradient normalization with steepest descent under
alternative norms.
For example, \textrm{Muon}~\citep{muonblog} formulates neural network
optimization as steepest descent under the spectral norm, which leads
to whitening the gradient matrix.
Similarly, \cite{glentis2025minimalist} study steepest descent under the
$\ell_1 \to \ell_2$ operator norm, which results in column normalization
of the gradient.
Several recent works employ row-wise or doubly normalized updates.
In particular,
\cite{ma2024swan} combine row normalization with the matrix sign
function to normalize the gradient, while
\cite{scetbon2025gradient} use the Sinkhorn algorithm to normalize
the gradient simultaneously across rows and columns.
In contrast, our framework considers the
$(2,\textrm{mean}) \to \ell_\infty$ geometry, which requires an
additional scaling factor of $1/\sqrt{d_{\textrm{in}}}$ in the update.
Normalization-based methods have recently attracted attention because
they exhibit favorable theoretical properties in challenging regimes.
For instance, they can achieve improved guarantees under heavy-tailed
noise~\citep{wang2025muon,fang2026normalizationpreferredworstcasecomplexity}
and for ill-conditioned optimization problems~\citep{ma2026preconditioning}.
Their convergence properties under various $L$-smoothness assumptions
have also been studied in
\citep{li2025note,pethick2025training,shen2025convergence,glentis2025minimalist,
riabinin2025gluon,kovalev2025understanding,crawshaw2025exploration,
veprikov2025preconditioned}. Our work takes a different perspective.
Rather than analyzing specific normalization schemes, we study when a
neural network objective satisfies an $L$-smoothness condition and use
this criterion to determine which normalization geometry is appropriate.
The closest related work is \cite{davis2025spectral}, which studies when
spectral descent outperforms Euclidean gradient descent by
characterizing the nuclear rank of the gradient and estimating this
quantity in a random feature model.
Another related line of work is
\cite{su2025isotropiccurvaturemodelunderstanding}, which proposes an
isotropic curvature model to analyze spectral normalization and derives
optimal updates using second-order information.
It would be interesting to combine such curvature models with
alternative geometries to design new normalization-based algorithms;
we leave this direction for future work.

\paragraph{Hyperparameter Transfer.}
Hyperparameter transfer, particularly learning-rate transfer, aims to
reuse a learning rate (or schedule) tuned on a small network when
training a larger one, thereby avoiding per-run hyperparameter search.
The goal is to identify simple scaling rules that preserve stability
and performance under changes in width, depth, batch size, and optimizer
details. A prominent line of work is based on Tensor Programs.
\cite{yang2021tensor,yang2022tensor} analyze the infinite-width limit of
forward and backward dynamics and derive parameterizations that keep
activation and gradient scales consistent across widths.
This framework yields concrete rules for transferring learning rates
from small networks to larger ones with minimal retuning.
However, the analysis is anchored in infinite-width initialization
scaling and early training dynamics, and therefore does not fully
capture later training effects such as finite-width corrections,
evolving feature correlations, or distribution shifts. A complementary approach is proposed by \cite{ma2026mup}, who upscale a
trained small network to a wider one by deterministically duplicating
neurons (equivalently, replicating parameter tensors with structured
rescaling) so that the widened model initially computes exactly the
same function. With a corresponding rescaling of optimizer
hyperparameters, the widened model follows an equivalent training
trajectory to the small model.
% A subsequent $\mu$P-scaled symmetry
% breaking perturbation can then be introduced to exploit the additional
% width while preserving stable hyperparameter transfer.
Both approaches justify learning-rate transfer only for specific
structured regimes (e.g., random initialization or deterministically
upscaled networks). In contrast, our worst-case analysis establishes a
width-independent $M$-Lipschitz and $L$-smoothness bounds, providing a principled guarantee for learning-rate transfer beyond these specialized constructions.

\subsection{How to select $p$ and $q$?}
In Theorem~\ref{theorem:Lsmooth}, both the $M$-Lipschitz and
$L$-smoothness bounds are derived under the assumption that the network
parameters lie in a width-independent bounded set under the chosen
geometry. In particular, the analysis is carried out on a compact set
$\Omega_c$. Such geometric constraints inevitably restrict the function
class represented by the network. In this subsection, we examine these
constraints from the perspective of approximation capacity and use
them to guide the choice of geometry.

To understand the implication of this assumption, we view the compact
set $\Omega_c$ as inducing a hypothesis class through the network
output map. For each parameter tuple
$(\bm{W}_{1:\ell},\b_{1:\ell}) \in \Omega_c$, the network produces an
output $\y_\ell(\x)$, and hence defines a predictor. Therefore, the
constraint $\Omega_c$ implicitly determines a function class
$\mathcal F_{\Omega_c}$, and different choices of geometry lead to
different hypothesis classes.

% In Theorem~\ref{theorem:Lsmooth}, both the $M$-Lipschitz and $L$-smoothness estimates rely on the assumption that the neural network admits a width-independent norm bound under the selected geometry. In this section, we demonstrate that this assumption imposes constraints on the network’s approximation capacity. 

We recall the standard decomposition of the expected risk in supervised learning.
Let $h^\ast$ denote the target function and $\hat h_{\mathcal{F}}$ the
model learned by an algorithm. The excess risk is defined as
\[
\mathcal{R}
:=
\mathbb{E}_{\mathbb{P}}[\mathcal{L}(\hat h_{\mathcal{F}}(\x))]
-
\mathbb{E}_{\mathbb{P}}[\mathcal{L}(h^\ast(\x))].
\]
It can be decomposed as
\begin{equation}
\begin{aligned}
\mathcal{R}
&\le
\underbrace{
\inf_{h\in \mathcal{F}}
\mathbb{E}_{\mathbb{P}}[\mathcal{L}(h(\x))]
-
\mathbb{E}_{\mathbb{P}}[\mathcal{L}(h^\ast(\x))]
}_{\text{approximation error}}
+
\underbrace{
\mathbb{E}_{\mathbb{P}_n}[\mathcal{L}(\hat h_\mathcal{F}(\x))]
-
\mathbb{E}_{\mathbb{P}}[\mathcal{L}(h_\mathcal{F}^\ast(\x))]
}_{\text{optimization error}}
\\
&\quad+
2
\underbrace{
\sup_{h\in\mathcal{F}}
\left|
\mathbb{E}_{\mathbb{P}_n}[\mathcal{L}(h(\x))]
-
\mathbb{E}_{\mathbb{P}}[\mathcal{L}(h(\x))]
\right|
}_{\text{generalization error}} .
\end{aligned}
\end{equation}
Here $\mathbb{E}_{\mathbb{P}}[\cdot]$ denotes expectation with respect to
the true (unknown) data distribution $\mathbb{P}$, while
$\mathbb{E}_{\mathbb{P}_n}[\cdot]$ denotes the empirical expectation
over $n$ samples.
The function class $\mathcal{F}$ represents the hypothesis space,
$
h^\ast_{\mathcal{F}}
=
\arg\min_{h\in\mathcal{F}}
\mathbb{E}_{\mathbb{P}}[\mathcal{L}(h(\x))]
$
denotes the best approximation of $h^\ast$ inside $\mathcal{F}$,
and
$
\hat h_{\mathcal{F}}
=
\arg\min_{h\in\mathcal{F}}
\mathbb{E}_{\mathbb{P}_n}[\mathcal{L}(h(\x))]
$
denotes the empirical risk minimizer. 
In this decomposition, the approximation error
quantifies the expressiveness of the hypothesis class $\mathcal{F}$,
the optimization error measures the suboptimality
of the learning algorithm, and the generalization error
captures the gap between training performance and unseen data.

The choice of norm influences all three components of the risk
decomposition. It determines the geometry of the hypothesis space and
thus its approximation power, affects gradients, conditioning, and
convergence in optimization, and regulates the effective capacity of
the model in generalization. We now formalize the notion of a stronger
norm.
\begin{definition}[Stronger norm]
Let $\|\cdot\|_{\textsf{A}}$ and $\|\cdot\|_{\textsf{B}}$ be two norms.
We say that $\|\cdot\|_{\textsf{A}}$ is \emph{stronger} than
$\|\cdot\|_{\textsf{B}}$ if
\[
\{\bm{X}:\|\bm{X}\|_{\textsf{A}}\le 1\}
\subseteq
\{\bm{X}:\|\bm{X}\|_{\textsf{B}}\le 1\}.
\]
\end{definition}
In other words, a stronger norm induces a smaller unit ball in the
parameter space. This geometric relationship underlies the trade-offs
between optimization, approximation, and generalization discussed
below. 

\begin{itemize}[leftmargin=0.15in]
\setlength{\itemsep}{4pt}
\item[\textnormal{(i)}]  \textbf{\textit{Optimization}}: Since stronger norms induce smaller unit balls in the parameter space,
the $M$-Lipschitz and $L$-smoothness constants of the objective
typically decrease when measured under the stronger geometry.
Consequently, gradient-based methods can adopt larger learning rates
while maintaining stability, which leads to faster convergence.
\item[\textnormal{(ii)}] \textbf{\textit{Approximation}}:
Norm constraints restrict the feasible parameter set and therefore
determine the hypothesis class that the network can represent.
If $\|\cdot\|_{\textsf{A}}$ is stronger than $\|\cdot\|_{\textsf{B}}$,
then the feasible set under $\|\cdot\|_{\textsf{A}}$ is  smaller,
which induces a more restricted hypothesis class.
As a result, stronger norms may reduce the approximation capacity of
the network.
\item[\textnormal{(iii)}] \textbf{\textit{Generalization}}:
Norm constraints also regulate the effective capacity of the model.
By limiting the magnitude of weights, stronger norms prevent the
network from fitting overly complex or noisy patterns in the training
data, which often improves generalization performance.
\end{itemize}

Overall, norm selection introduces an intrinsic trade-off.
Stronger norms improve optimization and generalization through tighter
control of the parameter space, but may reduce approximation capacity,
whereas weaker norms enlarge the hypothesis class at the potential
cost of worse smoothness and generalization behavior. 
We now examine the behavior of several normalization schemes under this
approximation--optimization trade-off.
While the unit balls of different operator norms are not directly
comparable due to dimension-dependent norm equivalence, the scaling
behavior still provides a useful proxy for how restrictive each
geometry becomes as the network width grows:

\begin{itemize}
    \item \textbf{\textrm{Muon} under $(2,\textrm{mean})\to (2,\textrm{mean})$ Geometry.} Under this geometry, the $L$-smoothness constant scales as
$\mathcal{O}(\sqrt{w})$, indicating that the optimization landscape
becomes increasingly rough as the network widens.
However, the admissible scale of the weight matrix remains
$\mathcal{O}(1)$ because the RMS normalization rescales the weights by
the matrix dimension.
As a result, the feasible parameter set does not shrink with width,
and the representational capacity of the network remains stable.
    % Although \textrm{Muon} exhibits larger $L$-smoothness that scales as $O(\sqrt{\text{Width}})$, meaning its optimization landscape becomes increasingly rough as the network widens. However, its unit ball size remains $O(1)$ and does not shrink as the network becomes wider, maintaining a consistent representational capacity regardless of width.
\item \textbf{Column normalization under ${(1,\textrm{mean})}\to\qmean$ Geometry.} This geometry achieves width-independent smoothness when $q\ge2$,
leading to a well-conditioned optimization landscape.
However, the constraint induced by the operator norm becomes
increasingly restrictive as the width grows.
To see this, note that under the $(1,\textrm{mean})$ geometry the input
norm scales as $\mathcal{O}(w^{-1})$, while the $(q,\textrm{mean})$
output norm scales as $\mathcal{O}(w^{-1/q})$.
Since the operator norm measures the maximum amplification ratio
between output and input norms, maintaining a bounded operator norm
requires the weight magnitudes to shrink by a factor
$\mathcal{O}(w^{-(q-1)/q})$.
Consequently, the admissible scale of weight columns decreases as
$\mathcal{O}(w^{-(q-1)/q})$, which shrinks rapidly with width.
Thus, although optimization becomes easier, the representational
capacity of the network is significantly constrained.

     % ${(1,\textrm{mean})}\to\pmean$ geometry achieves $O(1)$ smoothness when $p \geq 2$.  For the $\|\cdot\|_{(1,\textrm{mean})\to\ell_p\text{-mean}}$ operator norm, the input $\ell_1$-mean scales as $O(\text{Width}^{-1})$, while the output $\ell_p$-mean scales as $O(\text{Width}^{-1/p})$. Since the operator norm measures the maximum amplification ratio between output and input norms, maintaining the operator norm requires scaling by the ratio $O(\text{Width}^{-1/p})/O(\text{Width}^{-1}) = O(\text{Width}^{(p-1)/p})$. Consequently, the unit ball size scales as $O(\text{Width}^{-(p-1)/p})$. This unit ball size $O(\text{Width}^{-(p-1)/p})$ shrinks more aggressively with width, particularly for larger $p$ values, leading to more severe constraints on representational capacity.
    \item \textbf{Row normalization under $\pmean\to \infty$ Geometry.} 
Under this geometry, the $(p,\textrm{mean})$ input norm scales as
$\mathcal{O}(w^{-1/p})$, while the $\ell_\infty$ output norm remains
$\mathcal{O}(1)$.
Maintaining a bounded operator norm therefore requires the weight
magnitudes to shrink at a rate $\mathcal{O}(w^{-1/p})$.
Compared with column normalization, whose admissible scale shrinks as
$\mathcal{O}(w^{-(q-1)/q})$, row normalization contracts the feasible
parameter set much more slowly.
In particular, when $p,q\ge2$ we have
\[
\frac{1}{p}\le\frac{1}{2}<\frac{q-1}{q},
\]
so the constraint imposed by row normalization becomes less restrictive
with width compared to column normalization.
Consequently, this scaling behavior suggests that row normalization may
retain a richer hypothesis class while still benefiting from improved
smoothness.

% The $\|\cdot\|_{\ell_p\text{-mean}\to\ell_\infty}$ operator norm has unit ball size $O(\text{Width}^{-1/p})$ because the $\ell_p$-mean input norm scales as $O(\text{Width}^{-1/p})$ while the $\ell_\infty$ output norm is $O(1)$. To maintain unit operator norm, the weight matrix must scale as $O(1)/O(\text{Width}^{-1/p}) = O(\text{Width}^{1/p})$. This implies that the unit ball diameter scales as $O(\text{Width}^{-1/p})$, which shrinks more slowly than column normalization's $O(\text{Width}^{-(p-1)/p})$ as width increases. This is crucial because when $p \geq 2$, we have $\frac{1}{p} \le  \frac{1}{2} < \frac{q-1}{q}$ for all $q \geq 2$, and moreover $\frac{q-1}{q} \to 1$ as $p \to \infty$ (which leads to \textrm{SignSgd/Adam}). Therefore, row normalization surprisingly preserves a larger unit ball---and thus greater approximation capacity---compared to column normalization for the same network width.
\end{itemize}

% The row normalization family enjoys a better optimization-approximation trade-off because it simultaneously achieves $O(1)$ $L$-smoothness (ensuring efficient optimization) while maintaining a larger unit ball size that shrinks more slowly with width (preserving better approximation capacity). This favorable balance makes row normalization particularly attractive for wide networks. Therefore, we recommend steepest descent in the $\|\cdot\|_{\ell_p\text{-mean} \to \ell_\infty}$ norm for practical applications.
\begin{highlightbox}
    \textbf{Message:}  Stronger norms impose tighter constraints on the
parameter space, which improves smoothness of the optimization
landscape but restricts the expressive capacity of the model.
Norm selection therefore introduces an optimization--approximation
trade-off.
\begin{center}
\begin{tabular}{c|c|c|c}
 & \textrm{Row Normalization} & \textrm{Column Normalization} & \textrm{Muon} \\
\hline
Norm 
& $\|\cdot\|_{(p,\mathrm{mean})\to\ell_\infty}$ 
& $\|\cdot\|_{(1,\mathrm{mean})\to(q,\mathrm{mean})}$ 
& $\|\cdot\|_{(2,\mathrm{mean})\to(2,\mathrm{mean})}$ \\
\hline
$L$-smoothness 
& $\mathcal{O}(1)$ 
& $\mathcal{O}(1)$ \;($q\ge2$) 
& $\mathcal{O}(\sqrt{w})$ \\
\hline
Weight scaling 
& $\mathcal{O}(w^{-1/p})$ 
& $\mathcal{O}(w^{-(q-1)/q})$ 
& $\mathcal{O}(1)$
\end{tabular}
\end{center}
For $p,q\ge2$, both row and column normalization achieve
width-independent smoothness.
Since $\tfrac{1}{p} < \tfrac{q-1}{q}$, the admissible weight scale under row
normalization shrinks more slowly with width than under column
normalization, suggesting a better balance between smooth optimization
and representational flexibility.
\end{highlightbox}

\section{Conclusion} 
In this paper, we study the width-scaling behavior of neural optimizers
through the lens of matrix-operator-norm steepest descent, aiming to
understand how optimization geometry determines scalable learning-rate
rules. We show that many popular optimizers, including \textrm{AdamW},
\textrm{Muon}, \textrm{Lion}, and row/column normalization methods, can
be unified as instances of steepest descent under different matrix
operator norms. However, neural networks are not dimension-independently
Lipschitz under arbitrary matrix operator norms. In particular,
classical $p\to q$ operator norms fail to propagate stability across
layers: Their Lipschitz bounds do not compose cleanly, leading to
width-dependent distortions in the overall Lipschitz constant and
unstable learning-rate scaling.
% Do Transformers at different widths share the same optimal learning rate for a given optimizer? And does an algorithm that performs well on small models remain effective as the model scales up? In this paper, we study the width-scaling behavior of neural optimizers through the lens of matrix–operator–norm steepest descent, aiming to understand how optimization geometry determines scalable learning-rate rules. We understand many popular optimizer, including \textrm{AdamW}, \textrm{Muon}, \textrm{Lion} and
% row/column normalization, can be unified as instances of steepest descent
% under different matrix operator norms. However, a neural network is not dimension-independently Lipschitz under arbitrary matrix operator norms. In particular, classical $p\to q$ operator norms fail to propagate stability across layers: their Lipschitz estimates do not compose cleanly, causing width-dependent distortions in the overall Lipschitz constant and leading to unstable learning-rate scaling.

To address this issue, we introduce the \textit{mean normalized operator norm}
$(p,\textrm{mean}) \to (q,\textrm{mean})$ with $p\leq q$, which rescales the geometry to match network width. Under this geometry, operator norms compose
stably across layers, making the network forward map $M$-Lipschitz with
no width dependence. We further analyze gradient sensitivity and prove
that the objective becomes $L$-smooth with width-independent optimal
learning rates. Our theory identifies two width-stable geometries:
\[
\boxed{
(1,\textrm{mean}) \to (q,\textrm{mean}) \quad (q\ge2),
\qquad
(p,\textrm{mean}) \to \infty,}
\]
and shows that the latter preserves a less restrictive parameter scaling,
yielding a more favorable optimization--approximation trade-off. This
observation motivates a new optimizer, \textit{ rescaled row-normalized steepest
descent}, which applies a power transform followed by per-row
normalization.
% To resolve this issue, we introduce the \textbf{mean operator norm}
% $(p,\textrm{mean}) \!\to\! (q,\textrm{mean})$, which rescales the geometry
% to match neural‐network width. This makes operator norms \textit{play nicely
% together}: the network forward map becomes $M$-Lipschitz with no width
% dependence, enabling stable layerwise composition.  
% We further analyze gradient sensitivity and prove that under this geometry,
% the objective becomes $L$-smooth with \textit{width-independent optimal learning
% rates}.  Our theory identifies two width-independent geometries:
% \[
% (1,\textrm{mean}) \!\to\! (p,\textrm{mean}) \quad (p \ge 2),
% \quad \text{ and }
% \quad 
% (p,\textrm{mean}) \!\to\! \infty,
% \]
% and shows that the latter enlarges the feasible approximation set, yielding
% better optimization--approximation trade-offs.  This motivates a new optimizer:
% \textbf{row-normalized steepest descent}, which applies
% a power transform followed by per-row normalization.

%{\color{blue}Empirically, the proposed MOGA optimizer exhibits better width-scaling behavior than \textrm{AdamW}, validating our theoretical predictions without requiring any learning-rate retuning across widths in GPT pretraining. Moreover, \textrm{MOGA} attains the same token-wise convergence rate as MUON while avoiding any spectral computations. 

Based on our theory, we propose MOGA (Matrix Operator Geometry Aware), a width-aware optimizer that requires only row/column-wise normalization and enables reliable learning-rate transfer across model widths, allowing models of substantially different sizes to share nearly the same peak learning rate without extensive retuning. Large-scale pre-training experiments on GPT-2 and LLaMA further show that MOGA is competitive with Muon and can be substantially faster in large-token and low-loss regimes, where optimization stability is most critical. Overall, our results yield a $L$-smoothness based geometry-aware principle for constructing neural optimizers that remain stable under width scaling and preserve consistent learning dynamics as model size grows.

\paragraph{Acknowledgement} {We thank Jose Blanchet,  Fanghui Liu, Jianhao Ma, Jorge Nocedal, Weijie Su, Kaiyue Wen, Lei Wu, Tengyuan Liang, Yinuo Ren, Yinyu Ye,  Lexing Ying and Jiacheng You for their valuable feedback.}

% References here (outcomment the appropriate case)

% CASE 1: BiBTeX used to constantly update the references
%   (while the paper is being written).
%\bibliographystyle{informs2014} % outcomment this and next line in Case 1
%\bibliography{<your bib file(s)>} % if more than one, comma separated

\bibliographystyle{abbrv} % outcomment this and next line in Case 1
\bibliography{references} % if more than one, comma separated

\appendix

\section{Some Useful Lemmas}\label{APP:A}
In this appendix, we provide the missing proofs in the main context. 

We begin with a standard lemma relating Lipschitz continuity to bounded directional derivatives, which was used in Theorem \ref{thm:width-lip}.
\begin{lemma}
\label{lem:Mlip-set}
Let $\Omega \subset \mathbb{R}^{m\times n}$ be a convex norm ball, and let
$f:\Omega \to \mathbb{R}$ be differentiable on $\Omega$.
Fix any norm $\|\cdot\|$ on $\mathbb{R}^{m\times n}$.
Suppose there exists a constant $M>0$ such that
\[
\sup_{\left\|\Delta \bm Z\right\|\le 1}
\big|\nabla f(\bm Z)[\Delta \bm Z]\big|
\;\le\; M,
\qquad
\forall\, \bm Z \in \Omega.
\]
Then $f$ is $M$-Lipschitz continuous on $\Omega$ with respect to $\|\cdot\|$, i.e.,
\[
|f(\bm Z)-f(\bm Z')|
\;\le\;
M \left\|\bm Z-\bm Z'\right\|,
\qquad
\forall\, \bm Z,\bm Z' \in \Omega.
\]
\end{lemma}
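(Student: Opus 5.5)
The plan is to reduce the claim to the one-dimensional mean value theorem along the segment joining the two points, using convexity of $\Omega$. Fix $\bm Z,\bm Z'\in\Omega$. If $\bm Z=\bm Z'$ there is nothing to prove, so assume $\bm Z\neq\bm Z'$. Set $\bm H:=\bm Z'-\bm Z$ and define the scalar function
\[
g(t):=f(\bm Z+t\bm H),\qquad t\in[0,1].
\]
Since $\Omega$ is a convex norm ball, every point $\bm Z+t\bm H$ with $t\in[0,1]$ lies in $\Omega$, so $g$ is well defined.

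The first step is to show that $g$ is differentiable on $[0,1]$ (one-sided at the endpoints) and to identify its derivative as a directional derivative. Since $f$ is differentiable on $\Omega$, the chain rule gives
\[
g'(t)=\nabla f(\bm Z+t\bm H)[\bm H]
\]
directly from the definition of the directional derivative in the Notation paragraph. Continuity of $g$ on $[0,1]$ follows from differentiability. The mean value theorem then yields some $\tau\in(0,1)$ with
\[
f(\bm Z')-f(\bm Z)=g(1)-g(0)=g'(\tau).
\]

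The second step uses homogeneity. The map $\Delta\bm Z\mapsto\nabla f(\bm Z_\tau)[\Delta\bm Z]$, with $\bm Z_\tau:=\bm Z+\tau\bm H\in\Omega$, is linear. Applying it to the unit direction $\bm H/\|\bm H\|$ and invoking the hypothesis at the point $\bm Z_\tau$ gives
\[
|g'(\tau)|=\|\bm H\|\,\bigl|\nabla f(\bm Z_\tau)[\bm H/\|\bm H\|]\bigr|\le M\|\bm H\|=M\|\bm Z-\bm Z'\|,
\]
which is the claim.

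The only delicate point is differentiability at boundary points of the ball, where only one-sided derivatives make sense. The mean value theorem requires differentiability only on the open interval $(0,1)$ together with continuity on $[0,1]$. Points $\bm Z+t\bm H$ with $t\in(0,1)$ may still lie on the boundary if the segment runs along a flat face of the ball. There, differentiability of $f$ on $\Omega$ should be read as differentiability relative to $\Omega$ (or in an open neighbourhood of it), which is the standing interpretation in the paper. Under that reading, $g$ is differentiable on $(0,1)$ and the argument goes through. If extra care is wanted, one can alternatively write $g(1)-g(0)=\int_0^1 g'(t)\,dt$ and bound the integrand pointwise by $M\|\bm H\|$; this needs only integrability of $g'$, which follows from the continuous differentiability assumed throughout the paper.
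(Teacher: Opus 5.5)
Your argument is correct and is essentially the paper's proof. Both restrict $f$ to the segment joining the two points, rescale $\bm Z-\bm Z'$ to a unit direction, and apply the hypothesis at each point of the segment. The only difference is that you invoke the mean value theorem where the paper writes $f(\bm Z)-f(\bm Z')=\int_0^1\nabla f(\bm Z(t))[\bm Z-\bm Z']\,dt$; your version avoids the integrability of $g'$ that the paper's step tacitly takes from its standing smoothness assumptions.
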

\begin{proof}{Proof of Lemma~\ref{lem:Mlip-set}.}
Fix $\bm Z,\bm Z'\in\Omega$ and define the line segment
$
\bm Z(t):=(1-t)\bm Z'+t\bm Z=\bm Z' + t(\bm Z-\bm Z'), t\in[0,1].
$
Since $\Omega$ is a norm ball, it is convex, and hence $\bm Z(t)\in\Omega$ for all $t\in[0,1]$.
By the fundamental theorem of calculus applied along the line segment, we have 
\[
f(\bm Z)-f(\bm Z')
=\int_0^1 \nabla f(\bm Z(t))[\bm Z-\bm Z']\,dt.
\]
If $\bm Z=\bm Z'$, the claim is trivial. Otherwise let
$
\Delta \bm Z:=\frac{\bm Z-\bm Z'}{\|\bm Z-\bm Z'\|},
$ so that $\|\Delta \bm Z\|=1$. 
Then, for all $t\in[0,1]$, the assumed bound implies
\[
\big|\nabla f(\bm Z(t))[\bm Z-\bm Z']\big|
=\left\|\bm Z-\bm Z'\right\|\,\big|\nabla f(\bm Z(t))[\Delta\bm Z]\big|
\le M\left\|\bm Z-\bm Z'\right\|.
\]
Therefore,
\[
|f(\bm Z)-f(\bm Z')|
\le \int_0^1 M\left\|\bm Z-\bm Z'\right\|\,dt
= M\left\|\bm Z-\bm Z'\right\|.
\]
This proves that $f$ is $M$-Lipschitz continuous on $\Omega$ with respect to $\|\cdot\|$.  
\end{proof} 
\begin{lemma}
\label{lem:directional_hessian}
Let $\Omega \subset \mathbb{R}^{m\times n}$ be a convex set, and let
$f:\Omega \to \mathbb{R}$ be twice differentiable on $\Omega$.
Fix any norm $\|\cdot\|$ on $\mathbb{R}^{m\times n}$, and let $\|\cdot\|_{*}$
denote its dual norm.
Suppose there exists $L>0$ such that
\[
\sup_{\left\|\Delta \bm{Z}^1\right\|\le 1,\ \left\|\Delta \bm{Z}^2\right\|\le 1}
\bigl|\nabla^2 f(\bm{Z})[\Delta\bm{Z}^1,\Delta\bm{Z}^2]\bigr|
\le L,
\qquad \forall\, \bm{Z}\in \Omega.
\]
Then $f$ is $L$-smooth on $\Omega$ with respect to $\|\cdot\|$, i.e.,
\[
\left\|\nabla f(\bm{Z}) - \nabla f(\bm{Z}^\prime)\right\|_{*}
\le
L \, \left\|\bm{Z} - \bm{Z}^\prime\right\|,
\qquad
\forall\, \bm{Z},\bm{Z}^\prime\in \Omega.
\]
\end{lemma}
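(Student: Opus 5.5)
We reduce $L$-smoothness to a bound on the directional Hessian by integrating along the segment between the two points, mirroring the proof of Lemma~\ref{lem:Mlip-set}. The dual norm is computed by testing against unit directions.

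Fix $\bm Z,\bm Z'\in\Omega$ and set $\bm Z(t):=\bm Z'+t(\bm Z-\bm Z')$ for $t\in[0,1]$. By convexity of $\Omega$, $\bm Z(t)\in\Omega$ for every $t$. By definition of the dual norm,
\[
\left\|\nabla f(\bm Z)-\nabla f(\bm Z')\right\|_*=\sup_{\left\|\Delta\bm Z^1\right\|\le 1}\bigl(\nabla f(\bm Z)[\Delta\bm Z^1]-\nabla f(\bm Z')[\Delta\bm Z^1]\bigr).
\]
It therefore suffices to bound the scalar difference uniformly over unit $\Delta\bm Z^1$.

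Fix such a $\Delta\bm Z^1$ and define $g(t):=\nabla f(\bm Z(t))[\Delta\bm Z^1]$. By the paper's definition of the directional Hessian (the derivative of the directional derivative along a second direction), $g$ is differentiable with
\[
g'(t)=\nabla^2 f(\bm Z(t))[\Delta\bm Z^1,\bm Z-\bm Z'].
\]
Twice differentiability of $f$ gives the linearity of $\nabla^2 f(\bm Z)[\Delta\bm Z^1,\cdot\,]$ needed for this. The case $\bm Z=\bm Z'$ is trivial. Otherwise write $\bm Z-\bm Z'=\|\bm Z-\bm Z'\|\,\Delta\bm Z^2$ with $\|\Delta\bm Z^2\|=1$. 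Homogeneity in the second argument and the hypothesis give $|g'(t)|\le L\|\bm Z-\bm Z'\|$ for all $t$.

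Apply the mean value theorem to $g$, which needs only differentiability of $g$ on $[0,1]$, or the fundamental theorem of calculus if we are willing to assume continuous second derivatives. This yields
\[
|g(1)-g(0)|\le L\|\bm Z-\bm Z'\|.
\]
Taking the supremum over $\Delta\bm Z^1$ gives the claim.

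The only delicate step is justifying $g'(t)=\nabla^2 f(\bm Z(t))[\Delta\bm Z^1,\bm Z-\bm Z']$ together with linearity in the second slot at the endpoints of the segment. This requires interpreting twice differentiability in the Fréchet sense on $\Omega$, using one-sided derivatives at boundary points. Using the mean value theorem rather than integration avoids any need for continuity of the Hessian.
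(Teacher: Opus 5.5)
Your proof is correct and follows the paper's argument: you restrict to the segment $\bm Z(t)=\bm Z'+t(\bm Z-\bm Z')$, write the dual norm as a supremum over unit directions $\Delta\bm Z^1$, identify the derivative of $t\mapsto\nabla f(\bm Z(t))[\Delta\bm Z^1]$ with $\nabla^2 f(\bm Z(t))[\Delta\bm Z^1,\bm Z-\bm Z']$, and use homogeneity plus the hypothesis to bound it by $L\|\bm Z-\bm Z'\|$. The only difference is that you use the mean value theorem where the paper uses the fundamental theorem of calculus (and moves the supremum inside the integral), so your version does not need the Hessian to be integrable or continuous along the segment.
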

\begin{proof}{Proof of Lemma~\ref{lem:directional_hessian}.}
Fix $\bm{Z},\bm{Z}^\prime\in\Omega$ and define the line segment
$\bm{Z}(t):=\bm{Z}^\prime+t(\bm{Z}-\bm{Z}^\prime)$ for $t\in[0,1]$.
Since $\Omega$ is convex, $\bm{Z}(t)\in\Omega$ for all $t\in[0,1]$.
For any direction $\Delta \bm{Z}^1$, the fundamental theorem of calculus applied
along the segment yields
\begin{align*}
\nabla f(\bm{Z})[\Delta\bm{Z}^1] - \nabla f(\bm{Z}^\prime)[\Delta\bm{Z}^1]
 =
\int_0^1
\nabla^2 f(\bm{Z}(t))[\Delta\bm{Z}^1,\,\bm{Z}-\bm{Z}^\prime]\,\textrm{d}t .
\end{align*}
Taking the supremum over $\|\Delta\bm{Z}^1\|\le 1$ and using the definition of the dual norm,
\begin{align*}
\left\|\nabla f(\bm{Z})-\nabla f(\bm{Z}^\prime)\right\|_{*}
& =
\sup_{\left\|\Delta\bm{Z}^1\right\|\le 1}
\left|\nabla f(\bm{Z})[\Delta\bm{Z}^1] - \nabla f(\bm{Z}^\prime)[\Delta\bm{Z}^1]\right|\\
& \le
\int_0^1
\sup_{\left\|\Delta\bm{Z}^1\right\|\le 1}
\left|\nabla^2 f(\bm{Z}(t))[\Delta\bm{Z}^1,\,\bm{Z}-\bm{Z}^\prime]\right|\,\textrm{d}t .
\end{align*}
If $\bm{Z}=\bm{Z}^\prime$, the claim is trivial. Otherwise let
$
\Delta \bm Z^2:=\frac{\bm Z-\bm Z'}{\|\bm Z-\bm Z'\|},
$ so that $\|\Delta \bm Z^2\|=1$.  Then for all $t\in[0,1]$,
\begin{align*}
\sup_{\left\|\Delta\bm{Z}^1\right\|\le 1}
\left|\nabla^2 f(\bm{Z}(t))[\Delta\bm{Z}^1,\,\bm{Z}-\bm{Z}^\prime]\right|
& =
\left\|\bm{Z}-\bm{Z}^\prime\right\|
\sup_{\left\|\Delta\bm{Z}^1\right\|\le 1}
\left|\nabla^2 f(\bm{Z}(t))[\Delta\bm{Z}^1,\,\Delta\bm{Z}^2]\right|\\
& \le
L\,\left\|\bm{Z}-\bm{Z}^\prime\right\|.
\end{align*}
Therefore,
\[
\left\|\nabla f(\bm{Z})-\nabla f(\bm{Z}^\prime)\right\|_{*}
\le
\int_0^1 L\,\left\|\bm{Z}-\bm{Z}^\prime\right\|\,\textrm{d}t
=
L\,\left\|\bm{Z}-\bm{Z}^\prime\right\|.
\]
     
\end{proof}

\begin{lemma}
\label{lemma:stnorm}
Let $\x \in \mathbb{R}^n$. Then for any $p,q \ge 1$,
\[
\left\|\x\right\|_{(p,\mathrm{mean})}
\le
n^{\max\!\left(0,\,\tfrac{1}{q}-\tfrac{1}{p}\right)}
\left\|\x\right\|_{(q,\mathrm{mean})}.
\]
\end{lemma}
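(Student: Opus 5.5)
The plan is to split into the two cases $p \le q$ and $p > q$, since the exponent $\max(0,\tfrac1q-\tfrac1p)$ vanishes exactly when $p\le q$.

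\emph{Case $p\le q$.} Here the exponent is $0$, so the claim reads $\|\x\|_{(p,\mathrm{mean})}\le \|\x\|_{(q,\mathrm{mean})}$. This is exactly Fact~\ref{fact:pmean-monotone}, the generalized mean inequality; the equality case $p=q$ is trivial. Nothing further is needed.

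\emph{Case $p>q$.} Now the exponent is $\tfrac1q-\tfrac1p>0$, and I will pass through the unnormalized norms. By definition \eqref{eq:mean-norm}, $\|\x\|_{(p,\mathrm{mean})}=n^{-1/p}\|\x\|_p$ and $\|\x\|_{(q,\mathrm{mean})}=n^{-1/q}\|\x\|_q$. The standard nesting of $\ell_r$ norms gives $\|\x\|_p\le\|\x\|_q$ for $p\ge q$. To justify this, normalize so that $\|\x\|_q=1$. Then every $|x_i|\le 1$, hence $|x_i|^p\le |x_i|^q$, and summing gives $\|\x\|_p^p\le 1$. Combining these,
\[
\|\x\|_{(p,\mathrm{mean})}=n^{-1/p}\|\x\|_p\le n^{-1/p}\|\x\|_q=n^{\frac1q-\frac1p}\,\|\x\|_{(q,\mathrm{mean})},
\]
which is the claimed bound.

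For $p=\infty$ or $q=\infty$, use the convention $\|\cdot\|_{(\infty,\mathrm{mean})}=\|\cdot\|_\infty$ with $1/\infty=0$. Both cases then go through verbatim, for example $\|\x\|_\infty\le n^{1/q}\|\x\|_{(q,\mathrm{mean})}$.

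There is no real obstacle in this argument. The only point requiring care is the direction of the $\ell_p$ nesting and keeping the sign of the exponent straight. The normalization argument above settles that cleanly, and I would include it explicitly rather than cite it.
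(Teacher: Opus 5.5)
Your proposal is correct and follows the paper's proof essentially verbatim: you use the same split into $p\le q$ and $p>q$, and you handle the second case by $\|\x\|_p\le\|\x\|_q$ together with the rescalings $n^{-1/p}$ and $n^{-1/q}$. The only cosmetic differences are that in the first case you invoke Fact~\ref{fact:pmean-monotone} (the power-mean inequality) where the paper uses the equivalent H\"older comparison $\|\x\|_p\le n^{1/p-1/q}\|\x\|_q$, and that you prove the $\ell_r$ nesting explicitly rather than citing it.
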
 

\begin{proof}{Proof of Lemma~\ref{lemma:stnorm}.}
Recall that $\|\x\|_{(p,\mathrm{mean})}=n^{-\tfrac{1}{p}}\|\x\|_p$.
We consider two cases.

\medskip
\noindent
\textbf{Case 1: $p \le q$.}
It is well known that $\|\x\|_p \le n^{\,\tfrac{1}{p}-\tfrac{1}{q}}\|\x\|_q$.
Hence,
\[
\left\|\x\right\|_{(p,\mathrm{mean})}
=
n^{-\tfrac{1}{p}}\left\|\x\right\|_p
\le
n^{-\tfrac{1}{p}}\,n^{\,\tfrac{1}{p}-\tfrac{1}{q}}\left\|\x\right\|_q
=
n^{-1/q}\left\|\x\right\|_q
=
\left\|\x\right\|_{(q,\mathrm{mean})}.
\]

\medskip
\noindent
\textbf{Case 2: $p > q$.}
In this case, $\|\x\|_p \le \|\x\|_q$, and therefore
\[
\left\|\x\right\|_{(p,\mathrm{mean})}
=
n^{-\tfrac{1}{p}}\left\|\x\right\|_p
\le
n^{-\tfrac{1}{p}}\left\|\x\right\|_q
=
n^{\,\tfrac{1}{q}-\tfrac{1}{p}}\left\|\x\right\|_{(q,\mathrm{mean})}.
\]
Combining the two cases yields the desired result.
 
\end{proof}  

\begin{lemma}
\label{lem:montomeannorm}
Let $\x,\y \in \mathbb{R}^n$. For any $p,q \ge 1$, we have
\[
\left\|\x \odot \y\right\|_{(p,\mathrm{mean})}
\le
n^{\,\max \left(0,\,\tfrac{2}{q}-\tfrac{1}{p}\right)}\,
\left\|\x\right\|_{(q,\mathrm{mean})}\,
\left\|\y\right\|_{(q,\mathrm{mean})}.
\]
\end{lemma}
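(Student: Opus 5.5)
The plan is to reduce the bound to a single Hölder/Cauchy--Schwarz estimate on the raw $\ell_r$ norms and then convert back to mean-normalized norms, tracking the powers of $n$. Writing $\|\cdot\|_{(r,\mathrm{mean})}=n^{-1/r}\|\cdot\|_r$, the claim is equivalent to
\[
n^{-1/p}\left\|\x\odot\y\right\|_p \le n^{\max(0,\frac{2}{q}-\frac{1}{p})}\, n^{-2/q}\left\|\x\right\|_q\left\|\y\right\|_q .
\]
Two cases arise, according to whether $2p\le q$ or $2p>q$. This split is natural because the product of two vectors controlled in $\ell_q$ is naturally controlled in $\ell_{q/2}$.

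\textbf{Case $q\ge 2p$.} The exponent is $0$, so we must show $\|\x\odot\y\|_{(p,\mathrm{mean})}\le \|\x\|_{(q,\mathrm{mean})}\|\y\|_{(q,\mathrm{mean})}$. First apply Fact~\ref{fact:pmean-monotone} (the power-mean inequality) to get $\|\x\odot\y\|_{(p,\mathrm{mean})}\le\|\x\odot\y\|_{(q/2,\mathrm{mean})}$, which is valid since $p\le q/2$ (if $q/2<1$ this case cannot occur, as $p\ge1$). Then apply Cauchy--Schwarz for averages to $|x_i|^{q/2}|y_i|^{q/2}$:
\[
\frac1n\sum_i |x_iy_i|^{q/2}\le\Big(\frac1n\sum_i|x_i|^q\Big)^{1/2}\Big(\frac1n\sum_i|y_i|^q\Big)^{1/2}.
\]
Raising to the power $2/q$ gives exactly $\|\x\|_{(q,\mathrm{mean})}\|\y\|_{(q,\mathrm{mean})}$. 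The case $q=\infty$ is covered directly by $|x_iy_i|\le\|\x\|_\infty\|\y\|_\infty$.

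\textbf{Case $q<2p$.} Here $q/2<p$, so in raw norms we use the monotonicity $\|\cdot\|_p\le\|\cdot\|_{q/2}$ when $q/2\ge1$. When $q/2<1$ we use $\|\cdot\|_p\le\|\cdot\|_1$ together with the elementwise bound $|x_iy_i|^{1}$ summed; it is cleaner to use the quasi-norm statement that $\ell_r$ sums are monotone decreasing in $r$ even for $r<1$, since $(\sum a_i^r)^{1/r}$ is nonincreasing in $r>0$. This gives $\|\x\odot\y\|_p\le\|\x\odot\y\|_{q/2}\le\|\x\|_q\|\y\|_q$, where the last step is the same Cauchy--Schwarz applied to sums rather than averages. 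Converting, $n^{-1/p}\|\x\odot\y\|_p\le n^{2/q-1/p}\cdot n^{-2/q}\|\x\|_q\|\y\|_q$, which is the claim with exponent $\frac2q-\frac1p>0$.

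The only delicate point is the sub-unit exponent $q/2<1$ (i.e.\ $q<2$), where $\|\cdot\|_{q/2}$ is not a norm. I would handle it by invoking the monotonicity of $r\mapsto(\sum_i a_i^r)^{1/r}$ on $(0,\infty)$, which follows from $\sum a_i^{s}\le(\sum a_i^r)^{s/r}$ for $s\ge r$ after normalizing so that $\max_i a_i/(\sum a_i^r)^{1/r}\le1$. Everything else is bookkeeping of the factors $n^{-1/r}$.
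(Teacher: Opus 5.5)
Your proof is correct. It uses the same two ingredients as the paper, Cauchy--Schwarz and $\ell_r$/power-mean monotonicity, but applies them in the opposite order.

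The paper applies Cauchy--Schwarz at exponent $p$ first, which gives $\|\x\odot\y\|_{(p,\mathrm{mean})}\le\|\x\|_{(2p,\mathrm{mean})}\|\y\|_{(2p,\mathrm{mean})}$. It then invokes Lemma~\ref{lemma:stnorm} on each factor to pass from $2p$ to $q$. The identity $2\max(0,\frac1q-\frac1{2p})=\max(0,\frac2q-\frac1p)$ covers both regimes in one line, with no explicit case split.

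You instead move the \emph{product} from exponent $p$ to $q/2$ first, and only then factor it by Cauchy--Schwarz at exponent $q/2$. The cost is that for $q<2$ the intermediate exponent $q/2$ falls below $1$. You then need monotonicity of $r\mapsto(\sum_i a_i^r)^{1/r}$ for quasi-norms, which you justify correctly by normalizing so that every $a_i\le 1$. The paper's ordering never uses an exponent below $1$, since $2p\ge 2$, and it reuses an already-proved lemma.

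The benefit of your ordering is that it explains why $q=2p$ is the threshold. A product of two $\ell_q$-controlled vectors is naturally $\ell_{q/2}$-controlled, so the width factor appears exactly when $p$ exceeds $q/2$. Both arguments give the same constant.

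Your parenthetical alternative via $\|\cdot\|_p\le\|\cdot\|_1$ is incomplete as written, but you don't need it. If you wanted it, the chain $\|\x\odot\y\|_p\le\|\x\odot\y\|_1\le\|\x\|_2\|\y\|_2\le\|\x\|_q\|\y\|_q$ for $q\le 2$ would do.
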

\begin{proof}{Proof of Lemma~\ref{lem:montomeannorm}.}
By definition,
\[
\left\|\x \odot \y\right\|_{(p,\mathrm{mean})}
=
\Bigl(\frac{1}{n}\sum_{i=1}^n |x_i y_i|^p\Bigr)^{1/p}.
\]
Applying the Cauchy--Schwarz inequality yields
\[
\frac{1}{n}\sum_{i=1}^n |x_i y_i|^p
=
\frac{1}{n}\sum_{i=1}^n |x_i|^p |y_i|^p
\le
\Bigl(\frac{1}{n}\sum_{i=1}^n |x_i|^{2p}\Bigr)^{1/2}
\Bigl(\frac{1}{n}\sum_{i=1}^n |y_i|^{2p}\Bigr)^{1/2}.
\]
Taking the $1/p$ power on both sides, we obtain
\begin{equation}
\label{eq:mean_norm}
\left\|\x \odot \y\right\|_{(p,\mathrm{mean})}
\le
\left\|\x\right\|_{(2p,\mathrm{mean})}\,\left\|\y\right\|_{(2p,\mathrm{mean})}.
\end{equation}
By Lemma~\ref{lemma:stnorm} with $(p,q)$ replaced by $(2p,q)$, we have
\[
\left\|\x\right\|_{(2p,\mathrm{mean})}
\le
n^{\max\left(0,\,\frac{1}{q}-\frac{1}{2p}\right)}
\left\|\x\right\|_{(q,\mathrm{mean})},
\quad \text{and} \quad 
\left\|\y\right\|_{(2p,\mathrm{mean})}
\le
n^{\max\!\left(0,\,\frac{1}{q}-\frac{1}{2p}\right)}
\left\|\y\right\|_{(q,\mathrm{mean})}.
\]
Combining with \eqref{eq:mean_norm} gives
\[
\left\|\x \odot \y\right\|_{(p,\mathrm{mean})}
\le
n^{\,2\max\!\left(0,\,\frac{1}{q}-\frac{1}{2p}\right)}
\left\|\x\right\|_{(q,\mathrm{mean})}\,\left\|\y\right\|_{(q,\mathrm{mean})},
\]
which proves the claim.
  
\end{proof}

\begin{lemma}
\label{lem:mean-holder}
Let $\x,\y\in \mathbb{R}^n$. Then for any $p,q \ge 1$,
\begin{equation}
\label{eq:mean-ip-bound}
|\langle \x,\y\rangle|
\;\le\;
n^{\max\left\{1,\tfrac{1}{p}+\tfrac{1}{q}\right \}}\;
\|\x\|_{(p,\textrm{mean})}\;
\|\y\|_{(q,\textrm{mean})}.
\end{equation}
\end{lemma}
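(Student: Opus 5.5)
We will reduce everything to the ordinary H\"older inequality, using the identity $\|\x\|_{(p,\mathrm{mean})}=n^{-1/p}\|\x\|_p$ from \eqref{eq:mean-norm} to convert between mean-normalized and standard $\ell_p$ norms. Let $p^\ast\in[1,\infty]$ denote the conjugate exponent of $p$, so $\tfrac1p+\tfrac1{p^\ast}=1$, with $p^\ast=\infty$ when $p=1$. H\"older gives
\[
|\langle\x,\y\rangle|\le\|\x\|_p\,\|\y\|_{p^\ast}.
\]
The argument then splits according to how $p^\ast$ compares with $q$. This is exactly the dichotomy $\tfrac1p+\tfrac1q\ge 1$ versus $\tfrac1p+\tfrac1q<1$ that appears in the exponent $\max\{1,\tfrac1p+\tfrac1q\}$.

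\textbf{Case $\tfrac1p+\tfrac1q\ge1$, i.e.\ $q\le p^\ast$.} Since $\ell_r$ norms are nonincreasing in $r$, we have $\|\y\|_{p^\ast}\le\|\y\|_q$. Hence
\[
|\langle\x,\y\rangle|\le\|\x\|_p\|\y\|_q
= n^{1/p}\|\x\|_{(p,\mathrm{mean})}\cdot n^{1/q}\|\y\|_{(q,\mathrm{mean})},
\]
which gives the exponent $\tfrac1p+\tfrac1q=\max\{1,\tfrac1p+\tfrac1q\}$.

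\textbf{Case $\tfrac1p+\tfrac1q<1$, i.e.\ $p^\ast<q$.} Here we instead use the reverse comparison $\|\y\|_{p^\ast}\le n^{1/p^\ast-1/q}\|\y\|_q$, which is the standard inequality already invoked in Case~1 of the proof of Lemma~\ref{lemma:stnorm}. Multiplying out the powers of $n$ gives
\[
n^{1/p}\cdot n^{1-1/p-1/q}\cdot n^{1/q}=n,
\]
so the bound becomes $n\,\|\x\|_{(p,\mathrm{mean})}\|\y\|_{(q,\mathrm{mean})}$, and $n=n^{\max\{1,\frac1p+\frac1q\}}$ in this case.

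An equivalent, more intrinsic route for the second case is available. Write $\langle\x,\y\rangle=n\cdot\frac1n\sum_i x_iy_i$ and apply H\"older for averages to get the bound $n\|\x\|_{(p,\mathrm{mean})}\|\y\|_{(p^\ast,\mathrm{mean})}$. Then Fact~\ref{fact:pmean-monotone} with $p^\ast\le q$ finishes the argument.

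There is no real obstacle. The only care needed is bookkeeping at the endpoints. When $p=1$, we have $p^\ast=\infty$ and always land in the first case. When $q=\infty$, the mean norm equals $\|\cdot\|_\infty$ with $1/q=0$, and the same computations go through unchanged.
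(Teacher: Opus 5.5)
Your proof is correct and is essentially the paper's argument. Both apply H\"older with the conjugate exponent $p^\ast$, then compare $\|\y\|_{p^\ast}$ with $\|\y\|_q$, and the two regimes of that comparison give the two branches of $\max\{1,\tfrac1p+\tfrac1q\}$. The paper does not split into cases explicitly: it uses a single application of Lemma~\ref{lemma:stnorm} (your ``more intrinsic'' alternative) to get the exponent $1+(\tfrac1q-\tfrac1{p^\ast})_+$ directly.
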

\begin{proof}{Proof of Lemma~\ref{lem:mean-holder}.}
Let $p^*$ be the conjugate exponent of $p$, i.e., $\tfrac{1}{p}+\tfrac{1}{p^*}=1$. By H\"older's inequality,
\[
|\langle \x,\y\rangle|
\le
\|\x\|_{p}\,\|\y\|_{p'}.
\]
Moreover, by the definition of $\pmean$ norm, i.e., $\|\x\|_{p}=n^{1/p}\|\x\|_{(p,\mathrm{mean})}$, for any $\x\in\mathbb{R}^n$ and $p\ge 1$, we obtain
\[
|\langle \x,\y\rangle|
\le
n\,\|\x\|_{(p,\mathrm{mean})}\,\|\y\|_{(p^*,\mathrm{mean})}.
\]
Thanks to Lemma~\ref{lemma:stnorm}, we have 
\[
\|\y\|_{(p^*,\mathrm{mean})}
\le
n^{\left(\frac1q-\frac1{p^*}\right)_+}\|\y\|_{(q,\mathrm{mean})}.
\]
Putting them together yields 
\[
|\langle \x,\y\rangle|
\le
n^{\,1+\left(\frac1q-\frac1{p^*}\right)_+}
\|\x\|_{(p,\mathrm{mean})}\,
\|\y\|_{(q,\mathrm{mean})}.
\]
Finally, since $\tfrac{1}{p^*}=1-\tfrac{1}{p}$, we have
$
1+\left(\frac1q-\frac1{p'}\right)_+
=
\max\left\{1,\frac1p+\frac1q\right\}$, which yields the desired result. 
   
\end{proof}

% CASE 2: BiBTeX used to generate mypaper.bbl (to be further fine tuned)
%\input{mypaper.bbl} % outcomment this line in Case 2

%If you don't use BiBTex, you can manually itemize references as shown below.

% \bibliographystyle{nonumber}

% \begin{thebibliography}{3}
% \providecommand{\natexlab}[1]{#1}
% \providecommand{\url}[1]{\textrm{#1}}
% \providecommand{\urlprefix}{URL }

% \bibitem[{Smith(2005)}]{smith2005}
% Smith J (2005) Optimal resource allocation in humanitarian logistics.
%   \emph{Journal of Operations Research} 30(2):123--135.
  
% \bibitem[{Jones(2010)}]{jones2010}
% Jones S (2010) Stochastic programming models for humanitarian logistics.
%   \emph{INFORMS Mathematics of Operations Research} 35(4):567--580.

% \bibitem[{Brown(2015)}]{brown2015}
% Brown D (2015) \emph{Introduction to Stochastic Programming} (Springer).

% \end{thebibliography}

%%%%%%%%%%%%%%%%%

\end{document}